\newcommand{\best}[1]{\cellcolor{blue!8}\textbf{#1}}
\newcolumntype{Y}{>{\raggedright\arraybackslash}X}
\theoremstyle{plain}
\newtheorem{theorem}{Theorem}[section]
\theoremstyle{definition}
\theoremstyle{remark}
\title{HeRo-Q: A General Framework for Stable Low-Bit Quantization via Hessian Conditioning}
\author{
Jinhao Zhang$^{\spadesuit}$,
Yunquan Zhang$^{\clubsuit}$,
Daning Cheng$^{\clubsuit}$,
Boyang Zhang$^{\clubsuit, \dagger}$,
Jun Sun$^{\heartsuit}$,
Zicheng Yan$^{\diamondsuit}$ \\
\\
$^{\spadesuit}$Beijing University of Posts and Telecommunications \\
$^{\clubsuit}$Institute of Computing Technology, Chinese Academy of Sciences \\
$^{\diamondsuit}$University of Science and Technology of China \\
$^{\heartsuit}$Zhejiang Lab \\
$^{\dagger}$Peng Cheng Laboratory \\
}
\begin{document}
\maketitle
\begin{abstract}
Post-Training Quantization (PTQ), a mainstream model compression technique, often leads to the paradoxical “low-error, high-loss” phenomenon because it focuses solely on minimizing quantization error. The root cause lies in the Hessian matrix of the LLM loss landscape: a few high-curvature directions are extremely sensitive to perturbations. To address this, we propose the Hessian Robust Quantization (HeRo-Q) algorithm, which applies a lightweight, learnable rotation-compression matrix to the weight space before quantization. HeRo-Q quantizes weights in a Hessian-conditioned coordinate system, suppressing sharp effective-curvature directions and improving robustness to low-bit noise. HeRo-Q requires no architectural modifications, incurs negligible computational overhead, and integrates seamlessly into existing PTQ pipelines. Experiments on Llama and Qwen models show that HeRo-Q consistently outperforms state-of-the-art methods—including GPTQ, AWQ, and SpinQuant—not only achieving superior performance under standard W4A8 settings, but also excelling in the highly challenging W3A16 ultra-low-bit regime, where it boosts GSM8K accuracy on Llama-3.1-8B to 70.15\% and effectively avoids the logical collapse commonly seen in aggressive quantization.
\end{abstract}

%%%
\section{Introduction}
%In recent years, Large Language Models (LLMs) have achieved remarkable breakthroughs in natural language processing, code generation, and multimodal reasoning, owing to their strong generalization and contextual understanding capabilities. However, their enormous parameter counts—ranging from billions to hundreds of billions—lead to high memory consumption, computational costs, and energy usage, severely limiting deployment on edge devices, mobile terminals, or in low-latency scenarios. To address this, model quantization has emerged as a highly effective compression technique: by mapping floating-point weights and activations to low-bit integers (e.g., INT8, INT4), it significantly reduces storage and computational demands while striving to preserve model performance.

Post-training quantization (PTQ) has become a standard approach for reducing
the serving cost of large language models without requiring
full retraining \cite{AWQ,GPTQ,SmoothQuant,spinquant}. Current mainstream PTQ methods primarily aim to minimize quantization error. They typically employ symmetric or asymmetric linear mapping schemes, use a calibration dataset to estimate dynamic ranges, and select optimal quantization parameters by minimizing statistical discrepancies between pre-quantized and post-quantized weights or activations. 
 
 Although these strategies work well for traditional models like those in computer vision, they often encounter a “low-error, high-loss” paradox when applied to modern LLMs \cite{DBLP:journals/corr/abs-2502-15802,DBLP:conf/nips/ZhangCDL0L24,DBLP:conf/icml/0001GC00XX25}:  even with extremely low overall quantization error, downstream task performance can still degrade substantially, as shown in Figure \ref{fig:low_error_high_loss}.  This suggests the existence of highly sensitive parameters that, despite contributing little to overall noise, exert a disproportionately large impact on model behavior \cite{DBLP:conf/icml/0001GC00XX25}.

A key explanation for this degradation lies in the geometric structure of the LLM loss landscape—particularly the ill-conditioned nature of its Hessian matrix \cite{DBLP:journals/corr/abs-2304-01089,DBLP:conf/nips/HeNPSH24,DBLP:conf/iclr/An000W25}. Specifically, the Hessian spectrum of LLMs is highly non-uniform: a few directions  exhibit very large eigenvalues (i.e., high curvature), while most directions have near-zero eigenvalues (i.e., flat regions). This strong anisotropy makes the model exquisitely sensitive to tiny perturbations along high-curvature directions. Because low-bit rounding errors generally contain components across many coordinate directions, even a small overall error norm can lead to substantial loss increase when a non-negligible component is projected onto high-curvature directions, as shown in Figure \ref{fig:loss_landscape_geometry}.
\begin{mdframed}[leftline=true, rightline=false, topline=false, bottomline=false, linewidth=2.5pt, linecolor=magenta!50, backgroundcolor=gray!10]
Therefore, the central challenge for stable low-bit quantization is to construct a coordinate system in which rounding errors are less amplified by high-curvature directions.
\end{mdframed}
In this paper, we propose Hessian Robust Quantization (HeRo-Q), a curvature-aware PTQ framework. Instead of directly minimizing quantization error, HeRo-Q performs quantization in a Hessian-conditioned coordinate system. Diagonal smoothing suppresses sharp effective-curvature directions, while learnable orthogonal rotation redistributes quantization noise away from sensitive directions. By decoupling these geometric roles, HeRo-Q effectively lowers the model's worst-case sensitivity to quantization perturbations. Importantly, HeRo-Q requires no changes to model architecture or activation functions—it introduces only a lightweight, learnable linear transformation on weights, making it fully compatible with existing PTQ pipelines for fast and effective quantization, shown in Figure \ref{fig:heroq_method}.
\begin{figure*}[t]

    \centering
    \begin{subfigure}[b]{0.32\textwidth}
        \centering
        \includegraphics[width=\linewidth]{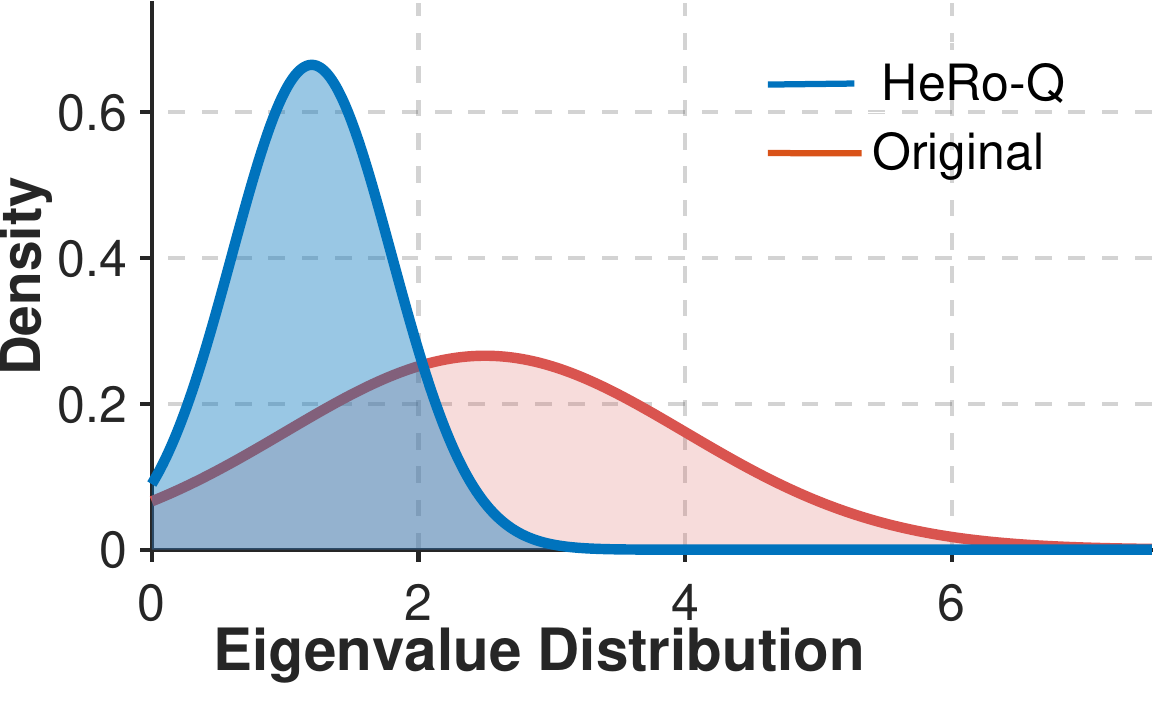}
        \caption{Effective curvature reshaping}
        \label{fig:heroq_method}
    \end{subfigure}
    \hfill
    \begin{subfigure}[b]{0.25\textwidth}
        \centering
        \includegraphics[width=\linewidth]{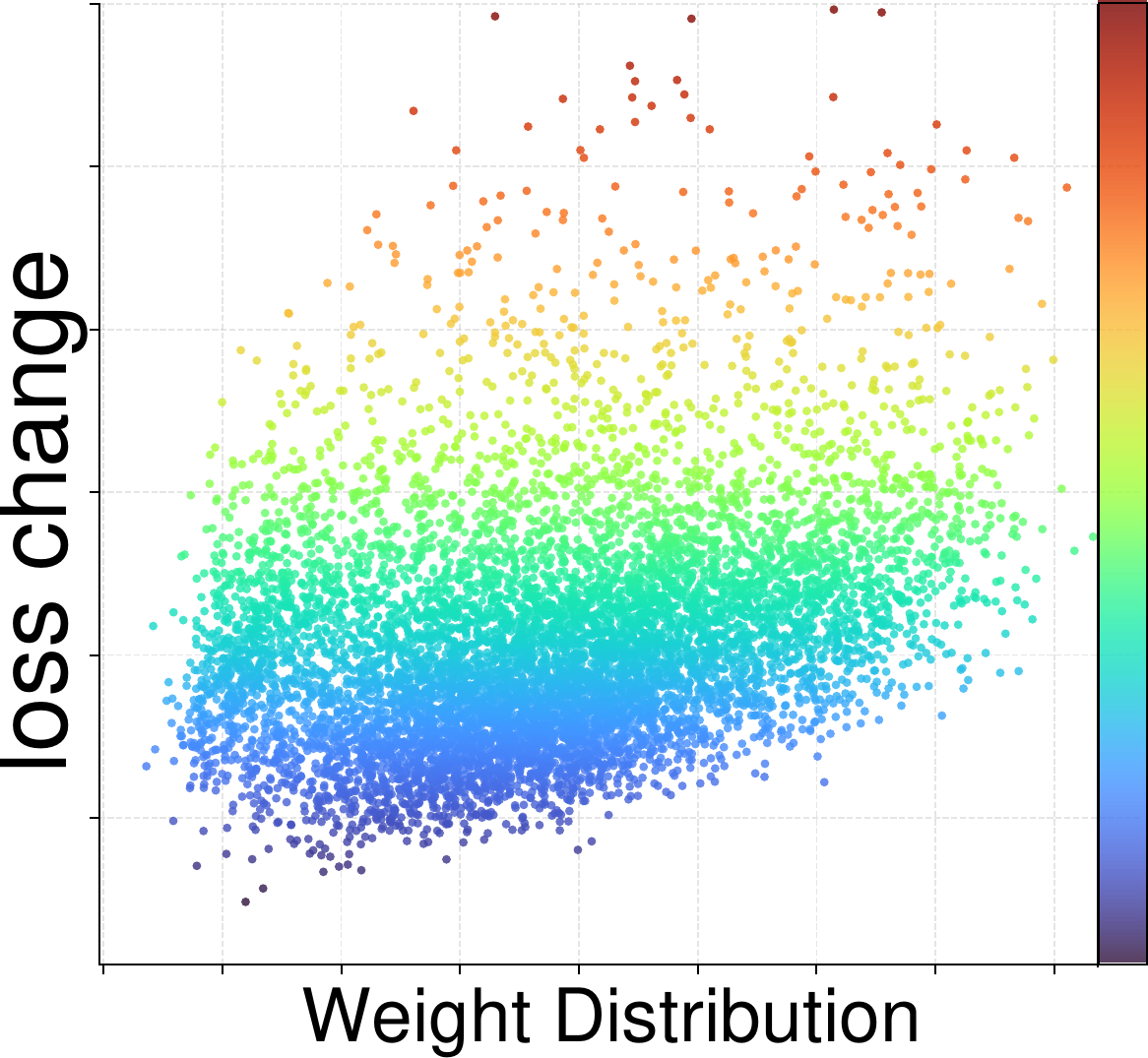}
        \caption{Low-error, high-loss}
        \label{fig:low_error_high_loss}
    \end{subfigure}
    \hfill
    \begin{subfigure}[b]{0.32\textwidth}
        \centering
        \includegraphics[width=\linewidth]{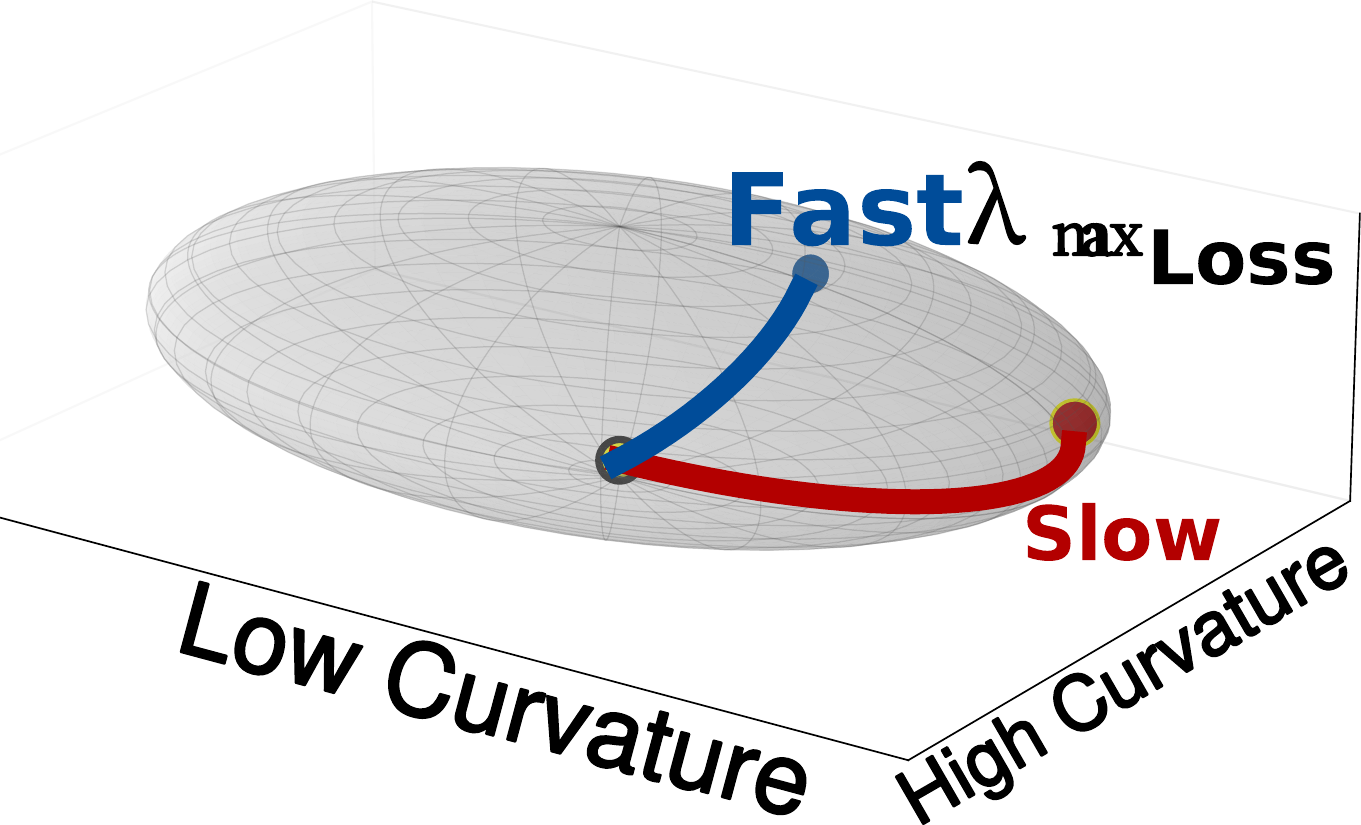}
        \caption{Local geometry of the loss landscape}
        \label{fig:loss_landscape_geometry}
    \end{subfigure}

    \caption{
    (a) HeRo-Q improves quantization robustness by reshaping the effective Hessian geometry in the quantization coordinate system, rather than directly modifying the original loss landscape.
    (b) Small quantization error does not necessarily imply small loss degradation: if the perturbation is aligned with high-curvature directions, the loss can increase substantially.
    \textbf{(c)} The local loss landscape around a converged solution can be approximated by a hyper-ellipsoid. Perturbations along the short axis (high curvature, large Hessian eigenvalue) induce much larger loss increases than perturbations of the same magnitude along the long axis (low curvature).}
    \label{fig:motivation_overview}
\end{figure*}

Our contributions are as follows:
(1) We establish a theoretical connection between quantization robustness and curvature-weighted loss from the perspective of loss landscape geometry, and propose the HeRo-Q algorithm accordingly.
(2) HeRo-Q applies a lightweight learnable transformation before quantization, incurring negligible additional inference overhead after offline fusion;
(3) In experiments, HeRo-Q achieves the best or near-best performance across most evaluated settings compared with state-of-the-art PTQ methods (e.g., GPTQ, AWQ, SpinQuant) on both Llama and Qwen model families. 

%standard W4A8 settings, and demonstrates exceptional robustness in the extremely challenging W3A16 ultra-low-bit regime—for example, boosting GSM8K accuracy on Llama-3.1-8B to 70.15\%, far surpassing prior methods and effectively avoiding the logical collapse commonly seen in aggressive quantization.% This strongly validates the efficacy of HeRo-Q’s core mechanism: suppressing high-frequency quantization noise by optimizing an upper bound on the spectral radius.

%%%%
\section{Related Work}
\begin{figure*}[t]
    \centering
    \includegraphics[width=.95\linewidth]{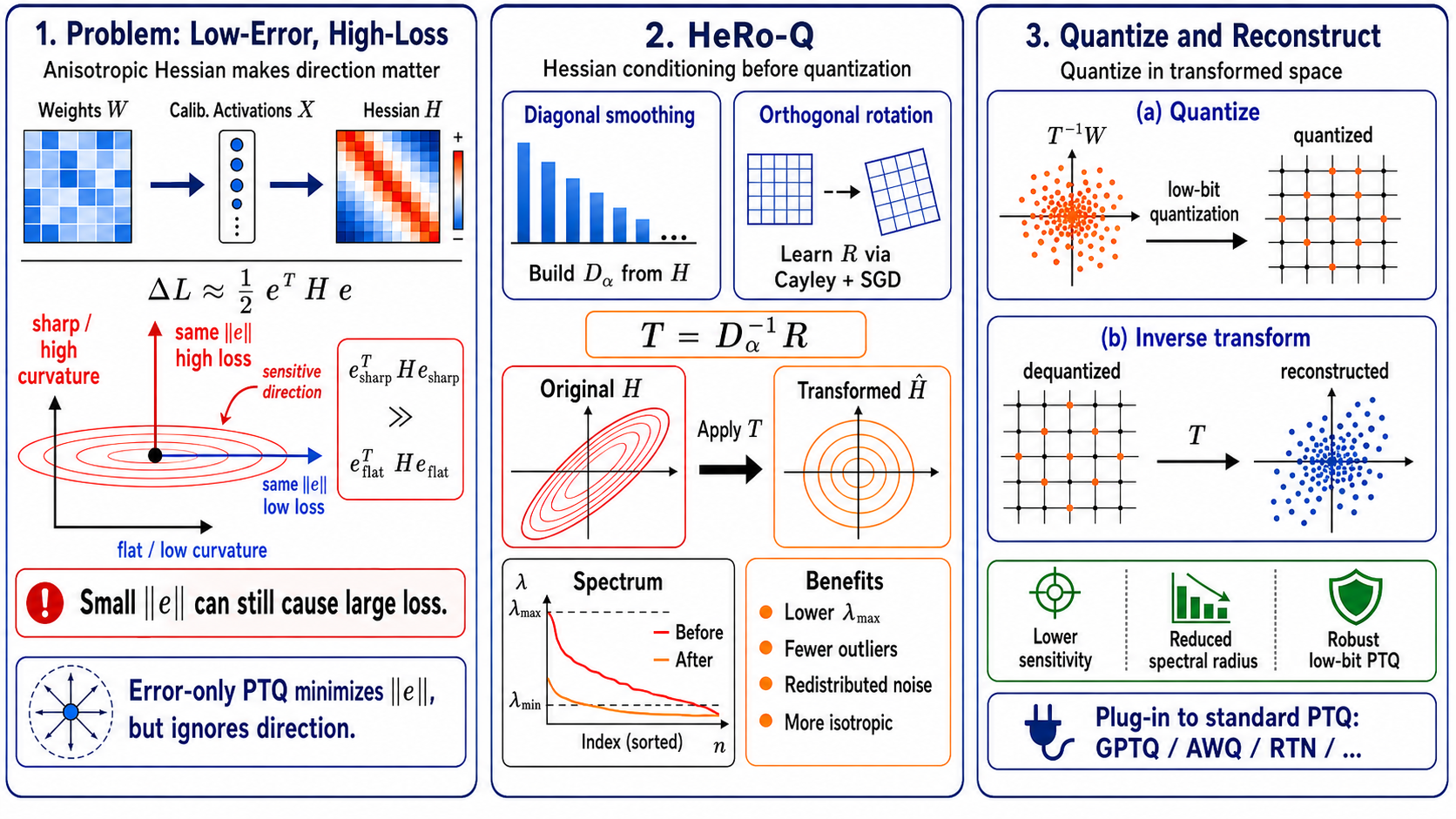}
    \caption{
    Robust quantization via Hessian-conditioned transformation.
    HeRo-Q first estimates the layer-wise curvature structure and identifies directions that are highly sensitive to quantization noise.
    It then searches over smoothing parameters \(\alpha\) to construct a diagonal scaling matrix \(\mathbf{D}_\alpha\), and learns an orthogonal rotation matrix \(\mathbf{R}\) via the Cayley transform.
    Together, they form a transformation \(\mathbf{T}=\mathbf{D}_\alpha^{-1}\mathbf{R}\), which defines a Hessian-conditioned coordinate system for quantization.
    Quantization is performed in this transformed space, and the quantized weights are mapped back to the original weight coordinate for inference.
    }
    \label{fig:placeholder}
\end{figure*}

%Beneath these algorithmic innovations lies a bedrock of foundational techniques that reshape post-training quantization. AdaRound \citep{DBLP:conf/icml/NagelABLB20} rethinks rounding as a learnable reconstruction task; GPTQ \citep{GPTQ} and AWQ \citep{AWQ} demonstrate that preserving salient weights or activating outlier-aware scaling can rescue ultra-low-bit performance; SmoothQuant \citep{SmoothQuant} tames activation outliers through input-aware rescaling. The latest wave pushes further: QuIP \citep{DBLP:conf/nips/CheeCKS23} injects incoherence to stabilize extreme quantization; OmniQuant \citep{DBLP:conf/iclr/ShaoC0XZLZ00024} makes clipping thresholds trainable; and SpinQuant \citep{spinquant} preprocesses weights with optimized rotation matrices to suppress outliers before quantization even begins—turning geometry into a weapon against information loss.

%%%%%%
%新版本
%%%%%%
Representative PTQ methods mainly focus on reducing the numerical discrepancy between full-precision and quantized models. AdaRound~\citep{DBLP:conf/icml/NagelABLB20} learns adaptive rounding decisions by minimizing layer-wise reconstruction error, while GPTQ~\citep{GPTQ} improves weight-only quantization through second-order reconstruction. AWQ~\citep{AWQ} preserves salient weights via activation-aware scaling, OmniQuant~\citep{DBLP:conf/iclr/ShaoC0XZLZ00024} jointly optimizes clipping thresholds and equivalent transformations, and QuIP~\citep{DBLP:conf/nips/CheeCKS23} introduces incoherence to stabilize extremely low-bit quantization. Recent studies further explore alternative quantization formulations and adaptive compression strategies~\citep{zhang2024fp,zhanglod,zhang2024compression,zhang2025rethinking,zhang2025moqe,zhang2026calmckaguidedadaptivelayerwise}.

Transformation-based PTQ improves quantization by changing the representation of weights or activations before quantization. SmoothQuant~\citep{SmoothQuant} mitigates activation outliers through channel-wise scaling, while SpinQuant~\citep{spinquant} learns orthogonal rotations to obtain more quantization-friendly coordinates. FlatQuant \cite{DBLP:conf/icml/SunLBBZLHY0Y0LY25} and OSTQuant \cite{DBLP:conf/iclr/0010CYCXYXY0Z25} further improve distribution flatness or quantization-space utilization through affine, orthogonal, or scaling transformations. These methods show that proper transformations can improve low-bit robustness, but they mainly seek distribution flatness or coordinate quantizability.

Mixed-precision and sensitivity-aware quantization methods allocate bit-widths according to accuracy, hardware cost, or layer sensitivity. AutoQ~\citep{lou2019autoq} and HAQ~\citep{wang2019haq} formulate bit-width assignment as a search problem, while other methods estimate sensitivity using KL divergence~\citep{cai2020zeroq,xu2024ptmq}, Hessian-based curvature~\citep{dong2020hawq,yao2021hawq}, Gauss--Newton approximations~\citep{chen2021towards}, or learned importance scores~\citep{tang2022mixed}. These methods mainly decide how much precision to allocate to different layers or parameters, whereas HeRo-Q focuses on designing a Hessian-conditioned transformation for fixed low-bit quantization.

The key distinction of HeRo-Q is not the mere use of smoothing or rotation, but the curvature-aware objective that guides their design. Existing PTQ methods mainly reduce quantization error or seek quantization-friendly coordinates. HeRo-Q instead targets the curvature-weighted loss consequence of quantization error, explicitly considering how perturbations are amplified by high-curvature Hessian directions. This reformulates transformation-based PTQ as a curvature-sensitive robustness problem.
\section{Methodology}

\subsection{Notation}
In this section, we list all notations in our paper, including the Appendix.
\begin{mdframed}[
    leftline=true,
    rightline=false,
    topline=false,
    bottomline=false,
    linewidth=2.5pt,
    linecolor=magenta!50,
    backgroundcolor=gray!10,
    innerleftmargin=8pt,
    innerrightmargin=8pt,
    innertopmargin=6pt,
    innerbottommargin=6pt
]
\small

\noindent
\begin{minipage}[t]{0.47\linewidth}
\vspace{0pt}
\begin{itemize}[leftmargin=1.2em, itemsep=1pt, topsep=0pt, parsep=0pt]
    \item $\mathbf{H}$: Hessian in original coordinate.
    \item $\tilde{\mathbf{H}}$: Effective Hessian in quantization coordinate.
    \item $\mathbf{T}$: Quantization-coordinate transform.
    \item $\lambda_{\max}$: Largest eigenvalue.
    \item $\alpha$: Smoothing hyperparameter.
    \item $\Delta \mathcal{L}$: Quantization-induced loss degradation.
\end{itemize}
\end{minipage}
\hfill
\begin{minipage}[t]{0.47\linewidth}
\vspace{0pt}
\begin{itemize}[leftmargin=1.2em, itemsep=1pt, topsep=0pt, parsep=0pt]
    \item $\mathbf{w}$: Pre-trained weights.
    \item $\tilde{\mathbf{w}}$: Transformed-coordinate weights.
    \item $\hat{\mathbf{w}}$: Quantized weights. $\mathbf{e}=\hat{\mathbf{w}}-\mathbf{w}$.
    \item $\mathbf{e}$: Original-coordinate quantization error.
    \item $\tilde{\mathbf{e}}$: Transformed-coordinate quantization error
    \item $\mathcal{B}(\alpha)$: Spectral error bound.
    \item $\mathbf{R}, \mathbf{D}_{\alpha}$: Rotation and smoothing matrices.
\end{itemize}
\end{minipage}

\end{mdframed}

\subsection{Motivation}
\label{sec:motivation}

%Let $\mathbf{w} \in \mathbb{R}^d$ and $\hat{\mathbf{w}}$ denote the full-precision and quantized parameters, respectively, with the quantization perturbation vector defined as $\boldsymbol{\delta} \triangleq \hat{\mathbf{w}} - \mathbf{w}$. Following the standard assumption that the pre-trained weights reside in a local minimum where the gradient vanishes ($\nabla \mathcal{L}(\mathbf{w}) \approx \mathbf{0}$), the objective degradation is dominated by the second-order term of the Taylor expansion:

%\begin{equation}
%\label{eq:taylor_expansion}
%\Delta \mathcal{L} = \mathcal{L}(\mathbf{w} + \boldsymbol{\delta}) - \mathcal{L}(\mathbf{w}) \approx \frac{1}{2} \boldsymbol{\delta}^\top %\mathbf{H} \boldsymbol{\delta},
%\end{equation}

%where $\mathbf{H} \triangleq \nabla^2 \mathcal{L}(\mathbf{w})$ represents the Hessian matrix. While previous works primarily focus on minimizing the perturbation norm $\|\boldsymbol{\delta}\|_2$, we argue that the curvature geometry plays an equally decisive role.
We establish the relationship among the spectrum of the Hessian matrix, the noise introduced by quantization, and the resulting change in the loss function through the following Theorem \ref{the:loss_bound}. The proof is shown in  Appendix \ref{app:proof_theorem_3_1}.

We estimate the local curvature using the activation-induced Hessian approximation commonly adopted in layer-wise PTQ reconstruction. We introduce the smoothing coefficient \(\alpha\) to control the strength of diagonal preconditioning. It acts as a tuning knob that balances effective-curvature suppression and clipping error.

\begin{theorem}[Spectral Surrogate Bound]
\label{the:loss_bound}
By introducing a smoothing hyperparameter $\alpha$ (defined in Eq.~\ref{a_define}) to regulate the Hessian spectral distribution, the loss variation induced by quantization admits the following \emph{quadratic surrogate upper bound}:
\begin{equation}
\Delta \mathcal{L}_{\mathrm{quad}}
\leq
\frac{1}{2}
\lambda_{\max}\!\left(\tilde{\mathbf{H}}\right)
\left\|\tilde{\mathbf{e}}\right\|_2^2
\leq
\frac{1}{2}
\left\|\tilde{\mathbf{e}}\right\|_2^2
\cdot
\mathcal{B}(\alpha),
\label{eq:curvature_weighted_surrogate}
\end{equation}
where 
\(\tilde{\mathbf{H}}=\mathbf{T}^{\top}\mathbf{H}\mathbf{T}\) 
is the effective Hessian in the quantization coordinate system, and
\begin{equation}
\mathcal{B}(\alpha)
=
\max_k
\left(
\left|(\tilde{\mathbf{H}})_{kk}\right|
+
\sum_{j\neq k}
\left|(\tilde{\mathbf{H}})_{kj}\right|
\right).
\label{eq:spectral_surrogate_bound}
\end{equation}
Theorem~\ref{the:loss_bound} decomposes the quantization-induced loss into curvature amplification
$\lambda_{\max}(\tilde{\mathbf H})$ and transformed-space error
$\|\tilde{\mathbf{e}}\|_2^2$. This motivates HeRo-Q to construct a Hessian-conditioned quantization coordinate system: diagonal smoothing reduces curvature amplification, while orthogonal rotation reduces the alignment between transformed-space rounding errors and high-curvature directions.
\end{theorem}

%In Large Language Models  such as Llama-3.2-3B, the Hessian matrix exhibits an extremely pathological 'long-tail' distribution, resulting in a substantial magnitude of the spectral radius $\lambda_{\max}$. According to Theorem~\ref{the:loss_bound}, the upper bound of the quantization error is jointly determined by the quantization noise $\|\boldsymbol{\delta}\|$ and the spectral bound term $\mathcal{B}(\alpha)$. Given the excessive magnitude of the spectral radius $\lambda_{\max}$ in LLMs, it effectively supersedes the quantization noise $\|\boldsymbol{\delta}\|$ as the dominant factor governing the quantization error. 

In LLMs such as Llama-3.2-3B, the layer-wise curvature spectrum is often highly anisotropic, with a few sharp directions dominating the curvature response. According to Theorem~\ref{the:loss_bound}, quantization-induced degradation depends not only on the transformed-space error magnitude $\|\tilde{\mathbf{e}}\|_2$, but also on the spectral concentration of the effective Hessian. Thus, minimizing the Euclidean error alone can be insufficient when rounding errors are amplified by sharp effective-curvature directions, motivating HeRo-Q to jointly control curvature amplification and noise--curvature alignment in the quantization coordinate system.

\subsection{HeRo-Q Algorithm}
\label{sec:theory}

To address the instability induced by sharp effective-curvature directions, we propose HeRo-Q, a Hessian-conditioned PTQ framework, shown in Algorithm \ref{alg:hroq}. The overall workflow is shown in Figure~\ref{fig:placeholder}. 
The core idea is to construct a quantization coordinate system in which low-bit rounding errors are less amplified by local curvature.

In our algorithm, let the transformed equivalent Hessian be $\tilde{\mathbf{H}} \triangleq \mathbf{T}^\top \mathbf{H} \mathbf{T}$. Guided by Theorem \ref{the:loss_bound}, which decomposes the quantization-induced loss perturbation into curvature amplification and transformed-space quantization error, we design $\mathbf{T}$ to reduce their curvature-weighted effect. To this end, we decompose curvature-aware robustness into two complementary operations: \textit{diagonal smoothing} to reduce curvature amplification, and \textit{orthogonal rotation} to reduce the alignment between rounding errors and high-curvature directions. Specifically, we define:
\begin{equation} 
\label{eq:transform_def}
    \mathbf{T} \triangleq \mathbf{D}_{\alpha}^{-1}\mathbf{R},
\end{equation}
where $\mathbf{R} \in \mathbb{R}^{d \times d}$ is an orthogonal rotation matrix ($\mathbf{R}^\top \mathbf{R} = \mathbf{I}$), and $\mathbf{D}_{\alpha}$ is a diagonal smoothing matrix constructed from a layer-wise Hessian proxy:
\begin{equation}
\label{a_define}
\mathbf{D}_{\alpha} = \mathrm{diag}\left(|h_1|^{\alpha/2}, |h_2|^{\alpha/2}, \dots, |h_d|^{\alpha/2}\right),
\quad \mathbf{h}=\mathrm{diag}(\mathbf{H}_{\mathrm{proxy}}),\quad
\mathbf{H}_{\mathrm{proxy}}=\mathbf{X}^{\top}\mathbf{X},
\end{equation}
with $\alpha \in [0, 1]$ acting as a tunable hyperparameter that controls the aggressiveness of outlier suppression—effectively serving as a \textit{spectral equalizer}. The theoretical justification for the smoothing component is provided by the following result:

\begin{theorem}[Strict Reduction of Spectral Error Bound]
\label{thm:spectral_compression}
Under a normalized long-tail Hessian-proxy assumption, i.e.,
$\displaystyle \frac{\max_i |h_i|}{\operatorname{median}_i |h_i|} \gg 1$
\cite{sagun2017empirical,sankar2021deeper,papyan2018full,DBLP:conf/nips/ZhangCDL0L24,DBLP:conf/icml/0001GC00XX25},
there exists $\alpha^\ast \in (0,1)$ such that
\begin{equation}
B(\alpha^\ast) < B(0).
\end{equation}
This result implies that Hessian-guided smoothing can strictly tighten the curvature-amplification bound compared to the unsmoothed baseline ($\alpha = 0$), providing the theoretical motivation for the smoothing component of HeRo-Q.
\end{theorem}

This theorem confirms that attenuating dominant diagonal entries via smoothing strictly reduces the spectral error bound. Building on this geometric insight, HeRo-Q implements a practical algorithm that searches for the optimal $\alpha^*$ to reduce this bound, while simultaneously learning an orthogonal rotation $\mathbf{R}^*$ to reduce noise--curvature alignment. Hyperparameters are shown in the Appendix \ref{app:hyperparameters}.

HeRo-Q operates in three phases: First, it estimates the diagonal of the activation-induced Hessian approximation from calibration data to capture the channel-wise curvature sensitivity. Second, for each candidate smoothing exponent $\alpha$ in a predefined search grid $\mathcal{A}$, it constructs a diagonal smoothing matrix $\mathbf{D}_\alpha = \mathrm{diag}(|\mathbf{h}|^{\alpha/2})$ and combines it with a learnable orthogonal rotation matrix $\mathbf{R}$ to form a joint transformation $\mathbf{T}(\mathbf{R}) = \mathbf{D}_\alpha^{-1} \mathbf{R}$. The rotation matrix is parameterized via the Cayley transform to preserve orthogonality and is optimized through gradient descent to minimize the Frobenius norm reconstruction error $\|\mathbf{X}\mathbf{W} - \mathbf{X}\mathbf{W}_{\mathrm{sim}}\|_F^2$ between the original and quantized-simulated outputs. Finally, the algorithm selects the optimal pair $(\alpha^*, \mathbf{R}^*)$ that yields the lowest reconstruction error, applies the corresponding optimal transform $\mathbf{T}^* = \mathbf{D}_{\alpha^*}^{-1} \mathbf{R}^*$, performs quantization in the transformed coordinate system, and maps the quantized weights back to the original coordinate to obtain $\hat{\mathbf{W}}$. The base quantizer can be instantiated with GPTQ, RTN, AWQ, or other standard PTQ methods.

%HeRo-Q adopts a bilevel optimization strategy. Specifically, the algorithm iterates through all possible values of the smoothing factor $\alpha$, determines the optimal $(\alpha)$ value based on the magnitude of the calibration set MSE, and then optimizes the rotation matrix $\mathbf{R}$ via gradient descent on the Cayley manifold to strictly preserve its orthogonality \citep{spinquant}. Furthermore, to address the residual numerical instability in the inverse Hessian calculation, we introduce a GPTQ step with dynamic diagonal stabilization. This search process jointly identifies the optimal configuration $(\alpha^*, \mathbf{R}^*)$, thereby minimizing the final quantization error.
%这一部分加点内容，感觉好像没讲完

\begin{algorithm}[t]
\small
\caption{HeRo-Q: Hessian-Robust Quantization}
\label{alg:hroq}
\begin{algorithmic}[1]
\STATE {\bfseries Input:} weights $\mathbf{W}$, calibration inputs $\mathbf{X}$, grid $\mathcal{A}$, bits $b$
\STATE {\bfseries Output:} quantized weights $\hat{\mathbf{W}}$, transform $(\mathbf{D}_{\alpha^*}, \mathbf{R}^*)$
\FOR{each linear layer}
    \STATE $\mathbf{H}_{p}\leftarrow \mathbf{X}^{\top}\mathbf{X}$, \quad
           $\mathbf{h}\leftarrow \operatorname{diag}(\mathbf{H}_{p})$, \quad
           $\mathcal{L}^*\leftarrow \infty$
    \FOR{$\alpha\in\mathcal{A}$}
        \STATE $\mathbf{D}_{\alpha}\leftarrow \operatorname{diag}(|\mathbf{h}|^{\alpha/2})$
        \STATE Optimize orthogonal $R_\alpha$ via Cayley-SGD
        \STATE $\mathbf{T}_{\alpha}\leftarrow \mathbf{D}_{\alpha}^{-1}\mathbf{R}_{\alpha}$, \quad
               $\tilde{\mathbf{H}}_{\alpha}\leftarrow \mathbf{T}_{\alpha}^{\top}\mathbf{H}_{p}\mathbf{T}_{\alpha}$
        \STATE $\tilde{\mathbf{W}}_{q}\leftarrow 
        \operatorname{Quant}(\mathbf{T}_{\alpha}^{-1}\mathbf{W}, \tilde{\mathbf{H}}_{\alpha}, b)$
        \STATE $\hat{\mathbf{W}}_{\alpha}\leftarrow \mathbf{T}_{\alpha}\tilde{\mathbf{W}}_{q}$
        \STATE $\mathcal{L}_{\alpha}\leftarrow
        \|\mathbf{X}\mathbf{W}-\mathbf{X}\hat{\mathbf{W}}_{\alpha}\|_{F}^{2}$
        \IF{$\mathcal{L}_{\alpha}<\mathcal{L}^*$}
            \STATE $\mathcal{L}^*\leftarrow \mathcal{L}_{\alpha}$, \quad
                   $(\alpha^*,\mathbf{R}^*)\leftarrow(\alpha,\mathbf{R}_{\alpha})$
        \ENDIF
    \ENDFOR
    \STATE $\hat{\mathbf{W}}\leftarrow 
    \mathbf{T}^* \operatorname{Quant}((\mathbf{T}^*)^{-1}\mathbf{W},
    (\mathbf{T}^*)^{\top}\mathbf{H}_{p}\mathbf{T}^*, b)$, where
    $\mathbf{T}^*=\mathbf{D}_{\alpha^*}^{-1}\mathbf{R}^*$
\ENDFOR
\end{algorithmic}
\end{algorithm}
\begin{comment}
As outlined in Algorithm \ref{alg:hroq_plus} and illustrated in Figure \ref{fig:placeholder}, HeRo-Q operates in three phases per linear layer:

\noindent\textbf{(1) Statistics Extraction:} Compute the diagonal of the input Gram matrix $\mathbf{H} \approx \mathbf{X}^\top \mathbf{X}$ as a proxy for Hessian curvature.

\noindent\textbf{(2) Joint Optimization:} For each candidate $\alpha$ in a search grid $\mathcal{A}$:
\begin{itemize}[leftmargin=*]
    \item Construct $\mathbf{D}_\alpha$ and parameterize $\mathbf{T}(\mathbf{R}) = \mathbf{D}_\alpha^{-1} \mathbf{R}$.
    \item Optimize $\mathbf{R}$ via gradient descent on a skew-symmetric matrix $\mathbf{S}$, using the Cayley transform $\mathbf{R} = (\mathbf{I} - \mathbf{S})(\mathbf{I} + \mathbf{S})^{-1}$ to enforce orthogonality.
    \item Evaluate the reconstruction quality using a robustified GPTQ procedure on the transformed space, with Hessian stabilization via diagonal regularization.
\end{itemize}

\noindent\textbf{(3) Final Quantization:} Apply the best-performing transform $\mathbf{T}^* = \mathbf{D}_{\alpha^*}^{-1} \mathbf{R}^*$, quantize in the preconditioned space, and map back to the original coordinates to obtain the final quantized weights $\hat{\mathbf{W}}$.

In summary, HeRo-Q bridges theory and practice: its architecture is directly motivated by spectral analysis of the Hessian, and its optimization pipeline operationalizes the error-reduction guarantee established in Theorem
\end{comment}

%%整段重写
%%整段重写写的不够具体
\subsection{Inference Efficiency}
\label{sec:inference_efficiency}

HeRo-Q employs a hybrid inference strategy that combines offline parameter
fusion with optimized online computational kernels.

First, for the vast majority of fusible linear layers in the model, HeRo-Q introduces no additional runtime operators after offline re-parameterization. We absorb the inverse of the smoothing matrix $\mathbf{D}_\alpha^{-1}$ directly into normalization layers (LayerNorm \cite{ba2016layer} or RMSNorm \cite{DBLP:conf/nips/ZhangS19a}) when applicable. This eliminates additional runtime multiplications for the fused linear path. For the dense rotation matrix $\mathbf{R}$ learned via Cayley-SGD \cite{li2020efficient}, we fuse it offline into the adjacent weights ($\mathbf{W}_{prev} \leftarrow \mathbf{W}_{prev}\mathbf{R}$ and $\mathbf{W}_{curr} \leftarrow \mathbf{R}^\top \mathbf{W}_{curr}$), keeping the computational graph architecture unchanged.

Second, for components where weight fusion is inapplicable, such as KV cache
quantization or specific online activation processing, we follow the approach of SpinQuant~\citep{spinquant} and QuaRot \cite{ashkboos2024quarot} by adopting online Hadamard rotation. In this scenario, we utilize the Fast Walsh--Hadamard Transform (FWHT) kernel for efficient computation during inference~\cite{DBLP:journals/siamcomp/AilonC09}. Therefore, HeRo-Q introduces no additional runtime operators for fusible linear layers and only negligible measured inference overhead for non-fusible layer components. Complexity analysis is provided in Appendix~\ref{app:efficiency_analysispp}.
 
%%%%
\section{Experiments}

We evaluate HeRo-Q from three aspects. First, we compare its downstream performance with strong PTQ baselines under multiple quantization regimes, including W4A4, W4A8, W4A16, and W3A16. Second, we analyze the Hessian spectrum to verify whether HeRo-Q indeed reduces sensitivity to high curvature. Third, we report inference throughput to examine its practical deployment cost. The code is provided in Appendix~\ref{code}.

\subsection{Experimental Setup}

\textbf{Models and tasks.}
We evaluate HeRo-Q on dense LLMs, vision models, multimodal models, and MoE models. The dense LLMs include Llama-3.1-8B \cite{DBLP:journals/corr/abs-2407-21783}, Llama-3.2-1B/3B \cite{llama32_modelcard}, and Qwen2.5-3B/7B \cite{DBLP:journals/corr/abs-2412-15115}. We report perplexity on WikiText2 \cite{DBLP:conf/iclr/MerityX0S17} and C4 \cite{DBLP:journals/jmlr/RaffelSRLNMZLL20} with sequence length 2048, and evaluate downstream accuracy on MMLU \cite{DBLP:conf/iclr/HendrycksBBZMSS21}, GSM8K \cite{DBLP:journals/corr/abs-2110-14168}, and HellaSwag \cite{DBLP:conf/acl/ZellersHBFC19}. To test cross-architecture generality, we further evaluate ViT/DeiT~\citep{DBLP:conf/iclr/DosovitskiyB0WZ21,DBLP:conf/icml/TouvronCDMSJ21}, LLaVA-OneVision-7B \cite{DBLP:journals/tmlr/0080ZGZ00ZZL0L25}, DeepSeek-MoE-16B-chat \cite{DBLP:conf/acl/DaiDZXGCLZYWXLH24}, and Qwen-MoE-14B-chat \cite{bai2023qwen}.

\textbf{Baselines and settings.}
We compare with representative PTQ methods, including RTN \cite{DBLP:journals/corr/abs-2208-07339}, GPTQ \cite{GPTQ}, AWQ \cite{AWQ}, SmoothQuant \cite{SmoothQuant}, SpinQuant \cite{spinquant}, OmniQuant \cite{DBLP:conf/iclr/ShaoC0XZLZ00024}, and FlatQuant \cite{DBLP:conf/icml/SunLBBZLHY0Y0LY25}. For vision, multimodal, and MoE models, we additionally include task-specific baselines such as FQ-ViT \cite{DBLP:conf/ijcai/LinZSLZ22}, PTQ4ViT \cite{DBLP:conf/eccv/YuanXCWS22}, APQ-ViT \cite{DBLP:conf/mm/DingQYCLWL22}, RepQ-ViT \cite{DBLP:conf/iccv/LiXYG23}, MBQ \cite{DBLP:conf/cvpr/LiHNLHJLYRDYY025}, and MoEQuant+ \cite{DBLP:conf/icml/Chen0YXXYZY25}. All methods use the same calibration data and evaluation protocol, with official settings unless otherwise specified.

\textbf{Implementation details.}
For HeRo-Q, the smoothing parameter $\alpha$ is selected layer-wise, and GPTQ is used as the default base quantizer unless otherwise specified. Calibration and evaluation data are strictly separated. For downstream LLM evaluation, we follow standard few-shot protocols: MMLU 5-shot, GSM8K 8-shot, and HellaSwag 10-shot \cite{eval-harness}.
%1.你的参数alpha2. 做了两个实验，3、4b

\subsection{Downstream Task Evaluation}
%In this section, we compare our method with the current state-of-the-art Post-Training Quantization (PTQ) baseline methods, including GPTQ, AWQ, SmoothQuant, OmniQuant, and SpinQuant.

\begin{table*}[t]
\centering
\renewcommand{\arraystretch}{1.08}
\setlength{\tabcolsep}{1.8pt}
\tiny
\resizebox{\textwidth}{!}{
\begin{tabular}{l|cccc|cccc||cccc|cccc}
\toprule
\multirow{2}{*}{\textbf{Method}}
& \multicolumn{4}{c|}{\textbf{Llama-3.1-8B (W4A16)}}
& \multicolumn{4}{c||}{\textbf{Qwen2.5-7B (W4A16)}}
& \multicolumn{4}{c|}{\textbf{Llama-3.1-8B (W3A16)}}
& \multicolumn{4}{c}{\textbf{Llama-3.2-1B (W3A16)}} \\
\cline{2-17}
& \textbf{W2$\downarrow$} & \textbf{C4$\downarrow$} & \textbf{MMLU$\uparrow$} & \textbf{GSM8K$\uparrow$}
& \textbf{W2$\downarrow$} & \textbf{C4$\downarrow$} & \textbf{MMLU$\uparrow$} & \textbf{GSM8K$\uparrow$}
& \textbf{W2$\downarrow$} & \textbf{C4$\downarrow$} & \textbf{MMLU$\uparrow$} & \textbf{GSM8K$\uparrow$}
& \textbf{W2$\downarrow$} & \textbf{C4$\downarrow$} & \textbf{MMLU$\uparrow$} & \textbf{GSM8K$\uparrow$} \\
\midrule

FP16
& 6.41 & 12.28 & 67.31 & 77.17
& 6.66 & 13.26 & 71.48 & 87.96
& 6.41 & 12.28 & 67.31 & 77.17
& 11.41 & 22.48 & 46.38 & 36.44 \\

GPTQ
& 6.65 & 12.55 & 65.80 & 74.27
& 6.95 & 13.85 & 69.80 & 85.23
& 20.13 & 28.94 & 25.12 & 26.36
& 41.30 & 65.80 & 14.20 & 15.19 \\

AWQ
& 6.60 & 12.56 & 66.14 & 74.52
& 7.14 & 14.21 & 69.68 & 82.33
& 15.87 & 24.15 & 39.50 & 46.21
& 12.23 & 23.97 & 40.23 & 26.44 \\

OmniQuant
& 6.48 & 12.35 & 66.80 & 75.83
& 6.75 & 13.40 & 70.90 & 86.82
& 9.45 & 15.80 & 59.80 & 63.59
& 17.50 & 32.10 & 36.50 & 21.48 \\

SpinQuant
& 6.46 & 12.32 & 66.85 & 76.17
& 6.72 & 13.31 & 71.10 & 87.14
& 8.85 & 14.95 & 60.50 & 66.22
& 16.20 & 30.05 & 39.10 & 22.80 \\

FlatQuant
& 6.46 & 12.32 & 66.93 & 76.32
& 6.71 & 13.35 & 71.17 & 87.22
& 8.77 & 14.55 & 62.56 & 68.38
& 16.11 & 27.47 & 40.18 & 24.97 \\

\rowcolor{blue!8}
\textbf{HeRo-Q}
& \textbf{6.43} & \textbf{12.30} & \textbf{67.10} & \textbf{76.85}
& \textbf{6.68} & \textbf{13.28} & \textbf{71.35} & \textbf{87.50}
& \textbf{8.02} & \textbf{13.98} & \textbf{63.80} & \textbf{70.15}
& \textbf{14.05} & \textbf{25.80} & \textbf{42.50} & \textbf{29.10} \\

\midrule[0.9pt]

\multirow{2}{*}{\textbf{Method}}
& \multicolumn{4}{c|}{\textbf{Llama-3.1-8B (W4A4)}}
& \multicolumn{4}{c||}{\textbf{Llama-3.2-3B (W4A4)}}
& \multicolumn{4}{c|}{\textbf{Llama-3.1-8B (W4A8)}}
& \multicolumn{4}{c}{\textbf{Qwen2.5-7B (W4A8)}} \\
\cline{2-17}
& \textbf{W2$\downarrow$} & \textbf{C4$\downarrow$} & \textbf{MMLU$\uparrow$} & \textbf{GSM8K$\uparrow$}
& \textbf{W2$\downarrow$} & \textbf{C4$\downarrow$} & \textbf{MMLU$\uparrow$} & \textbf{GSM8K$\uparrow$}
& \textbf{W2$\downarrow$} & \textbf{C4$\downarrow$} & \textbf{MMLU$\uparrow$} & \textbf{GSM8K$\uparrow$}
& \textbf{W2$\downarrow$} & \textbf{C4$\downarrow$} & \textbf{MMLU$\uparrow$} & \textbf{GSM8K$\uparrow$} \\
\midrule

FP16
& 6.41 & 12.28 & 67.31 & 77.17
& 9.53 & 17.24 & 61.44 & 68.86
& 6.41 & 12.28 & 67.31 & 77.17
& 6.66 & 13.26 & 71.48 & 87.96 \\

AWQ
& 346 & 642.4 & 36.45 & 30.76
& 1000+ & 1000+ & 22.97 & 16.88
& 8.12 & 13.56 & 64.41 & 72.94
& 7.14 & 14.21 & 69.68 & 82.33 \\

GPTQ
& 1000+ & 1000+ & 34.55 & 28.79 
& 1000+ & 1000+ & 23.12 & 13.17
& 8.81 & 14.12 & 60.54 & 71.89
& 7.32 & 14.53 & 69.35 & 81.67 \\

SmoothQuant
& 429 & 745 & 40.18 & 37.81
& 1000+ & 1000+ & 34.74 & 20.76
& 7.32 & 13.64 & 64.55 & 72.55
& 7.09 & 14.06 & 69.79 & 83.47 \\

SpinQuant
& 12.14 & 17.46 & 43.76 & 39.44
& 15.31 & 25.41 & 36.98 & 32.91
& 7.48 & 13.39 & 64.98 & 73.56
& 6.97 & 13.89 & 70.14 & 84.97 \\

FlatQuant
& 11.17 & 15.94 & 44.39 & 41.01
& 14.65 & 23.33 & 38.25 & 34.86
& 7.31 & 13.11 & 65.32 & 74.19
& 6.89 & 13.59 & 70.28 & 85.37 \\

\rowcolor{blue!8}
\textbf{HeRo-Q}
& \textbf{10.29} & \textbf{14.84} & \textbf{45.31}  & \textbf{42.33}
& \textbf{13.18} & \textbf{20.14} & \textbf{41.10} & \textbf{37.15}
& \textbf{6.96} & \textbf{12.94} & \textbf{66.31} & \textbf{75.26}
& \textbf{6.78} & \textbf{13.43} & \textbf{71.02} & \textbf{86.12} \\
\bottomrule
\end{tabular}
}
\caption{Compact comparison across bit-width regimes. W2 and C4 denote perplexity on WikiText-2 and C4; MMLU and GSM8K denote accuracy scores. Lower perplexity is better, while higher accuracy is better.}
\label{tab:main_bitwidth}
\end{table*}
\paragraph{Results on dense LLMs.}
Table \ref{tab:main_bitwidth} reports the downstream performance of HeRo-Q under multiple quantization regimes, including W4A16, W4A8, W4A4, and W3A16. Overall, HeRo-Q consistently achieves the best or near-best results across both language modeling and reasoning benchmarks. This trend holds across different model families, including Llama and Qwen, suggesting that the proposed Hessian-conditioned transformation is not tied to a specific architecture. Under the standard W4A16 setting, HeRo-Q nearly preserves FP16-level performance. For example, on Llama-3.1-8B, HeRo-Q achieves 6.43 WikiText-2 PPL and 76.85 GSM8K accuracy, which is close to the FP16 results of 6.41 and 77.17. On Qwen2.5-7B, HeRo-Q obtains 71.35 MMLU and 87.50 GSM8K, outperforming GPTQ, AWQ, SpinQuant, and FlatQuant.

Under the practical W4A8 setting, HeRo-Q also shows consistent advantages. On Llama-3.1-8B, it reduces WikiText-2 PPL to 6.96 and improves GSM8K accuracy to 75.26. On Qwen2.5-7B, HeRo-Q reaches 6.78 WikiText-2 PPL and 86.12 GSM8K accuracy, outperforming other transform-based and reconstruction-based baselines. Since W4A8 is commonly used in deployment-oriented quantization, these results indicate that HeRo-Q improves the accuracy-efficiency trade-off not only in extreme cases, but also in realistic low-bit inference scenarios.

The advantage becomes more pronounced under aggressive quantization. In the W3A16 setting on Llama-3.1-8B, GPTQ suffers severe degradation, with WikiText-2 PPL increasing to 20.13 and GSM8K accuracy dropping to 26.36. AWQ also degrades noticeably, obtaining 15.87 WikiText-2 PPL and 46.21 GSM8K accuracy. In contrast, HeRo-Q achieves 8.02 WikiText-2 PPL and 70.15 GSM8K accuracy, substantially narrowing the gap to FP16. Similar robustness is observed under W4A4, where several baselines collapse with extremely large perplexity. For instance, GPTQ reaches 1000+ PPL on Llama-3.1-8B, while HeRo-Q still maintains 10.29 PPL and 42.33 GSM8K accuracy.

\paragraph{Generalization to vision and multimodal models.}
Table \ref{tab:vision_multimodal} further evaluates whether HeRo-Q generalizes beyond dense LLMs. On ViT/DeiT models, HeRo-Q consistently outperforms prior ViT-oriented PTQ methods under both W4A4 and W6A6 settings. For example, under W4A4, HeRo-Q achieves 70.12 on ViT-B, 70.01 on DeiT-S, and 77.33 on DeiT-B, outperforming RepQ-ViT across all three models. Under W6A6, HeRo-Q also remains close to the FP16 accuracy, reaching 84.10 on ViT-B and 81.66 on DeiT-B. On the multimodal LLaVA-OneVision-7B benchmark, HeRo-Q obtains the best overall performance among quantized methods, achieving 43.12 on MMMU and 74.31 on TextVQA. These results indicate that Hessian-conditioned transformation is not limited to language-only models, but also improves quantization robustness in vision and vision-language architectures.
%%%%%%%%    CV实验
\begin{table*}[t] 
\centering 
\renewcommand{\arraystretch}{1.05} 
\setlength{\tabcolsep}{2.4pt} 
\tiny 
\resizebox{\textwidth}{!}{ \begin{tabular}{l|ccc|ccc||lcccc} \toprule \multirow{2}{*}{\textbf{Method}} & \multicolumn{3}{c|}{\textbf{ViT/DeiT (W4A4)}} & \multicolumn{3}{c||}{\textbf{ViT/DeiT (W6A6)}} & \multirow{2}{*}{\textbf{Method}} & \multicolumn{4}{c}{\textbf{LLaVA-OneVision-7B (W3A16)}} \\ \cline{2-7}\cline{9-12} & \textbf{ViT-B$\uparrow$} & \textbf{DeiT-S$\uparrow$} & \textbf{DeiT-B$\uparrow$} & \textbf{ViT-B$\uparrow$} & \textbf{DeiT-S$\uparrow$} & \textbf{DeiT-B$\uparrow$} & & \textbf{MMMU$\uparrow$} & \textbf{VizWiz$\uparrow$} & \textbf{SQA$\uparrow$} & \textbf{TextVQA$\uparrow$} \\ \midrule FP16 & 84.54 & 79.85 & 81.80 & 84.54 & 79.85 & 81.80 & FP16 & 46.09 & 60.42 & 85.42 & 76.16 \\ FQ-ViT & 0.10 & 0.10 & 0.10 & 0.10 & 45.51 & 64.63 & RTN & 34.71 & 59.26 & 82.28 & 60.97 \\ PTQ4ViT & 30.69 & 34.08 & 64.39 & 81.65 & 76.28 & 80.25 & GPTQ & 41.74 & 56.43 & 82.49 & 71.33 \\ APQ-ViT & 41.41 & 43.55 & 67.48 & 82.21 & 77.76 & 80.42 & AWQ & 36.68 & 58.57 & 83.26 & 73.04 \\ RepQ-ViT & 68.48 & 69.03 & 75.61 & 83.62 & 78.90 & 81.27 & MBQ & 42.03 & 60.42 & 85.02 & 73.37 \\ \rowcolor{blue!8} \textbf{HeRo-Q} & \textbf{70.12} & \textbf{70.01} & \textbf{77.33} & \textbf{84.10} & \textbf{79.31} & \textbf{81.66} & \textbf{HeRo-Q} & \textbf{43.12} & \textbf{60.42} & \textbf{85.31} & \textbf{74.31} \\ \bottomrule \end{tabular} } \caption{Generalization of HeRo-Q on vision and multimodal models. } \label{tab:vision_multimodal} \end{table*}

\paragraph{Generalization to MoE models.}

Table \ref{tab:moe_full_results} evaluates HeRo-Q on MoE architectures, which are more challenging because different experts can exhibit heterogeneous activation and curvature patterns. On DeepSeek-MoE-16B-chat, HeRo-Q achieves 7.42 WikiText-2 PPL and 10.03 C4 PPL, both close to the FP16 results and better than most PTQ baselines. It also obtains strong downstream accuracy, including 47.88 on MMLU and 50.05 on GSM8K. On Qwen-MoE-14B-chat, HeRo-Q achieves 58.44 MMLU, 28.46 GSM8K, and 59.12 HellaSwag, consistently outperforming RTN, AWQ, and MoEQuant+ on most reasoning benchmarks. These results suggest that HeRo-Q remains effective even when quantization sensitivity varies across experts, supporting its role as a general Hessian-aware PTQ framework. More extensive evaluations are provided in Appendix~\ref{app:additional_experiments}. 
\begin{table*}[t]
\centering
\renewcommand{\arraystretch}{1.05}
\setlength{\tabcolsep}{2.4pt}
\scriptsize
\resizebox{\textwidth}{!}{
\begin{tabular}{l|cccccccc||cccccccc}
\toprule
\multirow{2}{*}{\textbf{Method}}
& \multicolumn{8}{c||}{\textbf{DeepSeek-MoE-16B-chat}}
& \multicolumn{8}{c}{\textbf{Qwen-MoE-14B-chat}} \\
\cline{2-17}
& \textbf{W2$\downarrow$} & \textbf{C4$\downarrow$} & \textbf{MMLU$\uparrow$} & \textbf{HE$\uparrow$} & \textbf{GSM8K$\uparrow$} & \textbf{HS$\uparrow$} & \textbf{OBQA$\uparrow$} & \textbf{MathQA$\uparrow$}
& \textbf{W2$\downarrow$} & \textbf{C4$\downarrow$} & \textbf{MMLU$\uparrow$} & \textbf{HE$\uparrow$} & \textbf{GSM8K$\uparrow$} & \textbf{HS$\uparrow$} & \textbf{OBQA$\uparrow$} & \textbf{MathQA$\uparrow$} \\
\midrule

FP16
& 7.35 & 9.96 & 48.90 & 24.39 & 54.28 & 60.69 & 33.40 & 34.27
& 8.07 & 9.74 & 59.00 & 21.34 & 30.71 & 59.33 & 31.00 & 34.91 \\

RTN
& 8.63 & 11.06 & 41.40 & 10.41 & 28.88 & 57.59 & 31.40 & 29.04
& 12.81 & 14.03 & 43.00 & 7.32 & 9.70 & 51.41 & 24.40 & 28.81 \\

AWQ
& 7.72 & 10.49 & 46.33 & 18.90 & 39.88 & 58.97 & \best{33.80} & 32.86
& 9.97 & 11.90 & 52.06 & 12.20 & 17.74 & 55.37 & 30.40 & 31.46 \\

GPTQ
& 7.55 & 10.24 & 46.60 & 13.41 & 47.08 & 59.64 & 33.20 & 32.76
& 8.38 & 10.78 & 57.30 & 15.24 & 26.08 & 58.72 & \best{31.40} & 34.17 \\

MoEQuant+
& 7.85 & 10.23 & 46.40 & 18.90 & 45.41 & 59.03 & 33.60 & 33.14
& 10.12 & 11.55 & 55.34 & 13.60 & 20.87 & 56.64 & 30.60 & 32.50 \\

\textbf{HeRo-Q}
& \best{7.42} & \best{10.03} & \best{47.88} & \best{21.12} & \best{50.05} & \best{60.48} & 33.40 & \best{33.87}
& \best{8.21} & \best{10.17} & \best{58.44} & \best{16.98} & \best{28.46} & \best{59.12} & 31.02 & \best{34.88} \\

\bottomrule
\end{tabular}
}
\caption{MoE quantization results on DeepSeek-MoE-16B-chat and Qwen-MoE-14B-chat. 
W2 and C4 denote WikiText-2 and C4 perplexity; HE, HS, and OBQA denote HumanEval, HellaSwag, and OpenBookQA, respectively.}
\label{tab:moe_full_results}
\end{table*}

\subsection{Spectral Analysis Experiment}

We analyze the effective Hessian eigenspectrum of Llama-3.2-3B before and after applying the proposed transformation. Figure~\ref{fig:hessian_spectrum} shows both the global eigenspectrum distribution and the layer-wise maximum effective eigenvalue. These results provide direct evidence that HeRo-Q improves quantization robustness by reshaping the effective curvature geometry rather than merely reducing reconstruction error.

\begin{figure}[t]
    \centering
    \includegraphics[width=.9\linewidth]{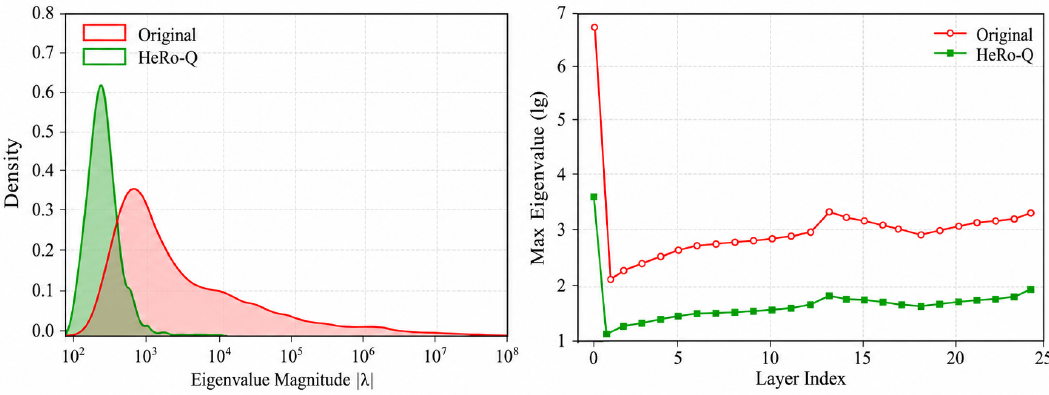} 
    \caption{(a) HeRo-Q compresses the long-tailed Hessian eigenspectrum and suppresses extreme eigenvalues.
    (b) HeRo-Q consistently reduces layer-wise maximum eigenvalues in log scale, indicating lower high-curvature sensitivity and improved robustness to quantization noise.
    }
    \label{fig:hessian_spectrum}
\end{figure}

Figure~\ref{fig:hessian_spectrum}(a) shows the density distribution of effective Hessian eigenvalue magnitudes. The baseline exhibits a clear long-tailed spectrum, where a non-negligible fraction of eigenvalues extends to extremely large magnitudes. Such spectral outliers correspond to high-curvature directions in the effective Hessian geometry. According to Theorem~3.1, perturbations along these directions dominate the worst-case quantization-induced loss increase, making the model highly sensitive to low-bit quantization noise. After applying HeRo-Q, the eigenspectrum becomes much more concentrated, and the heavy tail is significantly compressed. 

Figure~\ref{fig:hessian_spectrum}(b) shows that the baseline has sharp layer-wise
spectral peaks, indicating highly uneven quantization sensitivity across depth.
HeRo-Q consistently reduces these maximum effective eigenvalues and smooths
their variation, acting as a spectral equalizer under aggressive settings such
as W3A16. This compression mainly comes from Hessian-guided smoothing, while
rotation further improves robustness by reducing the alignment between
quantization noise and curvature rather than changing the spectrum itself.

\subsection{Computational Performance Evaluation}

 The inference throughput results in Table~\ref{tab:efficiency_alpha}(a) demonstrate the practical efficiency of weight-only quantization under the W4A16 setting on an NVIDIA A800 GPU. All quantized methods---GPTQ, SpinQuant, and HeRo-Q---deliver substantial speedups over the FP16 baseline. Notably, HeRo-Q achieves throughput on par with or slightly better than SpinQuant (e.g., 113.1 vs. 112.8 Tokens/s on Llama-3.1-8B), while matching the high efficiency of GPTQ. This indicates that the spectral preconditioning introduced by HeRo-Q does not compromise runtime performance. The negligible speed difference relative to other SOTA methods suggests that the orthogonal rotation and diagonal smoothing operations are efficiently fused during deployment, enabling HeRo-Q to maintain competitive inference latency while offering improved accuracy.

\subsection{Ablation Experiments}
\subsubsection{Hyperparameter Sensitivity Study}
\begin{table*}[t]
\centering
\renewcommand{\arraystretch}{1.00}
\setlength{\tabcolsep}{3.0pt}
\scriptsize

\begin{minipage}[t]{0.40\textwidth}
\centering
\textbf{(a) Inference Throughput}
\resizebox{\linewidth}{!}{
\begin{tabular}{lccc}
\toprule
\cellcolor{gray!10}\textbf{Method} 
& \cellcolor{gray!10}\textbf{Llama-3.2-1B} 
& \cellcolor{gray!10}\textbf{Llama-3.2-3B} 
& \cellcolor{gray!10}\textbf{Llama-3.1-8B} \\
\midrule
FP16      & 205.2 & 118.4 & 48.5 \\
GPTQ      & \best{\textbf{310.5}} & \best{\textbf{215.6}} & \best{\textbf{115.2}} \\
SpinQuant & 302.8 & 209.5 & 112.8 \\
HeRo-Q    & 303.1 & 210.0 & 113.1 \\
\bottomrule
\end{tabular}
}
\end{minipage}
\hfill
\begin{minipage}[t]{0.57\textwidth}
\centering
\textbf{(b) $\alpha$ Sensitivity}
\resizebox{\linewidth}{!}{
\begin{tabular}{lccccccccc}
\toprule
\textbf{Metric} & \textbf{0.1} & \textbf{0.2} & \cellcolor{blue!10}\textbf{0.3} & \textbf{0.4} & \textbf{0.5} & \textbf{0.6} & \textbf{0.7} & \textbf{0.8} & \textbf{LW Opt.} \\
\midrule
Llama-3.1-8B W2$\downarrow$ 
& 6.58 & 6.52 & \cellcolor{blue!10}\textbf{6.47} & 6.49 & 6.62 & 6.80 & 7.15 & 7.50 & \textbf{6.43} \\
Llama-3.1-8B C4$\downarrow$ 
& 12.45 & 12.39 & \cellcolor{blue!10}\textbf{12.34} & 12.36 & 12.48 & 12.60 & 12.95 & 13.20 & \textbf{12.30} \\
Qwen2.5-7B W2$\downarrow$ 
& 6.85 & 6.74 & \cellcolor{blue!10}\textbf{6.70} & 6.75 & 6.88 & 7.10 & 7.45 & 7.95 & \textbf{6.68} \\
Qwen2.5-7B C4$\downarrow$ 
& 13.45 & 13.37 & \cellcolor{blue!10}\textbf{13.32} & 13.35 & 13.42 & 13.60 & 13.85 & 14.10 & \textbf{13.28} \\
\bottomrule
\end{tabular}
}
\end{minipage}

\caption{
Efficiency and hyperparameter sensitivity analyses. 
(a) Real-world inference throughput on NVIDIA A800 under W4A16 quantization, measured in tokens/s. 
(b) Perplexity sensitivity to the smoothing factor $\alpha$; fixed $\alpha=0.3$ is robust, while layer-wise selection gives the best results.
}
\label{tab:efficiency_alpha}
\end{table*}

To investigate the intrinsic mechanism of the smoothing factor $\alpha$, we evaluated the PPL variations under different smoothing strengths $\alpha \in [0.1, 0.8]$, as detailed in Table~\ref{tab:efficiency_alpha}(b). The numerical results reveal a distinct U-shaped trend, highlighting a competition between Hessian spectral radius compression and quantization clipping errors.

In the lower $\alpha$ regime ($\alpha < 0.3$), the strength of the transformation matrix is insufficient to fully compress the maximum eigenvalue of the Hessian. This implies that those eigenvalues with large magnitudes remain unsuppressed. As predicted by Theorem \ref{the:loss_bound}, these unsuppressed eigenvalues magnify minute quantization errors into significant loss degradation. 

Conversely, in the higher $\alpha$ regime, the PPL exhibits a gradually increasing trend. Although the Hessian spectral radius is forcefully suppressed, its inverse transformation leads to over-smoothing. This excessive smoothing causes the weight distribution to deviate from the optimal dynamic range of the linear quantization grid. Consequently, clipping errors rise sharply and supersede the Hessian spectral radius as the dominant factor affecting performance in this regime.

\subsubsection{Component-wise Analysis}

\begin{table}[h]
\centering
\renewcommand{\arraystretch}{1.2}
\setlength{\tabcolsep}{4pt}
\resizebox{\linewidth}{!}{
\begin{tabular}{lcccccc}
\toprule
\textbf{Model} 
& \textbf{FP16} 
& \textbf{GPTQ}
& \textbf{Smoothing}
& \textbf{Rotation}
& \shortstack{\textbf{SmoothQuant + SpinQuant}}
& \textbf{HeRo-Q} \\
\midrule
Llama-3.2-1B 
& 11.41 
& 13.30 
& 12.75 
& 12.61 
& 12.32 
& \cellcolor{blue!10}\textbf{11.94} \\

Llama-3.1-8B 
& 6.41 
& 6.75 
& 6.72 
& 6.69 
& 6.55 
& \cellcolor{blue!10}\textbf{6.43} \\
\bottomrule
\end{tabular}
}
\vspace{2pt}
\caption{
Component and composition ablation on WikiText-2 perplexity. 
Smoothing uses $\mathbf{D}_{\alpha}^{-1}$, Rotation uses $\mathbf{R}$, and HeRo-Q combines both as $\mathbf{D}_{\alpha}^{-1}\mathbf{R}$ under a Hessian-conditioned objective. 
SmoothQuant+SpinQuant denotes a direct composition of existing transformations with GPTQ as the base quantizer. 
Lower is better.
}
\label{tab:ablation_study}
\end{table}

We conducted a rigorous ablation study on Llama-3.2-1B and Llama-3.1-8B models (see Table~\ref{tab:ablation_study}) by imposing different mathematical constraints on the transformation matrix $\mathbf{T}$ to isolate variables.

Experimental results show that the untransformed GPTQ baseline ($\mathbf{T}=\mathbf{I}$) obtains a PPL of 13.30 on Llama-3.2-1B and 6.75 on Llama-3.1-8B, indicating that the original quantization coordinate system remains sensitive to Hessian-induced outliers. When we introduce the smoothing configuration (Smoothing-Only, $\mathbf{T}\triangleq\mathbf{D}_{\alpha}^{-1}$, forcing $\mathbf{R}=\mathbf{I}$), the PPL decreases to 12.75 and 6.72, demonstrating that Hessian-guided scaling can reduce curvature amplification. In comparison, the rotation configuration (Rotation-Only, $\mathbf{T}\triangleq\mathbf{R}$) further improves the PPL to 12.61 and 6.69, suggesting that changing the quantization-grid orientation helps reduce noise-curvature alignment. A direct composition of SmoothQuant and SpinQuant achieves stronger results, with PPLs of 12.32 and 6.55, but still remains worse than HeRo-Q. Finally, HeRo-Q achieves the best performance by coupling Hessian-derived smoothing with learned rotation under a unified Hessian-conditioned objective, reaching 11.94 PPL on Llama-3.2-1B and 6.43 PPL on Llama-3.1-8B.

\section{Conclusion}
This paper addresses the fundamental discrepancy in post-training quantization (PTQ) of large language models—wherein low quantization error frequently coincides with substantial degradation in task performance—by proposing HeRo-Q, a general and computationally efficient quantization framework. The core of HeRo-Q lies in a lightweight, learnable rotation-and-compression transformation applied before quantization, which reshapes the quantization coordinate system to reduce curvature amplification and noise-curvature alignment. Notably, HeRo-Q requires no architectural modifications, imposes negligible additional inference overhead after offline fusion, and integrates seamlessly into existing PTQ pipelines. Comprehensive empirical evaluations demonstrate that HeRo-Q achieves the best or near-best performance across most evaluated settings compared with state-of-the-art quantization methods—including GPTQ, AWQ, and SpinQuant—across both standard (e.g., W4A8) and aggressive (e.g., W3A16) low-bit quantization regimes.

\bibliographystyle{unsrtnat}
\bibliography{neurips}

@article{GPTQ,
  author    = {Elias Frantar and Saleh Ashkboos and Torsten Hoefler and Dan Alistarh},
  title     = {{GPTQ}: Accurate Post-Training Quantization for Generative Pre-trained Transformers},
  journal   = {arXiv preprint arXiv:2210.17323},
  year      = {2022},
}

@inproceedings{AWQ,
  author    = {Ji Lin and Jiaming Tang and Haotian Tang and Shang Yang and Wei-Ming Chen and Wei-Chen Wang and Guangxuan Xiao and Xingyu Dang and Chuang Gan and Song Han},
  title     = {{AWQ}: Activation-aware Weight Quantization for On-Device {LLM} Compression and Acceleration},
  booktitle = {Proceedings of the Seventh Annual Conference on Machine Learning and Systems (MLSys)},
  year      = {2024},
}

@inproceedings{SmoothQuant,
  author    = {Guangxuan Xiao and Ji Lin and Mickael Seznec and Hao Wu and Julien Demouth and Song Han},
  title     = {{SmoothQuant}: Accurate and Efficient Post-Training Quantization for Large Language Models},
  booktitle = {Proceedings of the 40th International Conference on Machine Learning (ICML)},
  pages     = {38087--38099},
  year      = {2023},
}

@inproceedings{spinquant,
  author    = {Zechun Liu and Changsheng Zhao and Igor Fedorov and Bilge Soran and Dhruv Choudhary and Raghuraman Krishnamoorthi and Vikas Chandra and Yuandong Tian and Tijmen Blankevoort},
  title     = {{SpinQuant}: {LLM} Quantization with Learned Rotations},
  booktitle = {Proceedings of the Thirteenth International Conference on Learning Representations (ICLR)},
  year      = {2025},
}

@inproceedings{DBLP:conf/iclr/ShaoC0XZLZ00024,
  author    = {Wenqi Shao and Mengzhao Chen and Zhaoyang Zhang and Peng Xu and Lirui Zhao and Zhiqian Li and Kaipeng Zhang and Peng Gao and Yu Qiao and Ping Luo},
  title     = {{OmniQuant}: Omnidirectionally Calibrated Quantization for Large Language Models},
  booktitle = {Proceedings of the Twelfth International Conference on Learning Representations (ICLR)},
  year      = {2024},
}

@inproceedings{DBLP:conf/icml/NagelABLB20,
  author    = {Markus Nagel and Rana Ali Amjad and Mart van Baalen and Christos Louizos and Tijmen Blankevoort},
  title     = {Up or Down? {Adaptive} Rounding for Post-Training Quantization},
  booktitle = {Proceedings of the 37th International Conference on Machine Learning (ICML)},
  pages     = {7197--7206},
  year      = {2020},
}

@inproceedings{DBLP:conf/nips/CheeCKS23,
  author    = {Jerry Chee and Yaohui Cai and Volodymyr Kuleshov and Christopher De Sa},
  title     = {{QuIP}: 2-Bit Quantization of Large Language Models With Guarantees},
  booktitle = {Advances in Neural Information Processing Systems 36 (NeurIPS)},
  year      = {2023},
}

@article{zhang2024fp,
  title={FP= xINT: A Low-Bit Series Expansion Algorithm for Post-Training Quantization},
  author={Zhang, Boyang and Cheng, Daning and Zhang, Yunquan and Liu, Fangmin},
  journal={arXiv preprint arXiv:2412.06865},
  year={2024}
}

@article{zhanglod,
  title={LoD: Unlocking Performance Gains in Compression via Differential Analysis},
  author={Zhang, Boyang and Cheng, Daning and Zhang, Yunquan and Liu, Fangming},
  journal={https://arxiv.org/pdf/2502.15802},
  year={2025}
}

@article{zhang2024compression,
  title={Compression for Better: A General and Stable Lossless Compression Framework},
  author={Zhang, Boyang and Cheng, Daning and Zhang, Yunquan and Liu, Fangmin and Chen, Wenguang},
  journal={arXiv preprint arXiv:2412.06868},
  year={2024}
}

@article{zhang2025rethinking,
  title={Rethinking Parameter Sharing as Graph Coloring for Structured Compression},
  author={Zhang, Boyang and Cheng, Daning and Zhang, Yunquan},
  journal={arXiv preprint arXiv:2511.06786},
  year={2025}
}

@article{zhang2025moqe,
  title={MoQE: Improve Quantization Model performance via Mixture of Quantization Experts},
  author={Zhang, Jinhao and Zhang, Yunquan and Zhang, Boyang and Liu, Zeyu and Cheng, Daning},
  journal={arXiv preprint arXiv:2508.09204},
  year={2025}
}

@misc{zhang2026calmckaguidedadaptivelayerwise,
      title={CALM: A CKA-Guided Adaptive Layer-Wise Modularization Framework for LLM Quantization}, 
      author={Jinhao Zhang and Yunquan Zhang and Daning Chen and JunSun and Zicheng Yan},
      year={2026},
      eprint={2512.16282},
      archivePrefix={arXiv},
      primaryClass={cs.LG},
      url={https://arxiv.org/abs/2512.16282}, 
}

@article{lou2019autoq,
  title={Autoq: Automated kernel-wise neural network quantization},
  author={Lou, Qian and Guo, Feng and Liu, Lantao and Kim, Minje and Jiang, Lei},
  journal={arXiv preprint arXiv:1902.05690},
  year={2019}
}

@inproceedings{wang2019haq,
  title={Haq: Hardware-aware automated quantization with mixed precision},
  author={Wang, Kuan and Liu, Zhijian and Lin, Yujun and Lin, Ji and Han, Song},
  booktitle={Proceedings of the IEEE/CVF conference on computer vision and pattern recognition},
  pages={8612--8620},
  year={2019}
}

@inproceedings{cai2020zeroq,
  title={Zeroq: A novel zero shot quantization framework},
  author={Cai, Yaohui and Yao, Zhewei and Dong, Zhen and Gholami, Amir and Mahoney, Michael W and Keutzer, Kurt},
  booktitle={Proceedings of the IEEE/CVF conference on computer vision and pattern recognition},
  pages={13169--13178},
  year={2020}
}

@inproceedings{xu2024ptmq,
  title={Ptmq: Post-training multi-bit quantization of neural networks},
  author={Xu, Ke and Li, Zhongcheng and Wang, Shanshan and Zhang, Xingyi},
  booktitle={Proceedings of the AAAI Conference on Artificial Intelligence},
  volume={38},
  pages={16193--16201},
  year={2024}
}

@article{dong2020hawq,
  title={Hawq-v2: Hessian aware trace-weighted quantization of neural networks},
  author={Dong, Zhen and Yao, Zhewei and Arfeen, Daiyaan and Gholami, Amir and Mahoney, Michael W and Keutzer, Kurt},
  journal={Advances in neural information processing systems},
  volume={33},
  pages={18518--18529},
  year={2020}
}

@inproceedings{yao2021hawq,
  title={Hawq-v3: Dyadic neural network quantization},
  author={Yao, Zhewei and Dong, Zhen and Zheng, Zhangcheng and Gholami, Amir and Yu, Jiali and Tan, Eric and Wang, Leyuan and Huang, Qijing and Wang, Yida and Mahoney, Michael and others},
  booktitle={International Conference on Machine Learning},
  pages={11875--11886},
  year={2021},
  organization={PMLR}
}

@inproceedings{chen2021towards,
  title={Towards mixed-precision quantization of neural networks via constrained optimization},
  author={Chen, Weihan and Wang, Peisong and Cheng, Jian},
  booktitle={Proceedings of the IEEE/CVF International Conference on Computer Vision},
  pages={5350--5359},
  year={2021}
}

@inproceedings{tang2022mixed,
  title={Mixed-precision neural network quantization via learned layer-wise importance},
  author={Tang, Chen and Ouyang, Kai and Wang, Zhi and Zhu, Yifei and Ji, Wen and Wang, Yaowei and Zhu, Wenwu},
  booktitle={European conference on computer vision},
  pages={259--275},
  year={2022},
  organization={Springer}
}

@article{bai2023qwen,
  title={Qwen technical report},
  author={Bai, Jinze and Bai, Shuai and Chu, Yunfei and Cui, Zeyu and Dang, Kai and Deng, Xiaodong and Fan, Yang and Ge, Wenbin and Han, Yu and Huang, Fei and others},
  journal={arXiv preprint arXiv:2309.16609},
  year={2023}
}

@inproceedings{DBLP:conf/iclr/DosovitskiyB0WZ21,
  author       = {Alexey Dosovitskiy and
                  Lucas Beyer and
                  Alexander Kolesnikov and
                  Dirk Weissenborn and
                  Xiaohua Zhai and
                  Thomas Unterthiner and
                  Mostafa Dehghani and
                  Matthias Minderer and
                  Georg Heigold and
                  Sylvain Gelly and
                  Jakob Uszkoreit and
                  Neil Houlsby},
  title        = {An Image is Worth 16x16 Words: Transformers for Image Recognition
                  at Scale},
  booktitle    = {9th International Conference on Learning Representations, {ICLR} 2021,
                  Virtual Event, Austria, May 3-7, 2021},
  publisher    = {OpenReview.net},
  year         = {2021},
  url          = {https://openreview.net/forum?id=YicbFdNTTy},
  timestamp    = {Wed, 23 Jun 2021 17:36:39 +0200},
  biburl       = {https://dblp.org/rec/conf/iclr/DosovitskiyB0WZ21.bib},
  bibsource    = {dblp computer science bibliography, https://dblp.org}
}

@inproceedings{DBLP:conf/icml/TouvronCDMSJ21,
  author       = {Hugo Touvron and
                  Matthieu Cord and
                  Matthijs Douze and
                  Francisco Massa and
                  Alexandre Sablayrolles and
                  Herv{\'{e}} J{\'{e}}gou},
  editor       = {Marina Meila and
                  Tong Zhang},
  title        = {Training data-efficient image transformers {\&} distillation through
                  attention},
  booktitle    = {Proceedings of the 38th International Conference on Machine Learning,
                  {ICML} 2021, 18-24 July 2021, Virtual Event},
  series       = {Proceedings of Machine Learning Research},
  pages        = {10347--10357},
  publisher    = {{PMLR}},
  year         = {2021},
  url          = {http://proceedings.mlr.press/v139/touvron21a.html},
  timestamp    = {Wed, 25 Aug 2021 17:11:17 +0200},
  biburl       = {https://dblp.org/rec/conf/icml/TouvronCDMSJ21.bib},
  bibsource    = {dblp computer science bibliography, https://dblp.org}
}

@article{DBLP:journals/tmlr/0080ZGZ00ZZL0L25,
  author       = {Bo Li and
                  Yuanhan Zhang and
                  Dong Guo and
                  Renrui Zhang and
                  Feng Li and
                  Hao Zhang and
                  Kaichen Zhang and
                  Peiyuan Zhang and
                  Yanwei Li and
                  Ziwei Liu and
                  Chunyuan Li},
  title        = {LLaVA-OneVision: Easy Visual Task Transfer},
  journal      = {Trans. Mach. Learn. Res.},
  volume       = {2025},
  year         = {2025},
  url          = {https://openreview.net/forum?id=zKv8qULV6n},
  timestamp    = {Wed, 18 Jun 2025 17:08:21 +0200},
  biburl       = {https://dblp.org/rec/journals/tmlr/0080ZGZ00ZZL0L25.bib},
  bibsource    = {dblp computer science bibliography, https://dblp.org}
}

@inproceedings{DBLP:conf/acl/DaiDZXGCLZYWXLH24,
  author       = {Damai Dai and
                  Chengqi Deng and
                  Chenggang Zhao and
                  R. X. Xu and
                  Huazuo Gao and
                  Deli Chen and
                  Jiashi Li and
                  Wangding Zeng and
                  Xingkai Yu and
                  Y. Wu and
                  Zhenda Xie and
                  Y. K. Li and
                  Panpan Huang and
                  Fuli Luo and
                  Chong Ruan and
                  Zhifang Sui and
                  Wenfeng Liang},
  editor       = {Lun{-}Wei Ku and
                  Andre Martins and
                  Vivek Srikumar},
  title        = {DeepSeekMoE: Towards Ultimate Expert Specialization in Mixture-of-Experts
                  Language Models},
  booktitle    = {Proceedings of the 62nd Annual Meeting of the Association for Computational
                  Linguistics (Volume 1: Long Papers), {ACL} 2024, Bangkok, Thailand,
                  August 11-16, 2024},
  pages        = {1280--1297},
  publisher    = {Association for Computational Linguistics},
  year         = {2024},
  url          = {https://doi.org/10.18653/v1/2024.acl-long.70},
  doi          = {10.18653/V1/2024.ACL-LONG.70},
  timestamp    = {Sun, 19 Jan 2025 13:20:29 +0100},
  biburl       = {https://dblp.org/rec/conf/acl/DaiDZXGCLZYWXLH24.bib},
  bibsource    = {dblp computer science bibliography, https://dblp.org}
}

@inproceedings{DBLP:conf/iclr/HendrycksBBZMSS21,
  author       = {Dan Hendrycks and
                  Collin Burns and
                  Steven Basart and
                  Andy Zou and
                  Mantas Mazeika and
                  Dawn Song and
                  Jacob Steinhardt},
  title        = {Measuring Massive Multitask Language Understanding},
  booktitle    = {9th International Conference on Learning Representations, {ICLR} 2021,
                  Virtual Event, Austria, May 3-7, 2021},
  publisher    = {OpenReview.net},
  year         = {2021},
  url          = {https://openreview.net/forum?id=d7KBjmI3GmQ},
  timestamp    = {Wed, 23 Jun 2021 17:36:39 +0200},
  biburl       = {https://dblp.org/rec/conf/iclr/HendrycksBBZMSS21.bib},
  bibsource    = {dblp computer science bibliography, https://dblp.org}
}

@article{DBLP:journals/corr/abs-2110-14168,
  author       = {Karl Cobbe and
                  Vineet Kosaraju and
                  Mohammad Bavarian and
                  Mark Chen and
                  Heewoo Jun and
                  Lukasz Kaiser and
                  Matthias Plappert and
                  Jerry Tworek and
                  Jacob Hilton and
                  Reiichiro Nakano and
                  Christopher Hesse and
                  John Schulman},
  title        = {Training Verifiers to Solve Math Word Problems},
  journal      = {CoRR},
  volume       = {abs/2110.14168},
  year         = {2021},
  url          = {https://arxiv.org/abs/2110.14168},
  eprinttype   = {arXiv},
  eprint       = {2110.14168},
  timestamp    = {Mon, 12 Jun 2023 08:23:44 +0200},
  biburl       = {https://dblp.org/rec/journals/corr/abs-2110-14168.bib},
  bibsource    = {dblp computer science bibliography, https://dblp.org}
}

@inproceedings{DBLP:conf/acl/ZellersHBFC19,
  author       = {Rowan Zellers and
                  Ari Holtzman and
                  Yonatan Bisk and
                  Ali Farhadi and
                  Yejin Choi},
  editor       = {Anna Korhonen and
                  David R. Traum and
                  Llu{\'{\i}}s M{\`{a}}rquez},
  title        = {HellaSwag: Can a Machine Really Finish Your Sentence?},
  booktitle    = {Proceedings of the 57th Conference of the Association for Computational
                  Linguistics, {ACL} 2019, Florence, Italy, July 28- August 2, 2019,
                  Volume 1: Long Papers},
  pages        = {4791--4800},
  publisher    = {Association for Computational Linguistics},
  year         = {2019},
  url          = {https://doi.org/10.18653/v1/p19-1472},
  doi          = {10.18653/V1/P19-1472},
  timestamp    = {Sat, 29 Apr 2023 10:09:26 +0200},
  biburl       = {https://dblp.org/rec/conf/acl/ZellersHBFC19.bib},
  bibsource    = {dblp computer science bibliography, https://dblp.org}
}

@inproceedings{DBLP:conf/icml/SunLBBZLHY0Y0LY25,
  author       = {Yuxuan Sun and
                  Ruikang Liu and
                  Haoli Bai and
                  Han Bao and
                  Kang Zhao and
                  Yuening Li and
                  Jiaxin Hu and
                  Xianzhi Yu and
                  Lu Hou and
                  Chun Yuan and
                  Xin Jiang and
                  Wulong Liu and
                  Jun Yao},
  editor       = {Aarti Singh and
                  Maryam Fazel and
                  Daniel Hsu and
                  Simon Lacoste{-}Julien and
                  Felix Berkenkamp and
                  Tegan Maharaj and
                  Kiri Wagstaff and
                  Jerry Zhu},
  title        = {FlatQuant: Flatness Matters for {LLM} Quantization},
  booktitle    = {Forty-second International Conference on Machine Learning, {ICML}
                  2025, Vancouver, BC, Canada, July 13-19, 2025},
  series       = {Proceedings of Machine Learning Research},
  publisher    = {{PMLR} / OpenReview.net},
  year         = {2025},
  url          = {https://proceedings.mlr.press/v267/sun25l.html},
  timestamp    = {Wed, 04 Feb 2026 17:22:45 +0100},
  biburl       = {https://dblp.org/rec/conf/icml/SunLBBZLHY0Y0LY25.bib},
  bibsource    = {dblp computer science bibliography, https://dblp.org}
}

@article{DBLP:journals/corr/abs-2208-07339,
  author       = {Tim Dettmers and
                  Mike Lewis and
                  Younes Belkada and
                  Luke Zettlemoyer},
  title        = {LLM.int8(): 8-bit Matrix Multiplication for Transformers at Scale},
  journal      = {CoRR},
  volume       = {abs/2208.07339},
  year         = {2022},
  url          = {https://doi.org/10.48550/arXiv.2208.07339},
  doi          = {10.48550/ARXIV.2208.07339},
  eprinttype   = {arXiv},
  eprint       = {2208.07339},
  timestamp    = {Wed, 17 Aug 2022 09:42:52 +0200},
  biburl       = {https://dblp.org/rec/journals/corr/abs-2208-07339.bib},
  bibsource    = {dblp computer science bibliography, https://dblp.org}
}

@inproceedings{DBLP:conf/ijcai/LinZSLZ22,
  author       = {Yang Lin and
                  Tianyu Zhang and
                  Peiqin Sun and
                  Zheng Li and
                  Shuchang Zhou},
  editor       = {Luc De Raedt},
  title        = {FQ-ViT: Post-Training Quantization for Fully Quantized Vision Transformer},
  booktitle    = {Proceedings of the Thirty-First International Joint Conference on
                  Artificial Intelligence, {IJCAI} 2022, Vienna, Austria, 23-29 July
                  2022},
  pages        = {1173--1179},
  publisher    = {ijcai.org},
  year         = {2022},
  url          = {https://doi.org/10.24963/ijcai.2022/164},
  doi          = {10.24963/IJCAI.2022/164},
  timestamp    = {Tue, 15 Oct 2024 16:43:28 +0200},
  biburl       = {https://dblp.org/rec/conf/ijcai/LinZSLZ22.bib},
  bibsource    = {dblp computer science bibliography, https://dblp.org}
}

@inproceedings{DBLP:conf/eccv/YuanXCWS22,
  author       = {Zhihang Yuan and
                  Chenhao Xue and
                  Yiqi Chen and
                  Qiang Wu and
                  Guangyu Sun},
  editor       = {Shai Avidan and
                  Gabriel J. Brostow and
                  Moustapha Ciss{\'{e}} and
                  Giovanni Maria Farinella and
                  Tal Hassner},
  title        = {PTQ4ViT: Post-training Quantization for Vision Transformers with Twin
                  Uniform Quantization},
  booktitle    = {Computer Vision - {ECCV} 2022 - 17th European Conference, Tel Aviv,
                  Israel, October 23-27, 2022, Proceedings, Part {XII}},
  series       = {Lecture Notes in Computer Science},
  pages        = {191--207},
  publisher    = {Springer},
  year         = {2022},
  url          = {https://doi.org/10.1007/978-3-031-19775-8\_12},
  doi          = {10.1007/978-3-031-19775-8\_12},
  timestamp    = {Mon, 15 Sep 2025 07:03:00 +0200},
  biburl       = {https://dblp.org/rec/conf/eccv/YuanXCWS22.bib},
  bibsource    = {dblp computer science bibliography, https://dblp.org}
}

@inproceedings{DBLP:conf/mm/DingQYCLWL22,
  author       = {Yifu Ding and
                  Haotong Qin and
                  Qinghua Yan and
                  Zhenhua Chai and
                  Junjie Liu and
                  Xiaolin Wei and
                  Xianglong Liu},
  editor       = {Jo{\~{a}}o Magalh{\~{a}}es and
                  Alberto Del Bimbo and
                  Shin'ichi Satoh and
                  Nicu Sebe and
                  Xavier Alameda{-}Pineda and
                  Qin Jin and
                  Vincent Oria and
                  Laura Toni},
  title        = {Towards Accurate Post-Training Quantization for Vision Transformer},
  booktitle    = {{MM} '22: The 30th {ACM} International Conference on Multimedia, Lisboa,
                  Portugal, October 10 - 14, 2022},
  pages        = {5380--5388},
  publisher    = {{ACM}},
  year         = {2022},
  url          = {https://doi.org/10.1145/3503161.3547826},
  doi          = {10.1145/3503161.3547826},
  timestamp    = {Tue, 20 Jan 2026 18:56:34 +0100},
  biburl       = {https://dblp.org/rec/conf/mm/DingQYCLWL22.bib},
  bibsource    = {dblp computer science bibliography, https://dblp.org}
}

@inproceedings{DBLP:conf/iccv/LiXYG23,
  author       = {Zhikai Li and
                  Junrui Xiao and
                  Lianwei Yang and
                  Qingyi Gu},
  title        = {RepQ-ViT: Scale Reparameterization for Post-Training Quantization
                  of Vision Transformers},
  booktitle    = {{IEEE/CVF} International Conference on Computer Vision, {ICCV} 2023,
                  Paris, France, October 1-6, 2023},
  pages        = {17181--17190},
  publisher    = {{IEEE}},
  year         = {2023},
  url          = {https://doi.org/10.1109/ICCV51070.2023.01580},
  doi          = {10.1109/ICCV51070.2023.01580},
  timestamp    = {Sun, 06 Oct 2024 21:04:46 +0200},
  biburl       = {https://dblp.org/rec/conf/iccv/LiXYG23.bib},
  bibsource    = {dblp computer science bibliography, https://dblp.org}
}

@inproceedings{DBLP:conf/cvpr/LiHNLHJLYRDYY025,
  author       = {Shiyao Li and
                  Yingchun Hu and
                  Xuefei Ning and
                  Xihui Liu and
                  Ke Hong and
                  Xiaotao Jia and
                  Xiuhong Li and
                  Yaqi Yan and
                  Pei Ran and
                  Guohao Dai and
                  Shengen Yan and
                  Huazhong Yang and
                  Yu Wang},
  title        = {{MBQ:} Modality-Balanced Quantization for Large Vision-Language Models},
  booktitle    = {{IEEE/CVF} Conference on Computer Vision and Pattern Recognition,
                  {CVPR} 2025, Nashville, TN, USA, June 11-15, 2025},
  pages        = {4167--4177},
  publisher    = {Computer Vision Foundation / {IEEE}},
  year         = {2025},
  url          = {https://openaccess.thecvf.com/content/CVPR2025/html/Li\_MBQ\_Modality-Balanced\_Quantization\_for\_Large\_Vision-Language\_Models\_CVPR\_2025\_paper.html},
  doi          = {10.1109/CVPR52734.2025.00394},
  timestamp    = {Tue, 10 Feb 2026 15:42:19 +0100},
  biburl       = {https://dblp.org/rec/conf/cvpr/LiHNLHJLYRDYY025.bib},
  bibsource    = {dblp computer science bibliography, https://dblp.org}
}

@inproceedings{DBLP:conf/icml/Chen0YXXYZY25,
  author       = {Zhixuan Chen and
                  Xing Hu and
                  Dawei Yang and
                  Zukang Xu and
                  Chen Xu and
                  Zhihang Yuan and
                  Sifan Zhou and
                  Jiangyong Yu},
  editor       = {Aarti Singh and
                  Maryam Fazel and
                  Daniel Hsu and
                  Simon Lacoste{-}Julien and
                  Felix Berkenkamp and
                  Tegan Maharaj and
                  Kiri Wagstaff and
                  Jerry Zhu},
  title        = {MoEQuant: Enhancing Quantization for Mixture-of-Experts Large Language
                  Models via Expert-Balanced Sampling and Affinity Guidance},
  booktitle    = {Forty-second International Conference on Machine Learning, {ICML}
                  2025, Vancouver, BC, Canada, July 13-19, 2025},
  series       = {Proceedings of Machine Learning Research},
  publisher    = {{PMLR} / OpenReview.net},
  year         = {2025},
  url          = {https://proceedings.mlr.press/v267/chen25aa.html},
  timestamp    = {Wed, 04 Feb 2026 17:22:46 +0100},
  biburl       = {https://dblp.org/rec/conf/icml/Chen0YXXYZY25.bib},
  bibsource    = {dblp computer science bibliography, https://dblp.org}
}

@article{DBLP:journals/corr/abs-2407-21783,
  author       = {Llama Team},
  title        = {The Llama 3 Herd of Models},
  journal      = {CoRR},
  volume       = {abs/2407.21783},
  year         = {2024},
  url          = {https://doi.org/10.48550/arXiv.2407.21783},
  doi          = {10.48550/ARXIV.2407.21783},
  eprinttype   = {arXiv},
  eprint       = {2407.21783},
  timestamp    = {Sun, 02 Nov 2025 21:28:04 +0100},
  biburl       = {https://dblp.org/rec/journals/corr/abs-2407-21783.bib},
  bibsource    = {dblp computer science bibliography, https://dblp.org}
}

@inproceedings{DBLP:conf/nips/ZhangCDL0L24,
  author       = {Yushun Zhang and
                  Congliang Chen and
                  Tian Ding and
                  Ziniu Li and
                  Ruoyu Sun and
                  Zhi{-}Quan Luo},
  editor       = {Amir Globersons and
                  Lester Mackey and
                  Danielle Belgrave and
                  Angela Fan and
                  Ulrich Paquet and
                  Jakub M. Tomczak and
                  Cheng Zhang},
  title        = {Why Transformers Need Adam: {A} Hessian Perspective},
  booktitle    = {Advances in Neural Information Processing Systems 38: Annual Conference
                  on Neural Information Processing Systems 2024, NeurIPS 2024, Vancouver,
                  BC, Canada, December 10 - 15, 2024},
  year         = {2024},
  url          = {http://papers.nips.cc/paper\_files/paper/2024/hash/ee0e45ff4de76cbfdf07015a7839f339-Abstract-Conference.html},
  timestamp    = {Thu, 13 Feb 2025 16:56:45 +0100},
  biburl       = {https://dblp.org/rec/conf/nips/ZhangCDL0L24.bib},
  bibsource    = {dblp computer science bibliography, https://dblp.org}
}

@article{DBLP:journals/corr/abs-2502-15802,
  author       = {Boyang Zhang and
                  Daning Cheng and
                  Yunquan Zhang and
                  Meiqi Tu and
                  Fangmin Liu and
                  Jiake Tian},
  title        = {A General Error-Theoretical Analysis Framework for Constructing Compression
                  Strategies},
  journal      = {CoRR},
  volume       = {abs/2502.15802},
  year         = {2025},
  url          = {https://doi.org/10.48550/arXiv.2502.15802},
  doi          = {10.48550/ARXIV.2502.15802},
  eprinttype   = {arXiv},
  eprint       = {2502.15802},
  timestamp    = {Thu, 20 Mar 2025 13:28:44 +0100},
  biburl       = {https://dblp.org/rec/journals/corr/abs-2502-15802.bib},
  bibsource    = {dblp computer science bibliography, https://dblp.org}
}

@article{DBLP:journals/corr/abs-2304-01089,
  author       = {Zhihang Yuan and
                  Lin Niu and
                  Jiawei Liu and
                  Wenyu Liu and
                  Xinggang Wang and
                  Yuzhang Shang and
                  Guangyu Sun and
                  Qiang Wu and
                  Jiaxiang Wu and
                  Bingzhe Wu},
  title        = {{RPTQ:} Reorder-based Post-training Quantization for Large Language
                  Models},
  journal      = {CoRR},
  volume       = {abs/2304.01089},
  year         = {2023},
  url          = {https://doi.org/10.48550/arXiv.2304.01089},
  doi          = {10.48550/ARXIV.2304.01089},
  eprinttype   = {arXiv},
  eprint       = {2304.01089},
  timestamp    = {Tue, 24 Mar 2026 08:43:57 +0100},
  biburl       = {https://dblp.org/rec/journals/corr/abs-2304-01089.bib},
  bibsource    = {dblp computer science bibliography, https://dblp.org}
}

@inproceedings{DBLP:conf/nips/HeNPSH24,
  author       = {Bobby He and
                  Lorenzo Noci and
                  Daniele Paliotta and
                  Imanol Schlag and
                  Thomas Hofmann},
  editor       = {Amir Globersons and
                  Lester Mackey and
                  Danielle Belgrave and
                  Angela Fan and
                  Ulrich Paquet and
                  Jakub M. Tomczak and
                  Cheng Zhang},
  title        = {Understanding and Minimising Outlier Features in Transformer Training},
  booktitle    = {Advances in Neural Information Processing Systems 38: Annual Conference
                  on Neural Information Processing Systems 2024, NeurIPS 2024, Vancouver,
                  BC, Canada, December 10 - 15, 2024},
  year         = {2024},
  url          = {http://papers.nips.cc/paper\_files/paper/2024/hash/986292a930c3692168b177a770025ab3-Abstract-Conference.html},
  timestamp    = {Fri, 05 Sep 2025 11:18:49 +0200},
  biburl       = {https://dblp.org/rec/conf/nips/HeNPSH24.bib},
  bibsource    = {dblp computer science bibliography, https://dblp.org}
}

@inproceedings{DBLP:conf/iclr/An000W25,
  author       = {Yongqi An and
                  Xu Zhao and
                  Tao Yu and
                  Ming Tang and
                  Jinqiao Wang},
  title        = {Systematic Outliers in Large Language Models},
  booktitle    = {The Thirteenth International Conference on Learning Representations,
                  {ICLR} 2025, Singapore, April 24-28, 2025},
  publisher    = {OpenReview.net},
  year         = {2025},
  url          = {https://openreview.net/forum?id=rLX7Vyyzus},
  timestamp    = {Thu, 15 May 2025 17:19:06 +0200},
  biburl       = {https://dblp.org/rec/conf/iclr/An000W25.bib},
  bibsource    = {dblp computer science bibliography, https://dblp.org}
}

@inproceedings{DBLP:conf/icml/0001GC00XX25,
  author       = {Qian{-}Yuan Tang and
                  Yufei Gu and
                  Yunfeng Cai and
                  Mingming Sun and
                  Ping Li and
                  Zhou Xun and
                  Zeke Xie},
  editor       = {Aarti Singh and
                  Maryam Fazel and
                  Daniel Hsu and
                  Simon Lacoste{-}Julien and
                  Felix Berkenkamp and
                  Tegan Maharaj and
                  Kiri Wagstaff and
                  Jerry Zhu},
  title        = {Investigating the Overlooked Hessian Structure: From CNNs to LLMs},
  booktitle    = {Forty-second International Conference on Machine Learning, {ICML}
                  2025, Vancouver, BC, Canada, July 13-19, 2025},
  series       = {Proceedings of Machine Learning Research},
  publisher    = {{PMLR} / OpenReview.net},
  year         = {2025},
  url          = {https://proceedings.mlr.press/v267/tang25d.html},
  timestamp    = {Wed, 04 Feb 2026 17:22:45 +0100},
  biburl       = {https://dblp.org/rec/conf/icml/0001GC00XX25.bib},
  bibsource    = {dblp computer science bibliography, https://dblp.org}
}

@inproceedings{DBLP:conf/iclr/0010CYCXYXY0Z25,
  author       = {Xing Hu and
                  Yuan Cheng and
                  Dawei Yang and
                  Zhixuan Chen and
                  Zukang Xu and
                  Jiangyong Yu and
                  Chen Xu and
                  Zhihang Yuan and
                  Zhe Jiang and
                  Sifan Zhou},
  title        = {OSTQuant: Refining Large Language Model Quantization with Orthogonal
                  and Scaling Transformations for Better Distribution Fitting},
  booktitle    = {The Thirteenth International Conference on Learning Representations,
                  {ICLR} 2025, Singapore, April 24-28, 2025},
  publisher    = {OpenReview.net},
  year         = {2025},
  url          = {https://openreview.net/forum?id=rAcgDBdKnP},
  timestamp    = {Fri, 11 Jul 2025 14:14:43 +0200},
  biburl       = {https://dblp.org/rec/conf/iclr/0010CYCXYXY0Z25.bib},
  bibsource    = {dblp computer science bibliography, https://dblp.org}
}

@article{DBLP:journals/siamcomp/AilonC09,
  author       = {Nir Ailon and
                  Bernard Chazelle},
  title        = {The Fast Johnson--Lindenstrauss Transform and Approximate Nearest
                  Neighbors},
  journal      = {{SIAM} J. Comput.},
  volume       = {39},
  number       = {1},
  pages        = {302--322},
  year         = {2009},
  url          = {https://doi.org/10.1137/060673096},
  doi          = {10.1137/060673096},
  timestamp    = {Wed, 14 Nov 2018 10:45:07 +0100},
  biburl       = {https://dblp.org/rec/journals/siamcomp/AilonC09.bib},
  bibsource    = {dblp computer science bibliography, https://dblp.org}
}

@misc{eval-harness,
  author       = {Gao, Leo and Tow, Jonathan and Abbasi, Baber and Biderman, Stella and Black, Sid and DiPofi, Anthony and Foster, Charles and Golding, Laurence and Hsu, Jeffrey and Le Noac'h, Alain and Li, Haonan and McDonell, Kyle and Muennighoff, Niklas and Ociepa, Chris and Phang, Jason and Reynolds, Laria and Schoelkopf, Hailey and Skowron, Aviya and Sutawika, Lintang and Tang, Eric and Thite, Anish and Wang, Ben and Wang, Kevin and Zou, Andy},
  title        = {The Language Model Evaluation Harness},
  month        = 07,
  year         = 2024,
  publisher    = {Zenodo},
  version      = {v0.4.3},
  doi          = {10.5281/zenodo.12608602},
  url          = {https://zenodo.org/records/12608602}
}

@inproceedings{DBLP:conf/iclr/MerityX0S17,
  author       = {Stephen Merity and
                  Caiming Xiong and
                  James Bradbury and
                  Richard Socher},
  title        = {Pointer Sentinel Mixture Models},
  booktitle    = {5th International Conference on Learning Representations, {ICLR} 2017,
                  Toulon, France, April 24-26, 2017, Conference Track Proceedings},
  publisher    = {OpenReview.net},
  year         = {2017},
  url          = {https://openreview.net/forum?id=Byj72udxe},
  timestamp    = {Thu, 25 Jul 2019 14:25:57 +0200},
  biburl       = {https://dblp.org/rec/conf/iclr/MerityX0S17.bib},
  bibsource    = {dblp computer science bibliography, https://dblp.org}
}

@article{DBLP:journals/jmlr/RaffelSRLNMZLL20,
  author       = {Colin Raffel and
                  Noam Shazeer and
                  Adam Roberts and
                  Katherine Lee and
                  Sharan Narang and
                  Michael Matena and
                  Yanqi Zhou and
                  Wei Li and
                  Peter J. Liu},
  title        = {Exploring the Limits of Transfer Learning with a Unified Text-to-Text
                  Transformer},
  journal      = {J. Mach. Learn. Res.},
  volume       = {21},
  pages        = {140:1--140:67},
  year         = {2020},
  url          = {https://jmlr.org/papers/v21/20-074.html},
  timestamp    = {Wed, 11 Sep 2024 14:41:27 +0200},
  biburl       = {https://dblp.org/rec/journals/jmlr/RaffelSRLNMZLL20.bib},
  bibsource    = {dblp computer science bibliography, https://dblp.org}
}

@inproceedings{liu2024vptq,
  title={Vptq: Extreme low-bit vector post-training quantization for large language models},
  author={Liu, Yifei and Wen, Jicheng and Wang, Yang and Ye, Shengyu and Zhang, Li Lyna and Cao, Ting and Li, Cheng and Yang, Mao},
  booktitle={Proceedings of the 2024 Conference on Empirical Methods in Natural Language Processing},
  pages={8181--8196},
  year={2024}
}

@inproceedings{park2025unifying,
  title={Unifying uniform and binary-coding quantization for accurate compression of large language models},
  author={Park, Seungcheol and Bae, Jeongin and Kwon, Beomseok and Kim, Minjun and Kim, Byeongwook and Kwon, Se Jung and Kang, U and Lee, Dongsoo},
  booktitle={Proceedings of the 63rd Annual Meeting of the Association for Computational Linguistics (Volume 1: Long Papers)},
  pages={28468--28488},
  year={2025}
}

@inproceedings{DBLP:conf/nips/ZhangS19a,
  author       = {Biao Zhang and
                  Rico Sennrich},
  editor       = {Hanna M. Wallach and
                  Hugo Larochelle and
                  Alina Beygelzimer and
                  Florence d'Alch{\'{e}}{-}Buc and
                  Emily B. Fox and
                  Roman Garnett},
  title        = {Root Mean Square Layer Normalization},
  booktitle    = {Advances in Neural Information Processing Systems 32: Annual Conference
                  on Neural Information Processing Systems 2019, NeurIPS 2019, December
                  8-14, 2019, Vancouver, BC, Canada},
  pages        = {12360--12371},
  year         = {2019},
  url          = {https://proceedings.neurips.cc/paper/2019/hash/1e8a19426224ca89e83cef47f1e7f53b-Abstract.html},
  timestamp    = {Fri, 21 Oct 2022 14:36:34 +0200},
  biburl       = {https://dblp.org/rec/conf/nips/ZhangS19a.bib},
  bibsource    = {dblp computer science bibliography, https://dblp.org}
}

@article{ba2016layer,
  title={Layer normalization},
  author={Ba, Jimmy Lei and Kiros, Jamie Ryan and Hinton, Geoffrey E},
  journal={arXiv preprint arXiv:1607.06450},
  year={2016}
}

@article{sagun2017empirical,
  title={Empirical analysis of the hessian of over-parametrized neural networks},
  author={Sagun, Levent and Evci, Utku and Guney, V Ugur and Dauphin, Yann and Bottou, Leon},
  journal={arXiv preprint arXiv:1706.04454},
  year={2017}
}

@inproceedings{sankar2021deeper,
  title={A deeper look at the hessian eigenspectrum of deep neural networks and its applications to regularization},
  author={Sankar, Adepu Ravi and Khasbage, Yash and Vigneswaran, Rahul and Balasubramanian, Vineeth N},
  booktitle={Proceedings of the AAAI Conference on Artificial Intelligence},
  volume={35},
  pages={9481--9488},
  year={2021}
}

@article{papyan2018full,
  title={The full spectrum of deepnet hessians at scale: Dynamics with sgd training and sample size},
  author={Papyan, Vardan},
  journal={arXiv preprint arXiv:1811.07062},
  year={2018}
}

@article{li2020efficient,
  title={Efficient riemannian optimization on the stiefel manifold via the cayley transform},
  author={Li, Jun and Fuxin, Li and Todorovic, Sinisa},
  journal={arXiv preprint arXiv:2002.01113},
  year={2020}
}

@article{DBLP:journals/corr/abs-2412-15115,
  author       = {An Yang and
                  Baosong Yang and
                  Beichen Zhang and
                  Binyuan Hui and
                  Bo Zheng and
                  Bowen Yu and
                  Chengyuan Li and
                  Dayiheng Liu and
                  Fei Huang and
                  Haoran Wei and
                  Huan Lin and
                  Jian Yang and
                  Jianhong Tu and
                  Jianwei Zhang and
                  Jianxin Yang and
                  Jiaxi Yang and
                  Jingren Zhou and
                  Junyang Lin and
                  Kai Dang and
                  Keming Lu and
                  Keqin Bao and
                  Kexin Yang and
                  Le Yu and
                  Mei Li and
                  Mingfeng Xue and
                  Pei Zhang and
                  Qin Zhu and
                  Rui Men and
                  Runji Lin and
                  Tianhao Li and
                  Tingyu Xia and
                  Xingzhang Ren and
                  Xuancheng Ren and
                  Yang Fan and
                  Yang Su and
                  Yichang Zhang and
                  Yu Wan and
                  Yuqiong Liu and
                  Zeyu Cui and
                  Zhenru Zhang and
                  Zihan Qiu},
  title        = {Qwen2.5 Technical Report},
  journal      = {CoRR},
  volume       = {abs/2412.15115},
  year         = {2024},
  url          = {https://doi.org/10.48550/arXiv.2412.15115},
  doi          = {10.48550/ARXIV.2412.15115},
  eprinttype   = {arXiv},
  eprint       = {2412.15115},
  timestamp    = {Sat, 15 Nov 2025 16:38:56 +0100},
  biburl       = {https://dblp.org/rec/journals/corr/abs-2412-15115.bib},
  bibsource    = {dblp computer science bibliography, https://dblp.org}
}

@misc{llama32_modelcard,
  title        = {{Llama 3.2}: Model Cards and Prompt Formats},
  author       = {{Meta AI}},
  year         = {2024},
  howpublished = {\url{https://www.llama.com/docs/model-cards-and-prompt-formats/llama3_2/}},
  note         = {Accessed: 2026-05-07}
}

@article{ashkboos2024quarot,
  title={Quarot: Outlier-free 4-bit inference in rotated llms},
  author={Ashkboos, Saleh and Mohtashami, Amirkeivan and Croci, Maximilian L and Li, Bo and Cameron, Pashmina and Jaggi, Martin and Alistarh, Dan and Hoefler, Torsten and Hensman, James},
  journal={Advances in Neural Information Processing Systems},
  volume={37},
  pages={100213--100240},
  year={2024}
}

%%%%%%%%%%%%%%%%%%%%%%%%%%%%%%%%%%%%%%%%%%%%%%%%%%%%%%%%%%%%%%%%%%%%%%%%%%%%%%%
%%%%%%%%%%%%%%%%%%%%%%%%%%%%%%%%%%%%%%%%%%%%%%%%%%%%%%%%%%%%%%%%%%%%%%%%%%%%%%%
% APPENDIX
%%%%%%%%%%%%%%%%%%%%%%%%%%%%%%%%%%%%%%%%%%%%%%%%%%%%%%%%%%%%%%%%%%%%%%%%%%%%%%%
%%%%%%%%%%%%%%%%%%%%%%%%%%%%%%%%%%%%%%%%%%%%%%%%%%%%%%%%%%%%%%%%%%%%%%%%%%%%%%%
\newpage
\appendix
%%%%%%%%%%%%%%%%%%%%%%%%%%%%%%%%%%%%%%%%%%%%%%%%%%%%%%%%%%%%%%%%%%%%%%%%%%%%%%%
%%%%%%%%%%%%%%%%%%%%%%%%%%%%%%%%%%%%%%%%%%%%%%%%%%%%%%%%%%%%%%%%%%%%%%%%%%%%%%%
\section{Appendix}
\subsection{Code}
\label{code}
\url{https://anonymous.4open.science/r/HeRo-Q-3775}
\subsection{HeRo-Q Hyperparameters List}
\label{app:hyperparameters}

To ensure the reproducibility of our experiments, we provide a detailed list of the hyperparameters used in the Smoothing Factor Search and Rotation Optimization phases of HeRo-Q in Table \ref{tab:hyperparameters}. All optimization processes were conducted on an NVIDIA A800 GPU.
\begin{table}[h]
    \centering
    \begin{tabular}{l|c|l}
    \toprule
    \textbf{Category} & \textbf{Hyperparameter} & \textbf{Value / Setting} \\
    \midrule
    \multirow{2}{*}{\textbf{Smoothing Search}} & Search Grid $\mathcal{A}$ & $\{0, 0.1, 0.2, \dots, 0.8\}$ \\
    & Search Metric & MSE of layer output \\
    \midrule
    \multirow{5}{*}{\textbf{Rotation Optimization}} & Optimizer & Cayley SGD \\
    & Learning Rate ($\eta$) & $0.01$  \\
    & Optimization Steps ($T_{steps}$) & $200$ \\
    & Momentum & $0.9$ \\
    & Batch Size & $32$ \\
    \midrule
    \multirow{3}{*}{\textbf{Calibration Data}} & Dataset & C4 \\
    & Number of Samples & $128$ \\
    & Sequence Length & $2048$ \\
    \midrule
    \end{tabular}
    \caption{List of hyperparameters used in HeRo-Q optimization and quantization.}
    \label{tab:hyperparameters}
\end{table}
For the smoothing factor $\alpha$, we employ a grid search strategy, scanning the interval $[0, 0.8]$ with a step size of $0.1$. For the optimization of the rotation matrix $\mathbf{R}$, we utilize the Cayley SGD algorithm, performing iterative updates for a fixed number of steps on the calibration data. Unless otherwise specified, the C4 dataset is used for calibration across all experiments.

\subsection{Novelty and Non-reducibility of HeRo-Q}
\label{app:novelty}

The novelty of HeRo-Q does not lie in the isolated use of smoothing or rotation.
Instead, HeRo-Q introduces a unified Hessian-conditioned formulation for designing these transformations.
In this formulation, smoothing and rotation are not two independent post-hoc modules.
They are coupled through the same effective Hessian geometry and are optimized to reduce the curvature-weighted loss consequence of quantization perturbations.
This makes HeRo-Q fundamentally different from activation-statistics-based scaling, distribution-oriented transformation, or generic quantization-friendly coordinate search.

\paragraph{Formulation-level distinction.}
HeRo-Q differs from prior transformation-based PTQ methods at the level of problem formulation, rather than merely at the level of transformation design.
Most existing methods aim to make weights or activations easier to quantize by reducing error magnitude, suppressing activation outliers, flattening distributions, or improving coordinate quantizability.
In contrast, HeRo-Q starts from the low-error/high-loss phenomenon: a perturbation with small Euclidean norm can still induce severe degradation when it is aligned with high-curvature Hessian directions.
Therefore, HeRo-Q does not optimize the representation quality of the quantization error alone, but directly targets its curvature-weighted loss consequence under the local second-order approximation:
\begin{equation}
    \Delta \mathcal{L}
    \approx
    \frac{1}{2} e^\top H e .
    \label{eq:app_local_second_order}
\end{equation}
Eq.~\ref{eq:app_local_second_order} shows that quantization-induced degradation depends not only on the magnitude of the quantization error $e$, but also on its direction under the Hessian geometry.
This changes the central objective of transformation-based PTQ from ``making quantization easier'' to ``making quantization perturbations less harmful under Hessian geometry.''

\paragraph{Hessian-guided smoothing is not activation-statistics-based scaling.}
The smoothing component of HeRo-Q is mathematically different from scaling rules derived from activation or weight magnitudes.
For example, SmoothQuant constructs channel-wise scales from activation and weight magnitudes, with the goal of mitigating activation-outlier-induced quantization difficulty.
HeRo-Q instead constructs the smoothing matrix from a Hessian proxy:
\begin{equation}
    D_{\alpha}
    =
    \operatorname{diag}
    \left(
    |h_1|^{\alpha/2},
    |h_2|^{\alpha/2},
    \ldots,
    |h_d|^{\alpha/2}
    \right),
    \qquad
    h=\operatorname{diag}(H_{\mathrm{proxy}}).
    \label{eq:app_hessian_smoothing}
\end{equation}
Thus, the scale in HeRo-Q is determined by curvature sensitivity rather than by activation or weight statistics.

To see the effect of this operation, consider the smoothing-only case:
\begin{equation}
    T = D_{\alpha}^{-1}.
    \label{eq:app_smoothing_only_transform}
\end{equation}
The effective Hessian in the transformed coordinate system is then
\begin{equation}
    \tilde H
    =
    T^\top H T
    =
    D_{\alpha}^{-1} H D_{\alpha}^{-1}.
    \label{eq:app_effective_hessian_smoothing}
\end{equation}
For the diagonal entries, Eq.~\ref{eq:app_effective_hessian_smoothing} gives
\begin{equation}
    |\tilde H_{ii}|
    =
    |H_{ii}|\, |h_i|^{-\alpha}.
    \label{eq:app_diagonal_general}
\end{equation}
When the diagonal Hessian proxy satisfies $h_i \approx H_{ii}$, Eq.~\ref{eq:app_diagonal_general} becomes
\begin{equation}
    |\tilde H_{ii}|
    \approx
    |H_{ii}|^{1-\alpha}.
    \label{eq:app_diagonal_compression}
\end{equation}
Eq.~\ref{eq:app_diagonal_compression} shows that large curvature entries are compressed more strongly than small curvature entries when $\alpha>0$.
Therefore, the role of smoothing in HeRo-Q is not to rescale numerical magnitudes, but to act as a Hessian-derived spectral preconditioner that suppresses dominant effective-curvature directions.

\paragraph{Curvature-aware rotation is not generic coordinate rotation.}
The rotation component of HeRo-Q also differs from a generic quantization-friendly rotation.
A standard rotation method mainly seeks a coordinate system in which transformed weights are easier to quantize.
HeRo-Q instead learns the rotation under the effective Hessian geometry induced by the smoothing transform.
Specifically, HeRo-Q forms the joint transformation as
\begin{equation}
    T
    =
    D_{\alpha}^{-1} R,
    \qquad
    R^\top R = I .
    \label{eq:app_joint_transform}
\end{equation}
Under this transformation, the effective Hessian becomes
\begin{equation}
    \tilde H
    =
    T^\top H T
    =
    R^\top D_{\alpha}^{-1} H D_{\alpha}^{-1} R .
    \label{eq:app_joint_effective_hessian}
\end{equation}
The purpose of $R$ is to reduce the alignment between transformed-space rounding errors and high-curvature directions of $\tilde H$, rather than only to improve coordinate-wise quantizability.
Therefore, smoothing and rotation play complementary geometric roles:
smoothing reshapes the effective curvature, while rotation redistributes quantization noise within this Hessian-conditioned geometry.

\paragraph{Why HeRo-Q is not SmoothQuant + SpinQuant.}
A direct composition of SmoothQuant and SpinQuant sequentially applies two useful transformation primitives.
However, the two transformations are not derived from a shared curvature-sensitive objective.
SmoothQuant primarily addresses activation outliers through channel-wise scaling, while SpinQuant primarily searches for quantization-friendly rotated coordinates.
In contrast, HeRo-Q couples Hessian-guided smoothing and learned rotation through the same effective Hessian geometry.
The smoothing matrix determines the curvature-conditioned coordinate system, and the rotation is optimized within this coordinate system to reduce the loss sensitivity of quantization noise.
This coupling cannot be reduced to independently applying activation-statistics-based scaling followed by generic coordinate rotation.
Empirically, Table~\ref{tab:smooth_spin_composition} and Table~\ref{tab:ablation_study} jointly show that a direct SmoothQuant+SpinQuant composition improves over individual transformation components, but still underperforms HeRo-Q.
This indicates that HeRo-Q is not merely a sequential combination of existing smoothing and rotation techniques; its improvement comes from designing both transformations under a unified Hessian-conditioned objective that directly reduces the curvature-weighted loss effect of quantization perturbations.

Table~\ref{tab:formulation_comparison} summarizes the formulation-level distinction.
Prior transformation-based PTQ methods usually choose transformations according to activation statistics, distributional properties, or quantization simulation.
In contrast, HeRo-Q constructs the transformation from Hessian geometry and directly targets the curvature-weighted loss effect of quantization perturbations.

\begin{table}[t]
\centering
\scriptsize
\renewcommand{\arraystretch}{1.15}
\setlength{\tabcolsep}{3.0pt}
\resizebox{\linewidth}{!}{
\begin{tabular}{lccc}
\toprule
\cellcolor{gray!10}\textbf{Method}
& \cellcolor{gray!10}\textbf{Transform}
& \cellcolor{gray!10}\textbf{Design Signal}
& \cellcolor{gray!10}\textbf{Optimization Target} \\
\midrule
SmoothQuant 
& Scaling 
& Activation/weight magnitude 
& Reduce activation-outlier quantization difficulty \\

SpinQuant 
& Rotation 
& Quantization simulation 
& Reduce coordinate-wise quantization error \\

FlatQuant 
& Affine transform 
& Distribution statistics 
& Make transformed distributions easier to quantize \\

OSTQuant 
& Scaling + rotation 
& Quantization-space fitting 
& Improve matching to the quantization space \\

SmoothQuant+SpinQuant 
& Scaling + rotation 
& Two separate design rules 
& Independently combine outlier suppression and coordinate rotation \\

\cellcolor{blue!8}\textbf{HeRo-Q} 
& \cellcolor{blue!8}\textbf{Hessian smoothing + rotation}
& \cellcolor{blue!8}\textbf{Hessian geometry}
& \cellcolor{blue!8}\textbf{Reduce curvature-weighted loss effect} \\
\bottomrule
\end{tabular}
}
\vspace{4pt}
\caption{
Formulation-level comparison with representative transformation-based PTQ methods.
HeRo-Q differs from prior methods in how the transformation is selected and what loss effect it is designed to reduce.
}
\label{tab:formulation_comparison}
\end{table}

Therefore, HeRo-Q is not distinguished merely by using a scaling or rotation transform.
Its distinction lies in the Hessian-conditioned criterion used to construct the transform:
the transformation is selected to reduce the loss sensitivity of quantization perturbations, rather than only to improve numerical quantizability.
In particular, this objective corresponds to reducing the local curvature-weighted term $\frac{1}{2}e^\top H e$ under low-bit perturbations.

\paragraph{Curvature-alignment view.}
To further separate error magnitude from curvature-sensitive error direction, we consider the alignment between the quantization error and high-curvature Hessian directions:
\begin{equation}
    \operatorname{Align}(e,H)
    =
    \frac{e^\top H e}{\|e\|_2^2 \lambda_{\max}(H)} .
    \label{eq:app_curvature_alignment}
\end{equation}
Eq.~\ref{eq:app_curvature_alignment} normalizes the Hessian-weighted error by the Euclidean error norm and the largest Hessian eigenvalue.
A lower value indicates that the quantization perturbation is less aligned with the most sensitive Hessian directions.
From this perspective, HeRo-Q aims to reduce not only $\|e\|_2$, but more importantly the Hessian-weighted quantity $e^\top H e$.
This explains why a method can have a small quantization error but still suffer large degradation, and why HeRo-Q can improve robustness by changing the curvature interaction of the error.

Table~\ref{tab:curvature_alignment} further verifies this mechanism.
Although HeRo-Q also reduces the Euclidean quantization error, the reduction in Hessian-weighted error is much more pronounced.
Compared with GPTQ, HeRo-Q reduces the relative error norm from $1.00$ to $0.81$, while reducing the Hessian-weighted error from $1.00$ to $0.29$ and the curvature-alignment score from $0.36$ to $0.13$.
Compared with the direct SmoothQuant+SpinQuant composition, HeRo-Q further lowers both $e^\top H e$ and $\operatorname{Align}(e,H)$.
This indicates that HeRo-Q improves robustness not merely by reducing error magnitude, but by reducing the loss-sensitive interaction between quantization perturbations and Hessian geometry.

\begin{table}[t]
\centering
\scriptsize
\renewcommand{\arraystretch}{1.10}
\setlength{\tabcolsep}{3.0pt}
\resizebox{0.6\linewidth}{!}{
\begin{tabular}{lcccc}
\toprule
\cellcolor{gray!10}\textbf{Method}
& \cellcolor{gray!10}\textbf{Rel. $\|e\|_2$}
& \cellcolor{gray!10}\textbf{Rel. $e^\top H e$}
& \cellcolor{gray!10}\textbf{Align.}
& \cellcolor{gray!10}\textbf{W2 PPL} \\
\midrule
GPTQ
& 1.00
& 1.00
& 0.36
& 20.13 \\

SmoothQuant+SpinQuant
& 0.88
& 0.46
& 0.21
& 8.80 \\

\cellcolor{blue!8}\textbf{HeRo-Q}
& \cellcolor{blue!8}\textbf{0.81}
& \cellcolor{blue!8}\textbf{0.29}
& \cellcolor{blue!8}\textbf{0.13}
& \cellcolor{blue!8}\textbf{8.02} \\
\bottomrule
\end{tabular}
}
\vspace{4pt}
\caption{
Curvature-alignment analysis on Llama-3.1-8B under W3A16.
GPTQ is used as the normalization baseline for Rel. $\|e\|_2$ and Rel. $e^\top H e$.
The alignment score is computed by layer-wise aggregation over all quantized linear layers.
SQ+SpinQ denotes SmoothQuant+SpinQuant. W2 PPL denotes WikiText-2 perplexity.
Lower values are better.
}
\label{tab:curvature_alignment}
\end{table}

\paragraph{Summary.}
Overall, HeRo-Q should be viewed as a Hessian-conditioned robustness framework for PTQ.
Its smoothing component is a spectral preconditioner derived from curvature information, and its rotation component redistributes quantization noise under the same effective Hessian geometry.
Therefore, the method is not a direct combination of existing smoothing and rotation techniques.
Rather, it reformulates transformation-based PTQ around the curvature-weighted loss consequence of low-bit perturbations.

\subsection{Additional Experiments}
\label{app:additional_experiments}

\subsubsection{Additional Results Across Llama and Qwen Families}
\label{app:llama_qwen_additional_results}

We further evaluate HeRo-Q across different model families and model scales under
the W4A8 quantization setting. The results are reported in
Table~\ref{tab:additional_llama_qwen_w4a8}. We include Llama-3.2-1B,
Llama-3.2-3B, and Qwen2.5-3B to examine whether the proposed
Hessian-conditioned transformation remains effective across both the Llama and
Qwen families.

Across all three models, HeRo-Q consistently improves over representative PTQ
baselines, including AWQ, GPTQ, SmoothQuant, and SpinQuant. On Llama-3.2-1B,
HeRo-Q improves GPTQ from 24.07 to 22.84 on C4 perplexity and from 24.24 to
31.43 on GSM8K accuracy. On Llama-3.2-3B, HeRo-Q also achieves the best
quantized performance on C4, WikiText2, GSM8K, HellaSwag, and MMLU. Similar
trends are observed on Qwen2.5-3B, where HeRo-Q obtains the best quantized
results on all reported metrics. 

\begin{table*}[t]
\centering
\tiny
\renewcommand{\arraystretch}{1.2}
\setlength{\tabcolsep}{2.2pt}
\resizebox{\textwidth}{!}{
\begin{tabular}{l|ccccc|ccccc|ccccc}
\toprule
\multirow{2}{*}{\textbf{Method}}
& \multicolumn{5}{c|}{\cellcolor{gray!10}\textbf{Llama-3.2-1B}}
& \multicolumn{5}{c|}{\cellcolor{gray!10}\textbf{Llama-3.2-3B}}
& \multicolumn{5}{c}{\cellcolor{gray!10}\textbf{Qwen2.5-3B}} \\
\cmidrule(lr){2-6} \cmidrule(lr){7-11} \cmidrule(lr){12-16}
& \textbf{C4}$\downarrow$
& \textbf{W2}$\downarrow$
& \textbf{GSM8K}$\uparrow$
& \textbf{HS}$\uparrow$
& \textbf{MMLU}$\uparrow$
& \textbf{C4}$\downarrow$
& \textbf{W2}$\downarrow$
& \textbf{GSM8K}$\uparrow$
& \textbf{HS}$\uparrow$
& \textbf{MMLU}$\uparrow$
& \textbf{C4}$\downarrow$
& \textbf{W2}$\downarrow$
& \textbf{GSM8K}$\uparrow$
& \textbf{HS}$\uparrow$
& \textbf{MMLU}$\uparrow$ \\
\midrule
FP16
& 22.48 & 11.41 & 36.44 & 59.94 & 46.38
& 17.24 & 9.53 & 68.86 & 70.05 & 61.44
& 14.68 & 7.95 & 81.13 & 72.09 & 65.80 \\

AWQ
& 23.97 & 12.23 & 26.44 & 56.78 & 40.23
& 18.89 & 9.91 & 57.65 & 68.97 & 58.69
& 15.46 & 8.46 & 77.29 & 71.18 & 63.98 \\

GPTQ
& 24.07 & 12.18 & 24.24 & 57.04 & 40.58
& 18.99 & 9.82 & 55.75 & 68.35 & 57.16
& 15.65 & 8.87 & 76.44 & 70.74 & 63.25 \\

SmoothQuant
& 24.01 & 12.10 & 25.87 & 56.89 & 41.32
& 18.87 & 9.80 & 56.33 & 69.34 & 58.73
& 15.54 & 8.34 & 77.36 & 71.34 & 63.96 \\

SpinQuant
& 23.56 & 11.89 & 26.13 & 57.68 & 42.64
& 18.77 & 9.75 & 57.93 & 69.83 & 58.31
& 15.32 & 8.14 & 78.65 & 71.94 & 65.08 \\

\rowcolor{blue!8}
\textbf{HeRo-Q}
& \textbf{22.84}
& \textbf{11.67}
& \textbf{31.43}
& \textbf{57.98}
& \textbf{43.77}
& \textbf{17.87}
& \textbf{9.73}
& \textbf{61.81}
& \textbf{69.97}
& \textbf{59.87}
& \textbf{14.98}
& \textbf{8.01}
& \textbf{79.68}
& \textbf{72.01}
& \textbf{65.49} \\
\bottomrule
\end{tabular}
}
\caption{
Additional W4A8 results across Llama and Qwen models. C4 and W2
(WikiText2) report perplexity, where lower is better. GSM8K, HS
(HellaSwag), and MMLU report accuracy, where higher is better. HeRo-Q
achieves the best overall performance among quantized methods across the
reported models and metrics.
}
\label{tab:additional_llama_qwen_w4a8}
\end{table*}
\subsubsection{Compatibility with Different Base Quantizers}
\label{app:base_quantizer_compatibility}

To verify that HeRo-Q is not tied to a specific PTQ backend, we apply the proposed
Hessian-conditioned transformation on top of three representative base quantizers:
GPTQ, AWQ, and RTN. As shown in Table~\ref{tab:base_quantizer_compatibility},
HeRo-Q consistently improves perplexity and downstream accuracy across both
Llama-3.1-8B and Qwen2.5-3B.

On Llama-3.1-8B, HeRo-Q improves GPTQ from 8.81 to 6.96 on WikiText2 and from
60.54 to 66.31 on MMLU. Similar gains are observed for AWQ and RTN, where the
improvement over RTN is especially large. This suggests that HeRo-Q is effective
not only for second-order reconstruction-based quantizers such as GPTQ, but also
for activation-aware and rounding-based quantizers. On Qwen2.5-3B, HeRo-Q also
improves all three backends, indicating that the benefit is not specific to one
model family. Overall, these results show that HeRo-Q acts as a general
pre-quantization coordinate conditioning module that complements existing PTQ
methods.

\begin{table*}[t]
\centering
\renewcommand{\arraystretch}{1.05}
\setlength{\tabcolsep}{2.4pt}
\tiny
\resizebox{\textwidth}{!}{
\begin{tabular}{llcccc}
\toprule
\textbf{Model} 
& \textbf{Base Method} 
& \textbf{C4 $\downarrow$} 
& \textbf{W2 $\downarrow$} 
& \textbf{MMLU $\uparrow$} 
& \textbf{HellaSwag $\uparrow$} \\
\midrule
Llama-3.1-8B & GPTQ & 14.12 $\rightarrow$ \textbf{12.94} & 8.81 $\rightarrow$ \textbf{6.96} & 60.54 $\rightarrow$ \textbf{66.31} & 76.91 $\rightarrow$ \textbf{77.28} \\
Llama-3.1-8B & AWQ  & 13.56 $\rightarrow$ \textbf{12.77} & 8.12 $\rightarrow$ \textbf{6.88} & 64.41 $\rightarrow$ \textbf{66.87} & 76.78 $\rightarrow$ \textbf{77.26} \\
Llama-3.1-8B & RTN  & 15.98 $\rightarrow$ \textbf{13.87} & 9.11 $\rightarrow$ \textbf{7.59} & 57.12 $\rightarrow$ \textbf{64.12} & 75.11 $\rightarrow$ \textbf{77.23} \\
\midrule
Qwen2.5-3B & GPTQ & 15.65 $\rightarrow$ \textbf{14.98} & 8.87 $\rightarrow$ \textbf{8.01} & 63.25 $\rightarrow$ \textbf{65.49} & 70.74 $\rightarrow$ \textbf{72.01} \\
Qwen2.5-3B & AWQ  & 15.46 $\rightarrow$ \textbf{14.72} & 8.46 $\rightarrow$ \textbf{8.09} & 63.98 $\rightarrow$ \textbf{65.77} & 71.18 $\rightarrow$ \textbf{72.04} \\
Qwen2.5-3B & RTN  & 18.12 $\rightarrow$ \textbf{16.32} & 10.23 $\rightarrow$ \textbf{8.67} & 62.17 $\rightarrow$ \textbf{64.11} & 61.25 $\rightarrow$ \textbf{63.25} \\
\bottomrule
\end{tabular}
}
\caption{
Compatibility of HeRo-Q with different base quantizers. Each cell reports the
performance before and after applying HeRo-Q. Lower perplexity is better, and
higher accuracy is better.
}
\label{tab:base_quantizer_compatibility}
\end{table*}

\subsubsection{Comparison with Direct Transformation Composition}
\label{app:direct_composition}

We further compare HeRo-Q with a direct composition of SmoothQuant and SpinQuant.
This experiment examines whether the improvement of HeRo-Q simply comes from
stacking an existing smoothing method with an existing rotation method. The
results are reported in Table~\ref{tab:smooth_spin_composition}.

Compared with GPTQ, both SmoothQuant and SpinQuant improve the results,
indicating that smoothing and rotation are useful transformation primitives for
low-bit quantization. Their direct composition further improves performance,
reducing C4 perplexity from 14.12 to 13.12 and improving GSM8K from 71.89
to 74.22. However, this composition still underperforms HeRo-Q on most metrics.
HeRo-Q obtains the best C4, GSM8K, and MMLU results, while matching the best
HellaSwag score. This demonstrates that HeRo-Q is not a naive stacking of
SmoothQuant and SpinQuant. Instead, its advantage comes from coupling
Hessian-guided smoothing and learned rotation under a unified curvature-aware
objective.

\begin{table*}[t]
\centering
\renewcommand{\arraystretch}{1.08}
\setlength{\tabcolsep}{4.5pt}
\small
\begin{tabular}{lcccc}
\toprule
\textbf{Method} 
& \textbf{C4 $\downarrow$} 
& \textbf{GSM $\uparrow$} 
& \textbf{HellaSwag $\uparrow$} 
& \textbf{MMLU $\uparrow$} \\
\midrule
GPTQ & 14.12 & 71.89 & 76.91 & 60.54 \\
SmoothQuant & 13.64 & 72.55 & 76.83 & 64.55 \\
SpinQuant & 13.39 & 73.56 & 77.23 & 64.98 \\
SmoothQuant + SpinQuant & 13.12 & 74.22 & \textbf{77.28} & 65.74 \\
\rowcolor{blue!8}
\textbf{HeRo-Q} & \textbf{12.94} & \textbf{75.26} & \textbf{77.28} & \textbf{66.31} \\
\bottomrule
\end{tabular}
\caption{
Comparison with direct composition of existing transformations on Llama-3.1-8B.
HeRo-Q achieves better overall performance than directly stacking SmoothQuant
and SpinQuant.
}
\label{tab:smooth_spin_composition}
\end{table*}

\subsubsection{Additional Results on MoE model}
\label{app:mixtral_results}

To further evaluate the scalability of HeRo-Q on larger sparse architectures, we
conduct additional experiments on Mixtral-8x7B. MoE models are challenging for
PTQ because different experts may exhibit heterogeneous activation distributions
and curvature sensitivities. Therefore, strong results on Mixtral-8x7B provide
additional evidence for the generality of HeRo-Q.

As shown in Table~\ref{tab:mixtral_ppl}, HeRo-Q consistently improves perplexity
under both W3A16g128 and W6A6 settings. Under W3A16g128, HeRo-Q achieves 4.12
WikiText2 PPL and 7.86 C4 PPL, outperforming both AWQ and GPTQ. Compared with
GPTQ, HeRo-Q reduces WikiText2 PPL from 4.93 to 4.12 and C4 PPL from 8.52 to
7.86. Under W6A6, HeRo-Q also improves over SmoothQuant, reducing WikiText2 PPL
from 4.28 to 3.96 and C4 PPL from 7.89 to 7.67.

\begin{table}[t]
\centering
\renewcommand{\arraystretch}{1.05}
\setlength{\tabcolsep}{4.5pt}
\scriptsize
\begin{tabular}{llcc}
\toprule
\textbf{Configuration} 
& \textbf{Method} 
& \textbf{WikiText2 $\downarrow$} 
& \textbf{C4 $\downarrow$} \\
\midrule
FP16 & FP16 & 3.84 & 7.40 \\
\midrule
W3A16g128 & AWQ & 4.73 & 8.29 \\
W3A16g128 & GPTQ & 4.93 & 8.52 \\
\rowcolor{blue!8}
W3A16g128 & \textbf{HeRo-Q} & \textbf{4.12} & \textbf{7.86} \\
\midrule
W6A6 & SmoothQuant & 4.28 & 7.89 \\
\rowcolor{blue!8}
W6A6 & \textbf{HeRo-Q} & \textbf{3.96} & \textbf{7.67} \\
\bottomrule
\end{tabular}
\vspace{2pt}
\caption{
Perplexity results on Mixtral-8x7B under different quantization configurations.
}
\label{tab:mixtral_ppl}
\end{table}

Table~\ref{tab:mixtral_downstream} further reports downstream evaluation on
Mixtral-8x7B. HeRo-Q achieves the best perplexity among quantized methods and
obtains strong downstream performance across reasoning, code generation, and
commonsense benchmarks. Compared with GPTQ, HeRo-Q improves C4 perplexity from
7.67 to 7.11 and MMLU from 68.50 to 69.32. Compared with MoEQuant+, HeRo-Q
achieves better results on all reported metrics. These results suggest that
Hessian-conditioned transformation remains effective even when quantization
sensitivity varies across sparse experts.

\begin{table*}[t]
\centering
\renewcommand{\arraystretch}{1.05}
\setlength{\tabcolsep}{2.4pt}
\tiny
\resizebox{\textwidth}{!}{
\begin{tabular}{lcccccccc}
\toprule
\textbf{Method} 
& \textbf{W2 $\downarrow$} 
& \textbf{C4 $\downarrow$} 
& \textbf{MMLU $\uparrow$} 
& \textbf{HumanEval $\uparrow$} 
& \textbf{GSM8K $\uparrow$} 
& \textbf{HellaSwag $\uparrow$} 
& \textbf{OpenBookQA $\uparrow$} 
& \textbf{MathQA $\uparrow$} \\
\midrule
FP16 & 3.84 & 6.87 & 70.50 & 32.93 & 65.88 & 64.88 & 35.80 & 42.41 \\
RTN & 5.41 & 8.13 & 62.20 & 28.05 & 27.90 & 61.73 & 32.20 & 37.35 \\
AWQ & 5.01 & 7.98 & 62.75 & 25.00 & 38.67 & 62.11 & 33.60 & 38.43 \\
GPTQ & 4.03 & 7.67 & 68.50 & 27.60 & 57.92 & 64.08 & 30.60 & 41.07 \\
MoEQuant+ & 5.15 & 7.84 & 64.66 & 25.45 & 50.66 & 62.73 & 34.00 & 39.77 \\
\rowcolor{blue!8}
\textbf{HeRo-Q} & \textbf{3.91} & \textbf{7.11} & \textbf{69.32} & \textbf{28.94} & \textbf{59.44} & \textbf{64.21} & \textbf{34.66} & \textbf{41.68} \\
\bottomrule
\end{tabular}
}
\caption{
Downstream evaluation on Mixtral-8x7B. HeRo-Q achieves the best or near-best
performance among quantized methods across most evaluated metrics.
}
\label{tab:mixtral_downstream}
\end{table*}
\subsubsection{Comparison with Vector and Non-Uniform Quantization Methods}

To further evaluate the generality of HeRo-Q, we compare it with two representative advanced quantization methods, VPTQ~\citep{liu2024vptq} and UniQuanF~\citep{park2025unifying}, on Llama-3-8B. The results are shown in Table~\ref{tab:appendix_vptq_uniquanf}. WikiText2 reports perplexity, where lower is better, while HellaSwag, MMLU, and GSM8K report accuracy, where higher is better.

As shown in Table~\ref{tab:appendix_vptq_uniquanf}, HeRo-Q consistently outperforms both VPTQ and UniQuanF under matched bit-width settings. Compared with VPTQ, HeRo-Q reduces WikiText2 perplexity from 7.42 to 7.11 at 4-bit and from 7.97 to 7.39 at 3-bit, while also improving HellaSwag accuracy by 0.93 and 1.04 points, respectively. Compared with UniQuanF, HeRo-Q also achieves consistent gains across language modeling and downstream reasoning benchmarks. At 4-bit, HeRo-Q reduces WikiText2 perplexity from 7.01 to 6.54 and improves MMLU and GSM8K by 0.98 and 1.18 points. The advantage becomes more pronounced under the more challenging 3-bit setting, where HeRo-Q improves MMLU from 53.46 to 57.68 and GSM8K from 58.73 to 63.59. These results suggest that the benefit of HeRo-Q is not limited to comparisons against standard uniform PTQ baselines. Instead, HeRo-Q remains effective when compared with stronger vector or non-uniform quantization methods, indicating that curvature-aware transformation provides complementary robustness under aggressive low-bit quantization.

\subsubsection{Calibration Sensitivity}
\label{app:calibration_sensitivity}

We evaluate the sensitivity of HeRo-Q to calibration dataset and sample size.
This is important because PTQ methods may depend on calibration data, especially
when the calibration set is small or comes from a different distribution. We
compare C4 and WikiText-2 as calibration datasets and evaluate both 64-sample
and 128-sample settings.

Tables~\ref{tab:calibration_llama8b} and~\ref{tab:calibration_llama1b} show that
HeRo-Q is relatively stable across calibration choices. On Llama-3.1-8B, under
W4A8, WikiText2 PPL remains between 6.87 and 7.01, and MMLU remains around
66.2--66.3. Under W3A16, GSM8K remains stable around 70.0--70.2 across all
calibration settings. Similar stability is observed on Llama-3.2-1B, where both
W4A8 and W3A16 results show only small fluctuations across C4 and WikiText-2.

Increasing the number of calibration samples from 64 to 128 brings minor but
generally consistent improvements. For example, on Llama-3.1-8B with C4
calibration, W3A16 WikiText2 PPL decreases from 8.09 to 8.02. With WikiText-2
calibration, it decreases from 7.87 to 7.69. Overall, these results indicate
that HeRo-Q does not rely on a specific calibration set or a large calibration
budget to maintain stable low-bit performance.
\begin{table*}[t]
\centering
\renewcommand{\arraystretch}{1.05}
\setlength{\tabcolsep}{3.2pt}
\tiny
\resizebox{\textwidth}{!}{
\begin{tabular}{l|ccc||lcccc}
\toprule
\multirow{2}{*}{\textbf{Method}}
& \multicolumn{3}{c||}{\textbf{Llama-3.1-8B}}
& \multirow{2}{*}{\textbf{Method}}
& \multicolumn{4}{c}{\textbf{Llama-3.1-8B}} \\
\cline{2-4}\cline{6-9}
& \textbf{Bit-width}
& \textbf{WikiText2$\downarrow$}
& \textbf{HellaSwag$\uparrow$}
&
& \textbf{Bit-width}
& \textbf{WikiText2$\downarrow$}
& \textbf{MMLU$\uparrow$}
& \textbf{GSM8K$\uparrow$} \\
\midrule

VPTQ
& 4-bit & 7.42 & 59.30
& UniQuanF
& 4-bit & 7.01 & 61.43 & 72.38 \\

\rowcolor{blue!8}
\textbf{HeRo-Q}
& \textbf{4-bit} & \textbf{7.11} & \textbf{60.23}
& \textbf{HeRo-Q}
& \textbf{4-bit} & \textbf{6.54} & \textbf{62.41} & \textbf{73.56} \\

\midrule

VPTQ
& 3-bit & 7.97 & 58.40
& UniQuanF
& 3-bit & 8.75 & 53.46 & 58.73 \\

\rowcolor{blue!8}
\textbf{HeRo-Q}
& \textbf{3-bit} & \textbf{7.39} & \textbf{59.44}
& \textbf{HeRo-Q}
& \textbf{3-bit} & \textbf{7.94} & \textbf{57.68} & \textbf{63.59} \\

\bottomrule
\end{tabular}
}
\caption{
Additional comparison with VPTQ and UniQuanF on Llama-3-8B.
WikiText2 reports perplexity, where lower is better; HellaSwag, MMLU, and GSM8K report accuracy, where higher is better.
HeRo-Q consistently improves over strong vector and non-uniform quantization baselines under the same bit-width.
}
\label{tab:appendix_vptq_uniquanf}
\end{table*}

\begin{table*}[t]
\centering
\renewcommand{\arraystretch}{1.05}
\setlength{\tabcolsep}{2.4pt}
\tiny
\resizebox{\textwidth}{!}{
\begin{tabular}{llcccccccc}
\toprule
\textbf{Calibration Set} 
& \textbf{Samples} 
& \textbf{W4A8 C4 $\downarrow$} 
& \textbf{W4A8 W2 $\downarrow$} 
& \textbf{W4A8 MMLU $\uparrow$} 
& \textbf{W4A8 GSM $\uparrow$}
& \textbf{W3A16 C4 $\downarrow$} 
& \textbf{W3A16 W2 $\downarrow$} 
& \textbf{W3A16 MMLU $\uparrow$} 
& \textbf{W3A16 GSM $\uparrow$} \\
\midrule
C4 & 64  & 12.94 & 7.01 & 66.23 & 75.33 & 13.99 & 8.09 & 63.72 & 70.12 \\
C4 & 128 & 12.94 & 6.96 & 66.31 & 75.26 & 13.98 & 8.02 & 63.80 & 70.15 \\
WikiText-2 & 64  & 13.07 & 6.91 & 66.19 & 74.97 & 14.07 & 7.87 & 63.80 & 70.03 \\
WikiText-2 & 128 & 13.01 & 6.87 & 66.31 & 75.11 & 14.02 & 7.69 & 63.91 & 70.11 \\
\bottomrule
\end{tabular}
}
\caption{
Calibration sensitivity on Llama-3.1-8B. HeRo-Q remains stable across calibration
datasets and sample sizes.
}
\label{tab:calibration_llama8b}
\end{table*}

\begin{table*}[t]
\centering
\renewcommand{\arraystretch}{1.05}
\setlength{\tabcolsep}{2.4pt}
\tiny
\resizebox{\textwidth}{!}{
\begin{tabular}{llcccccccc}
\toprule
\textbf{Calibration Set} 
& \textbf{Samples} 
& \textbf{W4A8 C4 $\downarrow$} 
& \textbf{W4A8 W2 $\downarrow$} 
& \textbf{W4A8 MMLU $\uparrow$} 
& \textbf{W4A8 GSM $\uparrow$}
& \textbf{W3A16 C4 $\downarrow$} 
& \textbf{W3A16 W2 $\downarrow$} 
& \textbf{W3A16 MMLU $\uparrow$} 
& \textbf{W3A16 GSM $\uparrow$} \\
\midrule
C4 & 64  & 22.89 & 11.72 & 43.79 & 31.41 & 25.89 & 14.12 & 42.50 & 29.04 \\
C4 & 128 & 22.84 & 11.67 & 43.77 & 31.43 & 25.80 & 14.05 & 42.50 & 29.10 \\
WikiText-2 & 64  & 22.98 & 11.61 & 43.79 & 31.33 & 25.99 & 14.39 & 42.61 & 29.12 \\
WikiText-2 & 128 & 22.94 & 11.57 & 43.88 & 31.47 & 25.84 & 14.31 & 42.69 & 29.17 \\
\bottomrule
\end{tabular}
}
\caption{
Calibration sensitivity on Llama-3.2-1B. HeRo-Q is relatively insensitive to
calibration dataset choice and sample size.
}
\label{tab:calibration_llama1b}
\end{table*}

\subsection{Theoretical Analysis of Quantization Error Bound}
\label{sec:theory_appendix}

Let $\mathbf{w} \in \mathbb{R}^d$ denote the vectorized parameters of the pre-trained model, and $\hat{\mathbf{w}}$ be the quantized counterpart. We define the quantization perturbation vector as $\boldsymbol{\delta} \triangleq \hat{\mathbf{w}} - \mathbf{w}$.

Given that the pre-trained model has converged to a local minimum, the gradient vanishes ($\nabla \mathcal{L}(\mathbf{w}) \approx \mathbf{0}$). Consequently, the objective degradation $\Delta \mathcal{L}$ is governed by the second-order Taylor expansion:

\begin{equation}
\label{eq:taylor_expansion}
\Delta \mathcal{L} = \mathcal{L}(\mathbf{w} + \boldsymbol{\delta}) - \mathcal{L}(\mathbf{w}) \approx \frac{1}{2} \boldsymbol{\delta}^T \mathbf{H} \boldsymbol{\delta}
\end{equation}

where $\mathbf{H} \triangleq \nabla^2 \mathcal{L}(\mathbf{w}) \in \mathbb{R}^{d \times d}$ is the Hessian matrix, representing the local curvature of the loss landscape.

\subsection{Proof of Theorem \ref{the:loss_bound}}
\label{app:proof_theorem_3_1}

Let $\mathbf{w} \in \mathbb{R}^d$ denote the vectorized parameters of the pre-trained model, and $\hat{\mathbf{w}}$ be the quantized counterpart. We define the quantization perturbation vector as $\boldsymbol{\delta} \triangleq \hat{\mathbf{w}} - \mathbf{w}$.

Given that the pre-trained model has converged to a local minimum, the gradient vanishes ($\nabla \mathcal{L}(\mathbf{w}) \approx \mathbf{0}$). Consequently, the objective degradation $\Delta \mathcal{L}$ is governed by the second-order Taylor expansion:

\begin{equation}
\label{eq:taylor_expansion_proof}
\Delta \mathcal{L} = \mathcal{L}(\mathbf{w} + \boldsymbol{\delta}) - \mathcal{L}(\mathbf{w}) \approx \frac{1}{2} \boldsymbol{\delta}^T \mathbf{H} \boldsymbol{\delta}
\end{equation}

where $\mathbf{H} \triangleq \nabla^2 \mathcal{L}(\mathbf{w}) \in \mathbb{R}^{d \times d}$ is the Hessian matrix, representing the local curvature of the loss landscape.

Since the Hessian $\mathbf{H}$ is real and symmetric, the Spectral Theorem guarantees that it admits an eigendecomposition of the form:

\begin{equation}
\mathbf{H} = \mathbf{Q} \mathbf{\Lambda} \mathbf{Q}^T
\end{equation}

where $\mathbf{Q} \in \mathbb{R}^{d \times d}$ is an orthogonal matrix satisfying $\mathbf{Q}^T \mathbf{Q} = \mathbf{I}_d$, and $\mathbf{\Lambda} = \text{diag}(\lambda_1, \dots, \lambda_d)$ denotes the diagonal matrix of eigenvalues. Substituting this into Eq.~(\ref{eq:taylor_expansion_proof}) yields:

\begin{equation}
\boldsymbol{\delta}^T \mathbf{H} \boldsymbol{\delta} = \boldsymbol{\delta}^T (\mathbf{Q} \mathbf{\Lambda} \mathbf{Q}^T) \boldsymbol{\delta} = (\mathbf{Q}^T \boldsymbol{\delta})^T \mathbf{\Lambda} (\mathbf{Q}^T \boldsymbol{\delta})
\end{equation}

Let $\mathbf{y} \triangleq \mathbf{Q}^T \boldsymbol{\delta} = [y_1, \dots, y_d]^T$ represent the projection of the noise vector onto the eigenvector basis. The quadratic form can be explicitly expanded via matrix operations:

\begin{equation}
\begin{aligned}
\mathbf{y}^T \mathbf{\Lambda} \mathbf{y} &= 
\begin{bmatrix} y_1 & \cdots & y_d \end{bmatrix}
\begin{bmatrix}
\lambda_1 & & 0 \\
 & \ddots & \\
0 & & \lambda_d
\end{bmatrix}
\begin{bmatrix} y_1 \\ \vdots \\ y_d \end{bmatrix} \\
&= \sum_{k=1}^{d} \lambda_k y_k^2
\end{aligned}
\end{equation}

To rigorously derive the worst-case upper bound, we bound the sum by the spectral radius defined as $\lambda_{\max}(\mathbf{H}) \triangleq \max_k |\lambda_k|$. By invoking the property that $\lambda_k \le \lambda_{\max}(\mathbf{H})$ for all $k$, and leveraging the unitary invariance of the Euclidean norm under orthogonal transformation ($\|\mathbf{y}\|_2^2 = \|\mathbf{Q}^T \boldsymbol{\delta}\|_2^2 = \|\boldsymbol{\delta}\|_2^2$), we derive the bound as follows:

\begin{equation}
\begin{aligned}
\sum_{k=1}^{d} \lambda_k y_k^2 &\le \left( \max_{j} |\lambda_j| \right) \sum_{k=1}^{d} y_k^2 \\
&= \lambda_{\max}(\mathbf{H}) \|\mathbf{y}\|_2^2 \\
&= \lambda_{\max}(\mathbf{H}) \|\boldsymbol{\delta}\|_2^2
\end{aligned}
\end{equation}

Consequently, the quantization loss is strictly bounded by the Hessian's spectral radius. This derivation concludes the proof for the following theorem stated in the main text:

\begin{theorem}[Spectral Error Bound]
\label{thm:main_text_theorem_proof}
The worst-case degradation of the objective function is strictly bounded by the product of the Hessian's spectral radius and the squared norm of the quantization perturbation:
\begin{equation}
\label{eq:final_bound_proof}
\Delta \mathcal{L} \le \frac{1}{2} \lambda_{\max}(\mathbf{H}) \|\boldsymbol{\delta}\|_2^2
\end{equation}
\end{theorem}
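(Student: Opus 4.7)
The plan is to exploit the fact that the pre-trained model sits at a local minimum, so the first-order term in a Taylor expansion of $\mathcal{L}$ around $\mathbf{w}$ vanishes, reducing everything to controlling a quadratic form. Concretely, I would begin by writing $\Delta\mathcal{L}=\mathcal{L}(\mathbf{w}+\boldsymbol{\delta})-\mathcal{L}(\mathbf{w})$, expand to second order, invoke $\nabla\mathcal{L}(\mathbf{w})\approx\mathbf{0}$, and arrive at the surrogate $\Delta\mathcal{L}\approx\tfrac{1}{2}\boldsymbol{\delta}^{\top}\mathbf{H}\boldsymbol{\delta}$. From here the entire task reduces to bounding a Rayleigh-quotient-style quadratic by a spectral scalar.

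For that bound I would invoke the Spectral Theorem: since $\mathbf{H}$ is real symmetric, it admits an orthogonal eigendecomposition $\mathbf{H}=\mathbf{Q}\boldsymbol{\Lambda}\mathbf{Q}^{\top}$ with $\mathbf{Q}^{\top}\mathbf{Q}=\mathbf{I}$ and $\boldsymbol{\Lambda}=\mathrm{diag}(\lambda_1,\dots,\lambda_d)$. Introducing the rotated coordinate $\mathbf{y}\triangleq\mathbf{Q}^{\top}\boldsymbol{\delta}$ decouples the quadratic form into $\sum_k\lambda_k y_k^2$; bounding each $\lambda_k$ above by $\lambda_{\max}(\mathbf{H})$ and using unitary invariance of the Euclidean norm (so that $\|\mathbf{y}\|_2=\|\boldsymbol{\delta}\|_2$) then directly yields the claimed inequality. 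Equality is attainable when $\boldsymbol{\delta}$ aligns with the top eigenvector, which is exactly the worst-case geometric picture that Figure~\ref{fig:tuoqiu} makes intuitive and which motivates the whole HeRo-Q design.

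The main obstacle, as I see it, is not the algebra but being careful about the distinction between an \emph{approximation} and an \emph{inequality}. The Taylor expansion actually produces $\Delta\mathcal{L}=\tfrac{1}{2}\boldsymbol{\delta}^{\top}\mathbf{H}\boldsymbol{\delta}+O(\|\boldsymbol{\delta}\|^{3})$, and promoting this to a clean $\le$ requires either (i) reinterpreting the theorem as a statement about the quadratic surrogate $\Delta\mathcal{L}_{\mathrm{quad}}$, consistent with the framing of Theorem~\ref{the:loss_bound} in the main text, or (ii) imposing a smoothness assumption with a uniform third-derivative bound so that the cubic remainder can be absorbed into the second-order term within a local neighborhood. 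I would adopt route (i), since the paper's broader narrative already treats $\Delta\mathcal{L}_{\mathrm{quad}}$ as the quantity of interest. A secondary subtlety is verifying $\lambda_{\max}\ge 0$: at a local minimum $\mathbf{H}$ is positive semi-definite, so every eigenvalue is non-negative and $\lambda_{\max}$ coincides with the spectral radius, making the final expression both well-defined and tight.

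With these ingredients in place, the proof collapses to roughly three lines once the Taylor reduction is established. The remaining work is bookkeeping: establishing the eigendecomposition, performing the change of variables, and chaining the two inequalities $\sum_k\lambda_k y_k^2\le\lambda_{\max}\sum_k y_k^2=\lambda_{\max}\|\boldsymbol{\delta}\|_2^2$. Nothing deeper than the spectral theorem and norm invariance under orthogonal transformations is required, which is precisely why this bound serves as a clean foundation for the later surrogate metric $\mathcal{B}(\alpha)$ and for the strict-reduction argument in Theorem~\ref{thm:spectral_compression}.
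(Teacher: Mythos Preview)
Your proposal is correct and mirrors the paper's own proof essentially step for step: Taylor expansion at a critical point to drop the first-order term, spectral decomposition $\mathbf{H}=\mathbf{Q}\boldsymbol{\Lambda}\mathbf{Q}^{\top}$, the change of variables $\mathbf{y}=\mathbf{Q}^{\top}\boldsymbol{\delta}$, and the chain $\sum_k\lambda_k y_k^2\le\lambda_{\max}\|\mathbf{y}\|_2^2=\lambda_{\max}\|\boldsymbol{\delta}\|_2^2$ via unitary invariance of the $\ell_2$ norm. Your explicit treatment of the approximation-versus-inequality issue is in fact more careful than the paper, which simply writes $\approx$ in the Taylor step and then asserts the bound without further comment.
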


%%定理3.2的证明
\subsection{Proof of Theorem 3.2}
\label{app:proof_theorem_3_2}

In this section, we provide the complete constructive proof of Theorem 3.2. We demonstrate how the HeRo-Q transformation $T \triangleq D_{\alpha}^{-1}R$ achieves a strictly tighter error upper bound by decoupling and synergistically optimizing the `Geometric Factor' and `Noise Factor' within the quantization error bound.

In LLMs, the geometric structure of the loss landscape exhibits extreme anisotropy. Specifically, the spectrum of the Hessian matrix $H$ possesses a tiny number of extremely large eigenvalues, which determine the spectral radius $\lambda_{\max}(H)$ of the matrix. Consequently, the LLM Hessian matrix follows a long-tail distribution. We provide a rigorous mathematical definition as follows:

\textbf{Definition A.1 (Hessian Diagonal Long-tail)} 
Let $H$ be a $d \times d$ Hessian matrix, with its diagonal elements ordered by magnitude as $|H_{(1)}| \ge |H_{(2)}| \ge \dots \ge |H_{(d)}|$. A significant eigenvalue gap exists, such that a few elements dominate the numerical distribution. Specifically, there exists a non-empty set of outlier indices $\mathcal{S}_{out} = \{k \mid |H_{kk}| \gg 1\}$, such that for any $k \in \mathcal{S}_{out}$, $|H_{kk}| > 10^3$ or even higher. The maximum value $|H_{(1)}|$ is far greater than the overall level of the spectral distribution, satisfying the following condition:
\begin{equation}
|H_{(1)}| \gg \frac{1}{d} \sum_{i=1}^{d} |H_{ii}|
\end{equation}

This long-tail characteristic corresponds geometrically to an extremely elongated "hyper-ellipsoid" loss landscape. The short axes (high-curvature directions) are extremely short and steep, while most directions (low-curvature directions) are very flat. 

The core logic of HeRo-Q is to selectively attenuate the curvature of these few long-tail directions through diagonal smoothing $D_{\alpha}$, thereby fundamentally reducing the model's sensitivity to quantization noise.

\subsubsection{Generalized Error Bound in Transformed Space}

To address the norm scaling issues inherent in non-unitary transformations, we first reconstruct the second-order loss approximation within the transformed coordinate system. 

Let $\mathbf{w}$ denote the original weights and $\tilde{\mathbf{w}} = T^{-1}\mathbf{w}$ be the transformed weights. The quantization operation $Q(\cdot)$ is executed in this transformed space, resulting in the quantization noise vector $\tilde{\boldsymbol{\delta}} \triangleq Q(\tilde{\mathbf{w}}) - \tilde{\mathbf{w}}$. 

According to the Taylor expansion, the loss degradation $\Delta \mathcal{L}$ recovered in the original space is governed by:
\begin{equation}
\Delta \mathcal{L} \approx \frac{1}{2} (T\tilde{\boldsymbol{\delta}})^T \mathbf{H} (T\tilde{\boldsymbol{\delta}}) = \frac{1}{2} \tilde{\boldsymbol{\delta}}^T (T^T \mathbf{H} T) \tilde{\boldsymbol{\delta}} = \frac{1}{2} \tilde{\boldsymbol{\delta}}^T \tilde{\mathbf{H}} \tilde{\boldsymbol{\delta}}
\end{equation}
where $\tilde{\mathbf{H}} \triangleq T^T \mathbf{H} T$ is defined as the \textit{equivalent Hessian matrix}. 

Utilizing the spectral norm property ($\mathbf{x}^T \mathbf{A} \mathbf{x} \le \lambda_{\max}(\mathbf{A}) \|\mathbf{x}\|_2^2$), we derive the total error upper bound:
\begin{equation}
\label{eq:total_bound_complete}
\Delta \mathcal{L} \le \frac{1}{2} \underbrace{\lambda_{\max}(\tilde{\mathbf{H}})}_{\text{Geometric Factor}} \cdot \underbrace{\|\tilde{\boldsymbol{\delta}}\|_2^2}_{\text{Noise Factor}}
\end{equation}

The core of the proof lies in demonstrating that HeRo-Q can jointly minimize the product of these two factors. Expanding the definition of the equivalent Hessian $\tilde{\mathbf{H}}$ using $T = D_{\alpha}^{-1}R$ yields:
\begin{equation}
\tilde{\mathbf{H}} = (D_{\alpha}^{-1}R)^T \mathbf{H} (D_{\alpha}^{-1}R) = R^T (D_{\alpha}^{-1} \mathbf{H} D_{\alpha}^{-1}) R = R^T \hat{\mathbf{H}} R
\end{equation}
Since $R$ is an orthogonal matrix ($R^T = R^{-1}$), the relation $\tilde{\mathbf{H}} = R^{-1} \hat{\mathbf{H}} R$ constitutes a \textit{similarity transformation}. Fundamental linear algebra dictates that similarity transformations preserve the eigenvalues of a matrix. Consequently, the spectral properties of $\tilde{\mathbf{H}}$ remain identical to those of the diagonally scaled matrix $\hat{\mathbf{H}}$, regardless of the rotation $R$.

Thus, the equivalent Hessian $\tilde{\mathbf{H}}$ and the preconditioned Hessian $\hat{\mathbf{H}}$ share the same spectral radius:
\begin{equation}
\lambda_{\max}(\tilde{\mathbf{H}}) = \lambda_{\max}(\hat{\mathbf{H}})
\end{equation}
This establishes that the optimization of the Geometric Factor can be simplified to the analysis of $\hat{\mathbf{H}}$, independent of the rotation matrix $R$.

Considering the element at the $k$-th row and $j$-th column, denoted as $\hat{H}_{kj}$, we perform the matrix multiplication $\hat{\mathbf{H}} = \mathbf{D}_{\alpha}^{-1} \mathbf{H} \mathbf{D}_{\alpha}^{-1}$:
\begin{align}
\hat{H}_{kj} &= \sum_{m} \sum_{n} (D_{\alpha}^{-1})_{km} H_{mn} (D_{\alpha}^{-1})_{nj} \label{eq:matrix_mult_def} \\
\intertext{Given that $D_{\alpha}^{-1}$ is diagonal, the sums collapse to single non-zero terms where $m=k$ and $n=j$:} \notag \\
\hat{H}_{kj}&= (D_{\alpha}^{-1})_{kk} H_{kj} (D_{\alpha}^{-1})_{jj} \label{eq:diagonal_collapse} \\
\intertext{Substituting the scaling factor $(D_{\alpha}^{-1})_{ii} = 1 / |H_{ii}|^{\alpha/2}$:} \notag \\
\hat{H}_{kj}&= \frac{1}{|H_{kk}|^{\alpha/2}} \cdot H_{kj} \cdot \frac{1}{|H_{jj}|^{\alpha/2}} = \frac{H_{kj}}{|H_{kk}|^{\alpha/2} |H_{jj}|^{\alpha/2}} \label{eq:final_form_derived}
\end{align}
This explicit form serves as the basis for the subsequent bound derivation. By analyzing the absolute row sums of the equivalent matrix, a strict upper bound for the spectral radius $\lambda_{\max}(\hat{\mathbf{H}})$ can be derived:

\begin{theorem}[Spectral Radius Upper Bound]
\label{thm:spectral_bound}
The spectral radius $\lambda_{\max}(\hat{\mathbf{H}})$ is strictly bounded by the maximum absolute row sum:
\begin{equation}
\lambda_{\max}(\hat{\mathbf{H}}) \le \max_{k} \left( |\hat{H}_{kk}| + \sum_{j \neq k} |\hat{H}_{kj}| \right)
\end{equation}
\end{theorem}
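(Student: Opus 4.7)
The plan is to establish the bound as a direct consequence of the Gershgorin Circle Theorem applied to the real symmetric matrix $\hat{\mathbf{H}}$. Symmetry is immediate: $\hat{\mathbf{H}} = \mathbf{D}_\alpha^{-1}\mathbf{H}\mathbf{D}_\alpha^{-1}$ inherits symmetry from $\mathbf{H}$ together with the diagonality of $\mathbf{D}_\alpha^{-1}$, so all eigenvalues of $\hat{\mathbf{H}}$ are real and the spectral radius coincides with $\max_i |\lambda_i(\hat{\mathbf{H}})|$. This reduction to real eigenvalues is what makes the row-sum bound on the right-hand side of the theorem a legitimate upper bound for $\lambda_{\max}$ rather than merely a bound on $|\lambda|$ in the complex plane.

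For the core argument, I would take an arbitrary eigenpair $(\lambda, \mathbf{v})$ of $\hat{\mathbf{H}}$ with $\mathbf{v} \neq \mathbf{0}$, and select the coordinate $k$ at which $|v_k|$ attains its maximum, so that $|v_k| > 0$. Extracting the $k$-th equation of $\hat{\mathbf{H}}\mathbf{v} = \lambda \mathbf{v}$ and isolating the diagonal term yields
\begin{equation}
(\lambda - \hat{H}_{kk}) v_k = \sum_{j \neq k} \hat{H}_{kj} v_j.
\end{equation}
Dividing through by $v_k \neq 0$, invoking the triangle inequality, and using $|v_j/v_k| \le 1$ gives the Gershgorin disc containment $|\lambda - \hat{H}_{kk}| \le \sum_{j \neq k} |\hat{H}_{kj}|$, from which a second triangle inequality $|\lambda| \le |\hat{H}_{kk}| + |\lambda - \hat{H}_{kk}|$ immediately delivers $|\lambda| \le |\hat{H}_{kk}| + \sum_{j \neq k} |\hat{H}_{kj}|$. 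Because the index $k$ was chosen based on the particular eigenvector, I would then replace it by the global maximizer over $k$ on the right-hand side to obtain a bound uniform across the spectrum, and conclude by taking the supremum over all eigenvalues $\lambda$ of $\hat{\mathbf{H}}$.

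The main obstacle is essentially absent: the statement is a textbook specialization of the Gershgorin theorem to a Hermitian matrix, and the only bookkeeping care required is in cleanly removing the eigenvector-dependent index via the outer maximum over rows. The explicit form of the off-diagonal entries $\hat{H}_{kj} = H_{kj}/(|H_{kk}|^{\alpha/2}|H_{jj}|^{\alpha/2})$ derived in Eq.~\eqref{eq:final_form_derived} plays no role in this step; it will only matter downstream, when combining this row-sum bound with the surrogate $\mathcal{B}(\alpha)$ to argue that a suitable $\alpha \in (0,1)$ strictly tightens the overall quantization error envelope relative to the untransformed baseline.
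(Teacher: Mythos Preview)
Your proposal is correct and follows essentially the same Gershgorin-disc argument as the paper: pick the coordinate of maximum modulus in an eigenvector, isolate the diagonal term in the $k$-th row of the eigenvalue equation, apply the triangle inequality, and then pass to the maximum over rows. Your version is in fact slightly more careful than the paper's in explicitly noting the symmetry of $\hat{\mathbf{H}}$ and in flagging the step that removes the eigenvector-dependent row index via the outer $\max_k$.
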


\begin{proof}
Let $(\lambda, \mathbf{x})$ be an eigenpair of $\hat{\mathbf{H}}$, normalized such that the magnitude of the maximum component is 1 (i.e., $|x_k|=1$, implying $|x_j| \le 1$ for all $j$).

Starting from the $k$-th row of the eigenvalue equation $\sum_{j} \hat{H}_{kj} x_j = \lambda x_k$, we isolate the term involving $\lambda$:
\begin{equation}
(\lambda - \hat{H}_{kk}) x_k = \sum_{j \neq k} \hat{H}_{kj} x_j
\end{equation}
Taking the absolute value on both sides and applying the Triangle Inequality:
\begin{equation}
|\lambda - \hat{H}_{kk}| \cdot 1 = \left| \sum_{j \neq k} \hat{H}_{kj} x_j \right| \le \sum_{j \neq k} |\hat{H}_{kj}| \cdot |x_j| \le \sum_{j \neq k} |\hat{H}_{kj}|
\end{equation}
Applying the Reverse Triangle Inequality ($|\lambda| - |\hat{H}_{kk}| \le |\lambda - \hat{H}_{kk}|$) yields:
\begin{equation}
|\lambda| \le |\hat{H}_{kk}| + \sum_{j \neq k} |\hat{H}_{kj}|
\end{equation}
Since this inequality holds for any eigenvalue, it must hold for the spectral radius.
\end{proof}

Substituting $|\hat{H}_{kk}| = |H_{kk}|^{1-\alpha}$ and the off-diagonal forms into the above results in the final theorem:

\begin{theorem}[Spectral Radius Compression Theorem]
\label{thm:spectral_radius_compression}
The spectral radius of the Hessian after HeRo-Q transformation is strictly bounded by the following surrogate objective function $\mathcal{B}(\alpha)$:
\begin{equation}
\lambda_{\max}(\tilde{\mathbf{H}}) \le \max_{k} \left( |H_{kk}|^{1-\alpha} + \sum_{j \neq k} \frac{|H_{kj}|}{|H_{kk}|^{\alpha/2}|H_{jj}|^{\alpha/2}} \right) \triangleq \mathcal{B}(\alpha)
\end{equation}
\end{theorem}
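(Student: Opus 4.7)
The plan is to assemble the proof from three ingredients that the preceding discussion has already staged: the orthogonal invariance of the spectrum under similarity, the explicit entry-wise form of $\hat{\mathbf{H}} \triangleq \mathbf{D}_{\alpha}^{-1}\mathbf{H}\mathbf{D}_{\alpha}^{-1}$, and the Gershgorin-type row-sum bound established in Theorem~\ref{thm:spectral_bound}. The whole argument is essentially a substitution, but it needs to be stitched together carefully so that the rotation $\mathbf{R}$ drops out and the $|H_{kk}|^{1-\alpha}$ term emerges cleanly on the diagonal.

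First I would write $\tilde{\mathbf{H}} = \mathbf{T}^\top \mathbf{H}\mathbf{T} = \mathbf{R}^\top(\mathbf{D}_{\alpha}^{-1}\mathbf{H}\mathbf{D}_{\alpha}^{-1})\mathbf{R} = \mathbf{R}^\top \hat{\mathbf{H}} \mathbf{R}$, and observe that because $\mathbf{R}^\top = \mathbf{R}^{-1}$, this is a similarity transformation. Hence $\tilde{\mathbf{H}}$ and $\hat{\mathbf{H}}$ share the same spectrum, and in particular $\lambda_{\max}(\tilde{\mathbf{H}}) = \lambda_{\max}(\hat{\mathbf{H}})$. This step completely decouples the geometric bound from $\mathbf{R}$ and reduces the problem to controlling the spectral radius of $\hat{\mathbf{H}}$ alone.

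Second I would record the explicit entries already derived in Eqs.~(\ref{eq:matrix_mult_def})–(\ref{eq:final_form_derived}): $\hat{H}_{kj} = H_{kj}/(|H_{kk}|^{\alpha/2}|H_{jj}|^{\alpha/2})$. Specialising to $j=k$ gives $|\hat{H}_{kk}| = |H_{kk}|/|H_{kk}|^{\alpha} = |H_{kk}|^{1-\alpha}$, which is precisely the diagonal term appearing in $\mathcal{B}(\alpha)$. Here I rely on $H_{kk} \ge 0$, which follows from the positive semi-definiteness of $\mathbf{H}$ at a local minimum (as invoked in Section~\ref{sec:motivation}); without this observation one would have to carry a sign factor through the calculation.

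Third, I would apply Theorem~\ref{thm:spectral_bound} directly to $\hat{\mathbf{H}}$ to obtain $\lambda_{\max}(\hat{\mathbf{H}}) \le \max_{k}\bigl(|\hat{H}_{kk}| + \sum_{j\neq k}|\hat{H}_{kj}|\bigr)$, and then substitute the explicit entry formulas from the previous step. Combining with the similarity identity from the first step yields $\lambda_{\max}(\tilde{\mathbf{H}}) \le \mathcal{B}(\alpha)$ as claimed. The main ``obstacle'' is really just bookkeeping: verifying that the two uses of $|H_{kk}|^{\alpha/2}$ in the definition of $\mathbf{D}_{\alpha}^{-1}$ conspire on the diagonal to produce $|H_{kk}|^{1-\alpha}$ rather than some other exponent, and confirming that the Gershgorin-style bound applies since $\hat{\mathbf{H}}$ inherits symmetry (and a fortiori realness of its spectrum) from $\mathbf{H}$. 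Once these routine sanity checks are done, no additional analytical machinery is required.
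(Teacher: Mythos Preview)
Your proposal is correct and follows essentially the same argument as the paper: reduce $\lambda_{\max}(\tilde{\mathbf{H}})$ to $\lambda_{\max}(\hat{\mathbf{H}})$ via orthogonal similarity, compute the entries of $\hat{\mathbf{H}}$ explicitly, and substitute them into the Gershgorin-type row-sum bound of Theorem~\ref{thm:spectral_bound}. Your additional remark that $H_{kk}\ge 0$ by positive semi-definiteness is a useful sanity check but does not deviate from the paper's line of reasoning.
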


Based on \textbf{Definition A.1}, the spectral radius is determined by a small number of outlier indices $k \in \mathcal{S}_{out}$, whose core characteristic is that their diagonal magnitudes are significantly greater than unity ($|H_{kk}| \gg 1$). For the rows $k \in \mathcal{S}_{out}$ that dominate the spectral radius, we observe the variation of the two components as $\alpha$ increases:

Exponential Reduction of the Diagonal Term: Given the base $|H_{kk}| \gg 1$, the function $g(\alpha) = |H_{kk}|^{1-\alpha}$ is an exponentially decaying function with respect to the base $|H_{kk}|$. As $\alpha$ increases from $0$, this term decreases at an extremely rapid rate.

Smoothing Suppression of Off-diagonal Terms: For the off-diagonal sum $\sum_{j \neq k} \frac{|H_{kj}|}{(|H_{kk}||H_{jj}|)^{\alpha/2}}$, since the product $|H_{kk}||H_{jj}|$ is also significantly greater than $1$ (guaranteed by the long-tail assumption), the denominator grows exponentially as $\alpha$ increases. Consequently, the sum of the entire off-diagonal components strictly decreases.

Since every component of the surrogate function $\mathcal{B}(\alpha)$ behaves as a strictly decreasing function of $\alpha$ under the condition $|H_{kk}| \gg 1$, it follows that $\mathcal{B}(\alpha)$ must be monotonically decreasing within the initial interval $\alpha \in (0, \alpha^*]$. 

Through the analysis above, we have formally proven Theorem 3.2 (Spectral Compression Guarantee) in the main text: under the assumption that the LLM Hessian satisfies the diagonal long-tail distribution, there exists an interval where the loss upper bound after the HeRo-Q transformation is strictly lower than the untreated baseline. 
\begin{equation}
\exists \alpha^* > 0 \text{ s.t. } \mathcal{B}(\alpha^*) < \mathcal{B}(0).
\end{equation}

This theoretical result perfectly supports the first half of the U-shaped curve observed in experiments, demonstrating the decisive role of the smoothing mechanism in suppressing pathological curvature.

\subsection{Formal Analysis of Noise Factor: Outlier Amplification and Suppression}

While the smoothing parameter $\alpha$ successfully compresses the Geometric Factor $\mathcal{B}(\alpha)$ by reducing the Hessian's spectral radius, the quantization noise norm $\|\tilde{\boldsymbol{\delta}}\|_2^2$ in the transformed space is not invariant. We analyze this phenomenon through the lens of \textit{inference equivalence}.

To maintain the output of the linear layer $\mathbf{Y} = \mathbf{X}\mathbf{W}$ after introducing the transformation $\mathbf{T}$, the following identity must hold:
\begin{equation}
    \mathbf{Y} = \mathbf{X}\mathbf{I}\mathbf{W} = (\mathbf{X}\mathbf{T})(\mathbf{T}^{-1}\mathbf{W}) = \tilde{\mathbf{X}}\tilde{\mathbf{W}}
\end{equation}
Consequently, the transformed weights to be quantized are defined as $\tilde{\mathbf{W}} = \mathbf{T}^{-1}\mathbf{W}$. Substituting the definition $\mathbf{T} = \mathbf{D}_{\alpha}^{-1}\mathbf{R}$ and applying the matrix inversion rule $(\mathbf{AB})^{-1} = \mathbf{B}^{-1}\mathbf{A}^{-1}$ along with the orthogonality of the rotation matrix ($\mathbf{R}^{-1} = \mathbf{R}^{\top}$), we derive the transformed weights as:
\begin{equation}
    \tilde{\mathbf{W}} = (\mathbf{D}_{\alpha}^{-1}\mathbf{R})^{-1}\mathbf{W} = \mathbf{R}^{\top}\mathbf{D}_{\alpha}\mathbf{W}
\end{equation}

Given the diagonal smoothing matrix $\mathbf{D}_{\alpha} = \text{diag}(|H_{ii}|^{\alpha/2})$, increasing $\alpha$ effectively attenuates the Hessian eigenvalues in the curvature space. However, due to the inverse relationship, it acts as a direct scaling amplifier in the weight space. This non-uniform scaling leads to a significant expansion of the $L_{\infty}$ norm of the weight vector:
\begin{equation}
    \|\tilde{\mathbf{W}}\|_{\infty} \propto \max_{k} \left( |H_{kk}|^{\alpha/2} |W_{k}| \right)
\end{equation}

Since Large Language Models (LLMs) exhibit extreme outliers ($|H_{kk}| \gg 1$), even a marginal increase in $\alpha$ forces the quantizer to adopt a larger step size $\Delta$ to cover the drastically expanded dynamic range. This results in a quadratic growth of the expected quantization noise variance:
\begin{equation}
    \mathbb{E}[\|\tilde{\boldsymbol{\delta}}\|_2^2] \propto \Delta^2 \propto \|\tilde{\mathbf{W}}\|_{\infty}^2
\end{equation}

Thus, Theorem 3.2 formalizes a fundamental optimization problem: finding the optimal $\alpha^*$ to balance the gain from `spectral radius compression' against the cost of 'dynamic range expansion.' This explains the U-shaped performance curve observed in experiments:

Insufficient compression of the Hessian spectral radius $\lambda_{\max}(\tilde{\mathbf{H}})$ makes the model highly sensitive to even minor quantization noise.
Excessive expansion of the weight dynamic range causes the \textit{Noise Factor} to dominate, degrading performance through increased clipping and rounding errors.

As the Geometric Factor $\mathcal{B}(\alpha)$ is invariant to the rotation $\mathbf{R}$, the role of $\mathbf{R}$ is strictly decoupled to minimize the Noise Factor by redistributing weight energy more uniformly across channels:
\begin{equation}
    \mathbf{R}^* = \arg\min_{\mathbf{R} \in SO(d)} \|\text{Quantize}(\mathbf{R}^{\top}\mathbf{D}_{\alpha}\mathbf{W}) - \mathbf{R}^{\top}\mathbf{D}_{\alpha}\mathbf{W}\|_2^2
\end{equation}

\subsection{Inference overhead}
\label{app:efficiency_analysispp}

As illustrated in Figure \ref{fig:inference}, HeRo-Q adopts a hybrid inference strategy that strictly distinguishes between mergeable transformations (offline fusion) and online transformations to minimize runtime latency.

For the vast majority of linear layers (e.g., the Query/Key/Value projections in Attention blocks and the Up/Gate/Down projections in MLP blocks), the transformation matrix $\mathbf{T} \triangleq \mathbf{D}_{\alpha}^{-1}\mathbf{R}$ can be mathematically fused into the adjacent weight matrices before inference. 
As shown in the purple blocks in Figure \ref{fig:inference}, we apply the following equivalent structural re-parameterization:

The inverse smoothing diagonal $\mathbf{D}_{\alpha}^{-1}$ is absorbed into the preceding normalization layers (LayerNorm). This is a standard element-wise rescaling that incurs no additional operators.

The dense rotation matrix $\mathbf{R}$ is fused into the weight matrices. For a linear layer $\mathbf{Y} = \mathbf{X}\mathbf{W}$, the operation becomes $\mathbf{Y} = (\mathbf{X}\mathbf{R})(\mathbf{R}^{\top}\mathbf{W})$. We pre-compute $\mathbf{W}' = \mathbf{R}^{\top}\mathbf{W}$ offline. Similarly, the preceding layer's output weights are updated to $\mathbf{W}_{prev}' = \mathbf{W}_{prev}\mathbf{R}$ to produce the rotated activation $\mathbf{X}\mathbf{R}$.

Since these operations are completed during the model saving phase, the inference-time computational complexity for these layers remains strictly unchanged at $\mathcal{O}(0)$ additional cost.

In specific scenarios where weight fusion is structurally blocked—such as the Key-Value (KV) Cache quantization or activations preceding Rotary Positional Embeddings (RoPE)—we employ online transformations (depicted as grey blocks in Figure \ref{fig:inference}).
To prevent the $\mathcal{O}(d^2)$ complexity of standard matrix multiplication from becoming a bottleneck, we follow the approach of SpinQuant by adopting \textit{Hadamard Rotation}. 
By utilizing the Fast Walsh-Hadamard Transform (FWHT) kernel, the time complexity of the online rotation is reduced from quadratic to log-linear:
\begin{equation}
    \text{Complexity: } \mathcal{O}(d^2) \rightarrow \mathcal{O}(d \log d)
\end{equation}
where $d$ is the hidden dimension. Given that $d \log d \ll d^2$ for large language models, this overhead is negligible compared to the heavy GEMM (General Matrix Multiply) operations in linear layers. Consequently, HeRo-Q achieves inference throughput comparable to standard FP16 and GPTQ baselines, as empirically verified in Table \ref{tab:efficiency_alpha}(a).
\begin{figure}[t]
    \centering
    \includegraphics[width=0.75\linewidth]{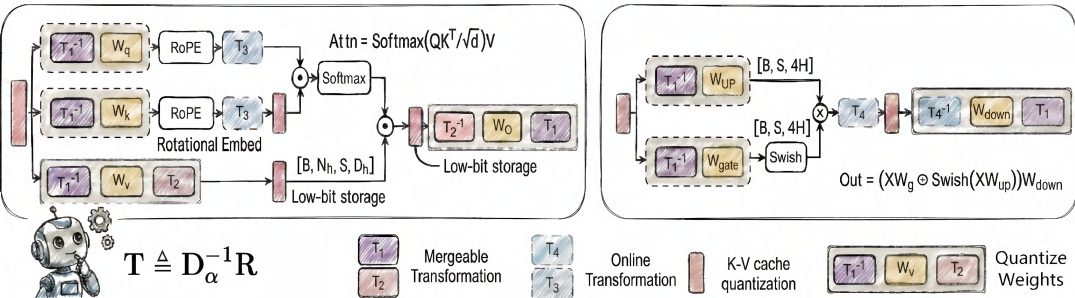}
    \caption{Overview of the HeRo-Q Inference Pipeline.We adopt a hybrid deployment strategy for the transformation $\mathbf{T} \triangleq \mathbf{D}_{\alpha}^{-1}\mathbf{R}$. Purple blocks denote mergeable transformations, where $\mathbf{T}$ or $\mathbf{T}^{-1}$ is fused offline into the adjacent weights ($\mathbf{W}_{q,k,v,o}$ and MLP weights), incurring zero runtime overhead. Grey blocks indicate \textit{online transformations} required where structural fusion is blocked (before RoPE or for KV-cache quantization), which are executed efficiently using optimized kernels.}
    \label{fig:inference}
\end{figure}

\subsection{Calibration Cost Analysis}

While HeRo-Q introduces an iterative optimization phase to learn the rotation matrix $R$, we demonstrate that its calibration overhead remains within a highly practical range for industrial deployment. We evaluated the wall-clock time and peak GPU memory usage during the quantization calibration phase on a single NVIDIA A800 (80GB) GPU. The results, compared with standard baselines, are summarized in Table~\ref{tab:efficiency_comparison}.

\begin{table}[h]
    \centering
    \begin{tabular}{l|l|c|c|l}
    \toprule
    \textbf{Category} & \textbf{Method} & \textbf{Time} & \textbf{Peak VRAM}  \\
    \midrule
    \multirow{2}{*}{Baselines} & GPTQ & $\sim$25 mins & $\sim$16 GB  \\
     & OmniQuant & $\sim$74 mins & $\sim$30 GB  \\
    \midrule
    Ours & HeRo-Q & $\sim$38 mins & $\sim$24 GB  \\
    \bottomrule
    \end{tabular}
    \caption{Comparison of quantization time and memory cost on Llama-3.1-8B (NVIDIA A800).}
    \label{tab:efficiency_comparison}
\end{table}

In terms of computational efficiency, the total time cost to calibrate the Llama-3.1-8B model using HeRo-Q is approximately 38 minutes. Although compared to the gradient-free baseline method GPTQ (~25 minutes), which relies solely on closed-form Hessian inversion, HeRo-Q indeed brings a moderate increase in latency, this additional cost is fully justified. This is because we obtained critical performance gains in the ultra-low-bit regime, such as effectively preventing model collapse under W3A16 settings. Furthermore, when compared with other optimization-based quantization frameworks like OmniQuant—the latter requiring extensive gradient updates for learnable clipping and weights (~74 minutes)—HeRo-Q achieved a significant 1.9x speedup. This high efficiency stems from our streamlined optimization objective: we exclusively optimize the orthogonal rotation matrix $R$ for a limited budget of 200 steps per layer, thereby avoiding the heavy computational burden brought by full-parameter fine-tuning while ensuring fast convergence. 

In terms of memory footprint, HeRo-Q exhibits remarkable resource efficiency, requiring only approximately 24 GB of peak VRAM during the calibration phase. This figure is significantly lower than the memory demand of OmniQuant (~30 GB), and crucially, it falls completely within the memory capacity range of high-end consumer-grade GPUs such as the NVIDIA RTX 3090 or 4090. This characteristic significantly democratizes access to high-performance quantization, enabling the algorithm to be deployed on accessible hardware without strict reliance on data center-grade infrastructure.

\subsection{Limitations}

Although HeRo-Q provides a general Hessian-conditioned framework for improving low-bit Post-training quantization, it has several limitations.

First, HeRo-Q relies on an activation-induced Hessian proxy, i.e., $H_{\mathrm{proxy}} = X^\top X$, estimated from a calibration set. This proxy is efficient and widely used in layer-wise PTQ reconstruction, but it may not perfectly match the true Hessian of the end-task loss. Therefore, the effectiveness of the proposed transformation depends on the representativeness of the calibration data. If the calibration distribution differs substantially from the deployment or evaluation distribution, the estimated curvature-sensitive directions may be biased, and the benefit of Hessian-conditioned smoothing and rotation may be reduced.

Second, our theoretical analysis is based on a local second-order approximation of the loss landscape. This approximation is most reliable around a well-converged model and for sufficiently small perturbations. Under extremely aggressive quantization, such as very low activation precision or extremely small weight bit-widths, clipping error, rounding saturation, and nonlinear effects may become dominant. In such cases, the quadratic surrogate may no longer fully characterize the actual task-level degradation.

Third, HeRo-Q introduces additional offline calibration cost compared with simpler PTQ methods. The method searches the smoothing coefficient $\alpha$ and optimizes or selects an orthogonal rotation for each layer. Although these operations are performed only once before deployment and do not require end-to-end model fine-tuning, they increase the calibration time and memory usage relative to purely reconstruction-based or range-based quantization baselines. This overhead should be considered when quantizing very large models or when frequent re-quantization is required.

Finally, while our experiments cover dense LLMs, vision models, multimodal models, and MoE models under several low-bit settings, they do not exhaust all possible architectures, calibration domains, decoding configurations, or hardware platforms. The reported results are mainly based on standard public benchmarks and a fixed set of calibration protocols. Future work should further study HeRo-Q under broader domain shifts, longer-context inference, different calibration budgets, more hardware backends, and repeated runs with different random seeds or calibration subsets.

%%%%%%%%%%%%%%%%%%%%%%%%%%%%%%%%%%%%%%%%%%%%%%%%%%%%%%%%%%%%

\newpage
\section*{NeurIPS Paper Checklist}

\begin{enumerate}

\item {\bf Claims}
    \item[] Question: Do the main claims made in the abstract and introduction accurately reflect the paper's contributions and scope?
    \item[] Answer: \answerYes{}
    \item[] Justification: The abstract and introduction accurately summarize the main contribution of HeRo-Q: a Hessian-conditioned post-training quantization framework that combines diagonal smoothing and orthogonal rotation to improve low-bit quantization robustness. The stated claims are supported by the proposed algorithm, theoretical motivation, downstream evaluations, spectral analysis, efficiency measurements, and ablation studies.

\item {\bf Limitations}
    \item[] Question: Does the paper discuss the limitations of the work performed by the authors?
    \item[] Answer: \answerYes{}
    \item[] Justification: The paper includes a dedicated limitations discussion. It discusses the dependence on the activation-induced Hessian proxy, calibration data representativeness, the local nature of the second-order approximation, the offline calibration and search cost, model-structure-dependent fusion, possible online transformation overhead in certain components, and the scope of the evaluated architectures, calibration domains, and hardware settings.

\item {\bf Theory Assumptions and Proofs}
    \item[] Question: For each theoretical result, does the paper provide the full set of assumptions and a complete (and correct) proof?
    \item[] Answer: \answerYes{}
    \item[] Justification: The paper states the main theoretical results in the methodology section and provides the corresponding derivations and proofs in the appendix. The analysis specifies the local quadratic approximation, the effective Hessian in the quantization coordinate system, the spectral surrogate bound, the Gershgorin-type upper bound, and the assumptions under which Hessian-guided smoothing can reduce the curvature-amplification bound.

\item {\bf Experimental Result Reproducibility}
    \item[] Question: Does the paper fully disclose all the information needed to reproduce the main experimental results of the paper to the extent that it affects the main claims and/or conclusions of the paper, regardless of whether the code and data are provided or not?
    \item[] Answer: \answerYes{}
    \item[] Justification: The paper specifies the evaluated model families, datasets, quantization regimes, baselines, calibration setup, evaluation protocols, and main hyperparameters. The appendix further provides the anonymized code repository and clarifies the layer-wise smoothing search grid, rotation optimization procedure, calibration dataset, number of calibration samples, and sequence length.

\item {\bf Open access to data and code}
    \item[] Question: Does the paper provide open access to the data and code, with sufficient instructions to faithfully reproduce the main experimental results, as described in supplemental material?
    \item[] Answer: \answerYes{}
    \item[] Justification: The appendix provides an anonymized code repository link. The paper uses publicly available models and datasets, and the appendix describes the main workflow and hyperparameters required to reproduce the reported results. The repository contains code and instructions for calibration, Hessian-conditioned transformation learning, quantization, and evaluation.

\item {\bf Experimental Setting/Details}
    \item[] Question: Does the paper specify all the training and test details, including data splits, hyperparameters, how they were chosen, optimizer choices, and other details necessary to understand the results?
    \item[] Answer: \answerYes{}
    \item[] Justification: The experimental section describes the evaluated models, datasets, downstream tasks, quantization settings, baseline methods, and evaluation protocols. The appendix provides the HeRo-Q hyperparameters, including the smoothing search grid, Cayley SGD optimizer, learning rate, optimization steps, momentum, batch size, calibration dataset, number of calibration samples, and sequence length.

\item {\bf Experiment Statistical Significance}
    \item[] Question: Does the paper report error bars suitably and correctly defined or other appropriate information about the statistical significance of the experiments?
    \item[] Answer: \answerNo{}
    \item[] Justification: The paper reports single-run evaluation results and does not provide error bars, confidence intervals, or formal statistical significance tests. This is a limitation of the current empirical evaluation. Future work should report variation across random seeds or different calibration subsets, especially for the main low-bit quantization results.

\item {\bf Experiments Compute Resources}
    \item[] Question: For each experiment, does the paper provide sufficient information on the computer resources, including type of compute workers, memory, and execution time, needed to reproduce the experiments?
    \item[] Answer: \answerYes{}
    \item[] Justification: The paper reports the main compute environment and resource usage, including the NVIDIA A800 GPU used for throughput measurements, calibration time, and peak VRAM usage. These details provide a reasonable estimate of the computational resources required to reproduce the main experiments.

\item {\bf Code Of Ethics}
    \item[] Question: Does the research conducted in the paper conform, in every respect, with the NeurIPS Code of Ethics?
    \item[] Answer: \answerYes{}
    \item[] Justification: This is a model quantization method paper. It does not involve human subjects, private data, restricted datasets, or the release of a new high-risk generative model. The submission preserves anonymity and discusses responsible-use implications in the broader impact statement.

\item {\bf Broader Impacts}
    \item[] Question: Does the paper discuss both potential positive societal impacts and negative societal impacts of the work performed?
    \item[] Answer: \answerYes{}
    \item[] Justification: The paper discusses positive impacts such as reducing the hardware, memory, and energy costs of deploying large models. It also discusses potential negative impacts, including the possibility that more efficient compressed models may lower the barrier for misuse, unsafe deployment, or broader dissemination of harmful model capabilities.

\item {\bf Safeguards}
    \item[] Question: Does the paper describe safeguards that have been put in place for responsible release of data or models that have a high risk for misuse, such as pretrained language models, image generators, or scraped datasets?
    \item[] Answer: \answerNA{}
    \item[] Justification: The paper proposes a quantization method and does not release a new pretrained language model, image generator, scraped dataset, or other high-risk artifact. The released artifact is an anonymized implementation of the proposed quantization framework, so additional model-release safeguards are not applicable.

\item {\bf Licenses for existing assets}
    \item[] Question: Are the creators or original owners of assets, such as code, data, and models, used in the paper properly credited, and are the license and terms of use explicitly mentioned and properly respected?
    \item[] Answer: \answerYes{}
    \item[] Justification: The paper cites the external models, datasets, benchmarks, and baseline methods used in the experiments. The asset and license statement summarizes the public assets used in this work and indicates that the corresponding sources, versions, licenses, and terms of use are documented in the appendix or released repository.

\item {\bf New Assets}
    \item[] Question: Are new assets introduced in the paper well documented, and is the documentation provided alongside the assets?
    \item[] Answer: \answerYes{}
    \item[] Justification: The paper releases an anonymized code artifact for HeRo-Q. The repository includes documentation, environment requirements, example commands, scripts for reproducing the main experiments, and a license file. The paper does not introduce a new dataset or a newly pretrained model.

\item {\bf Crowdsourcing and Research with Human Subjects}
    \item[] Question: For crowdsourcing experiments and research with human subjects, does the paper include the full text of instructions given to participants and screenshots, if applicable, as well as details about compensation?
    \item[] Answer: \answerNA{}
    \item[] Justification: The work does not involve crowdsourcing, user studies, annotation by human participants, or research with human subjects.

\item {\bf Institutional Review Board (IRB) Approvals or Equivalent for Research with Human Subjects}
    \item[] Question: Does the paper describe potential risks incurred by study participants, whether such risks were disclosed to the subjects, and whether Institutional Review Board approvals or an equivalent approval/review were obtained?
    \item[] Answer: \answerNA{}
    \item[] Justification: The work does not involve crowdsourcing or research with human subjects, so IRB approval or equivalent review is not applicable.

\end{enumerate}

\end{document}